\newcommand{\C}{\mathbb{C}}
\newcommand{\D}{\mathbb{D}}
\newcommand{\R}{\mathbb{R}}
\newcommand{\T}{\mathbb{T}}
\renewcommand{\Re}{\operatorname{Re}} 
\newcommand{\Hcal}{\mathcal{H}} 
\newcommand{\inner}[1]{\langle #1\rangle}
\newcommand{\norm}[1]{\lVert#1\rVert}
\newtheorem{theorem}{Theorem}[section]
\newtheorem{lemma}[theorem]{Lemma}
\newtheorem{proposition}[theorem]{Proposition}
\newtheorem{corollary}[theorem]{Corollary}
\theoremstyle{definition}
\newtheorem{definition}[theorem]{Definition}
\newtheorem{remark}[theorem]{Remark}
\newtheorem{assumption}[theorem]{Assumption}
\setlist{itemsep=4pt,parsep=4pt}
\begin{document}

\title{Infinite-Dimensional Operator/Block Kaczmarz Algorithms :\\ Regret Bounds and \(\lambda\)-Effectiveness}
\author{Halyun Jeong\thanks{Department of Mathematics \& Statistics, University at Albany, State University of New York, Albany, NY, USA.}
\and Palle E.\,T.\ Jorgensen\thanks{Department of Mathematics, The University of Iowa, Iowa City, IA, USA.}
\and Hyun\,-Kyoung Kwon\footnotemark[1]
\and Myung\,-Sin Song\thanks{Department of Mathematics \& Statistics, Southern Illinois University Edwardsville, Edwardsville, IL, USA.}}

\maketitle

\begin{abstract}
We present a variety of projection-based linear regression algorithms with a focus on modern machine-learning models and their algorithmic performance. We study the role of the relaxation parameter in generalized Kaczmarz algorithms and establish a priori regret bounds with explicit $\lambda$-dependence to quantify how much an algorithm's performance deviates from its optimal performance. A detailed analysis of relaxation parameter is also provided. Applications include: explicit regret bounds for the framework of Kaczmarz algorithm models, non-orthogonal Fourier expansions, and the use of regret estimates in modern machine learning models, including for noisy data, i.e., regret bounds for the noisy Kaczmarz algorithms. Motivated by machine-learning practice, our wider framework treats bounded operators (on infinite-dimensional Hilbert spaces), with updates realized as (block) Kaczmarz algorithms, leading to new and versatile results.
\end{abstract}

\noindent\textbf{Keywords:} regret bounds; Kaczmarz algorithms; projections; bounded linear operators; Hardy space; inner functions; machine learning; signal processing.

\section{Introduction}

A significant challenge in models for Regret Bounds and Reinforcement Learning (RL) is closely tied with the general issue of trade-offs between the two parts, exploration and exploitation \cite{russo2018tutorial, shalev2012online}. In rough outline, exploration encourages the player to try new actions, helping future planning, perhaps thereby sacrificing immediate rewards. By contrast, exploitation focuses on maximizing current rewards by utilizing information of known states and actions, but which may prevent the player from learning more information about the game which in turn could help to increase future rewards. To optimize cumulative rewards, the player must balance the two.

This general issue is the cornerstone of broader field of online learning, where algorithms must make decisions sequentially without full knowledge of future events. In this paradigm, performance is often measured by regret, which serves to quantify the difference between an algorithm's cumulative loss and that of an optimal strategy with access to all information in hindsight. A low-regret algorithm, therefore, is one that learns efficiently from data as it arrives, quickly adapting to minimize its cumulative error. The Kaczmarz algorithm, by processing one data point (or a small batch) at each step, operates as a type of online methods; it can be viewed as a form of stochastic gradient descent (SGD) with a batch size of one \cite{needell2014stochastic}, a fundamental tool in modern machine learning.

While the Kaczmarz algorithm does not actively ``explore'' in the RL sense, its performance, when viewed through the lens of regret, shares this spirit of sequential decision-making. The algorithm's performance at each step is based only on past information, and the cumulative error, or regret, measures the cost of this online setting. The purpose of our present paper is to offer new a priori estimates for these regret bounds for the infinite-dimensional Kaczmarz algorithm and its generalizations. This is in a context of new algorithms making use of such Hilbert space tools as non-commutative operator theory, frames, and generalized harmonic analysis.

\smallskip
\noindent
In the present paper, we derive \emph{infinite‑dimensional} regret estimates for a broad class of Kaczmarz‑type algorithms, including block versions and their relaxed counterparts with step size \(\lambda\in(0,2)\).  
Our point of departure is the classical Kaczmarz method, which iteratively projects onto hyperplanes defined by successive data points and has long been used in tomographic reconstruction and stochastic approximation. 
Recent work has rejuvenated the Kaczmarz perspective in machine learning contexts--both for deterministic updates \cite{herr2019positive,jorgensen2020kaczmarz} and for stochastic or noisy variants \cite{strohmer2009randomized,needell2014stochastic, MR3049487,MR4835057}. While these studies focus mainly on finite-dimensional settings, modern ML models (e.g.\ kernel expansions, Fourier frames, non‑orthogonal dictionaries) are most naturally formulated on infinite‑dimensional Hilbert spaces, where the subtleties of operator theory, non‑commutative analysis, and frame theory come into play.

\smallskip
\noindent

More concretely, we address the following questions:

\begin{itemize}
\item \textbf{Regret bounds and $\lambda$-effectiveness.}  
      We prove an \(O(1/k)\) average-regret bound that is dimension free and holds for the generalized Kaczmarz algorithm with any relaxation parameter \(\lambda \in (0,2)\). We also show that when the algorithm achieves meaningful recovery (equivalently, when the system is \(\lambda\)-effective), the bound is sharp when the task/measurement operators are partial isometries.

\item \textbf{Robustness to noise.}  
      For i.i.d.\ additive noise we obtain a mixed regret bound that separates the estimation term from the noise that is proportional to the strength of the noise. This in turn allows us to provide a recommended relaxation parameter choice in terms of the noise level and the number of iterations.

\item \textbf{Effectiveness of Fourier exponential functions.}  
    In many machine learning and signal‑processing applications, redundancy is critical for making predictions or conclusions robust to noise. One canonical example of a redundant system is a frame, whose elements need not be orthogonal. However, other constructions arise by applying the classical Kaczmarz method to systems such as the Fourier exponential functions \(\{e^{2\pi i n x}\}_{n\ge0}\). We extend these results to the generalized Kaczmarz method with relaxation parameter \(\lambda\).

    More precisely, for the generalized Kaczmarz method with relaxation parameter \(\lambda\), we show that the exponential system \(\{e^{2\pi i n x}\}_{n\ge0}\) is \(\lambda\)-effective if and only if the corresponding Borel probability measure is singular unless $\lambda = 1$ (the regular Kaczmarz case). The resulting Fourier‑type expansions, discussed in Sections 8 and 9, provide constructive representations even when orthogonality fails, shedding new light on non‑orthogonal harmonic analysis. 

\end{itemize}

These results supply a potentially unified operator‑level framework for understanding projection‑based learning algorithms, complementing and extending the finite‑dimensional theory while opening the door to applications in signal processing, kernel methods, and infinite‑dimensional RL.

In our analysis, we are inspired in part by recent papers by E. Weber et al. \cite{hegde2021kaczmarz,herr2019positive,herr2020harmonic,herr2017fourier}. In these papers, the authors have initiated the use of the extended Kaczmarz algorithms for dealing with problems in optimization, and in building generalized Fourier frame expansions. We also mention related work \cite{jorgensen2020kaczmarz,jorgensen2023mathematics} on multilevel systems—data, and scaling, by two of the present authors.
We have tried to keep the presentation as elementary and self-contained as possible, since our paper studies regret bounds for iterative methods alongside Hardy-space techniques.

\subsection{Organization}
Our paper is organized as follows: In Section~2, we present our general a priori average regret bound (Theorem~1). Here, for the noiseless case, we give our $O(1/k)$-a priori bound for the norm-squared terms in the average regret, i.e., as computed from the recursive Kaczmarz based algorithm. 

Section 3 begins with the setup for our main iteration steps. In our individual results, we present corresponding a priori bounds for each of the following several steps: (1) data with i.i.d.\ additive noise (2) construction of the noisy Kaczmarz updates (3) error decomposition into estimation and noise parts (4) identification of noiseless vs.\ noisy terms (5) the role played by this splitting in the bound (6) use of the independence of noise to eliminate cross–terms and (7) the resulting noise floor under a uniform pseudoinverse assumption.

In Section~4 we study the generalized (relaxed) Kaczmarz with step size $\lambda\in(0,2)$. We derive a key operator identity leading to a scaled Parseval relation and a sharp cumulative–regret identity with constant $1/[\lambda(2-\lambda)]$, and obtain a noisy average–regret bound with explicit $\lambda$–dependence (including a noise–aware discussion of the choice of~$\lambda$).

In the remaining sections we develop the inner–function framework for $\lambda$–effectiveness and apply it to exponential systems. Section~5 introduces the auxiliary sequence $h_n^{(\lambda)}$ for rank–one projections and its recursion; Section~6 identifies and equates the recursive and combinatorial coefficients $\alpha_n^{(\lambda)}$ governing $h_n^{(\lambda)}$; Section~7 proves that $\{e^{2\pi i n x}\}_{n\ge0}$ is $\lambda$–effective for every $\lambda\in(0,2)$ whenever the underlying measure is singular, with a separate discussion of the special case $\lambda=1$; and Section~8 gives generalized Kaczmarz–Fourier expansions and illustrates some of their consequences. Some technical but elementary auxiliary proofs appear in the Appendix.

\section{The Average Regret Bound of the Kaczmarz Algorithm}

In earlier work (see for e.g., \cite{berner2024operator,jorgensen2023mathematics}) a dynamic approach to a generalized family of Kaczmarz algorithms was introduced via sequences of specified projections, creating the successive updates. Instead, our present recursive models here provide a general framework that takes bounded-operator systems as the starting point (in infinite-dimensional Hilbert space), leading to new and more versatile operator (block) Kaczmarz algorithms.

Let $\mathcal{H}$ and $\mathcal{K}$ be Hilbert spaces. In our framework, we consider a parameter vector \(w^*\) in $\mathcal{H}$ and a \emph{measurement operator or task} \(X_t: \mathcal{H} \rightarrow \mathcal{K} \), which maps \(w^*\) to the observation \(y_t \in \mathcal{K} \) via
\[
y_t = X_t\,w^*,
\] where \(t=1,2,\dots,k\).

Here, $\tau$ is
a \emph{task–selection rule}, i.e., any map $\tau:\mathbb{N}\to I$ for some index set $I$; it specifies which task is used at time $t$. Examples include the cyclic one $\tau_{\mathrm{cyc}}^{(m)}(t)=1+((t-1)\bmod m)$, when $I=\{1,\dots,m\}$, or simply a naturally ordered sequence when set $\tau(t)=t$ and $I = \mathbb{N}$. 
For the remainder of our analysis, once the task sequence is fixed, we simplify notation by letting the time index $t$ also index the operators: define $X_t:=X_{\tau(t)}$, $y_t:=y_{\tau(t)}$, and $P_t:=P_{\tau(t)}$. All bounds in Sections 2–5 hold for any fixed, arbitrary sequence $(P_t)_{t\ge 1}$; the constants in Theorems are independent of $t$. 

\begin{assumption}
\label{asmp:task_bound}
For each $t$, the measurement operator $X_t$ satisfies $\|X_t\| \le 1$. \footnote{This assumption is a standard assumption typically used in theoretical analysis of machine learning algorithms such as \cite{evron2022catastrophic} to make the presentation simple. It is also without loss of generality: one may rescale all operators and observations by a common factor if necessary so that the norms are at most one.
}
\end{assumption}

\begin{assumption}
\label{asmp:pinv}
For each $t$, the Moore--Penrose pseudoinverse $X_t^\dagger$ \footnote{ The Moore-Penrose pseudoinverse $X^\dagger$ is the unique operator satisfying the Moore-Penrose conditions: \[
XX^\dagger X = X,\quad X^\dagger X X^\dagger = X^\dagger,\quad
(XX^\dagger)^* = XX^\dagger,\quad (X^\dagger X)^* = X^\dagger X.
\]} \footnote{See \cite{MR4506052, MR3921904, MR1892841} for more recent application/analysis of the Moore-Penrose pseudoinverse.} exists as a bounded operator and
there exists a constant \(C<\infty\) such that \(\|X_t^\dagger\|^2 \le C\) for all \(t\).
Consequently,
\[
P_t:=X_t^\dagger X_t\qquad
\text{is the orthogonal projector onto the row space of }X_t. \footnote{Equivalently,
\(X^\dagger X = P_{(\ker X)^\perp} = P_{\overline{\operatorname{ran}(X^*)}}\)}.
\]
\end{assumption}

This assumption holds, in particular, if each $X_t$ is of finite rank with a uniform lower bound
on the smallest nonzero singular value, i.e.\ $\inf_t \sigma_{\min}(X_t)>0$, in which case
$\|X_t^\dagger\|=1/\sigma_{\min}(X_t)$ and one may take
$C=(\inf_t \sigma_{\min}(X_t))^{-2}$.

The Kaczmarz algorithm (in its block form) \footnote{See e.g., \cite{needell2014paved} for the finite dimensional version.} updates the model via the recursion
\[
w_t \;=\; w_{t-1} \;+\; X_t^\dagger \Bigl(y_t - X_t\,w_{t-1}\Bigr).
\]
When \(X_t\) is a single vector \(x_t\), this iteration reduces to the standard (row-by-row) Kaczmarz update:
\[
w_t \;=\; w_{t-1} \;+\; \frac{x_t}{\|x_t\|_2^2}\Bigl(y_t - \langle x_t,\,w_{t-1}\rangle\Bigr).
\]

At each iteration $t$, the algorithm, having produced the parameter vector $w_{t-1}$ from the first $t-1$ tasks, incurs a loss on the current task. The per-step regret at iteration $t$ is the squared prediction error, which is the standard formulation of regret at time $t$ for linear regression \cite{evron2022catastrophic, shalev2012online}:
\[
\text{Regret}_t := \norm{X_t w_{t-1} - y_t }^2.
\]
Crucially, the prediction for task $X_t$ is made using $w_{t-1}$, the model parameter \emph{before} it has been updated with the information from $(X_t, y_t)$.

The \textbf{average regret} after $k$ iterations is the average of these per-step costs is given by
\[
\text{AverageRegret}_k := \frac{1}{k} \sum_{t=1}^k \norm{X_t w_{t-1} - y_t}^2.
\]
Note that this definition of regret, which measures the cost of sequential prediction, is distinct from the notion of ``catastrophic forgetting,'' which typically measures performance degradation on past tasks using the final iterate $w_k$ \cite{evron2022catastrophic}, where the authors also focus only on the finite dimensional setting.

The following generalizes the definition of effectiveness in \cite{jorgensen2020kaczmarz}:
\begin{definition}[Effectiveness and $\lambda$--effectiveness]
\label{def:lambda effectiveness}
Let $\mathcal{H}$ be a Hilbert space and let $(P_n)_{n\ge1}$ be orthogonal projections on $\mathcal{H}$.
Define
\[
T_n:=(I-P_n)\cdots(I-P_1),\qquad
\widetilde T_n:=(I-\lambda P_n)\cdots(I-\lambda P_1),\ \ \lambda\in(0,2).
\]
We say the system $(P_n)_{n\ge1}$ or $(X_n)_{n\ge1}$ if $P_n = X_n^\dagger X_n$, is \emph{effective} if the corresponding $T_n$ converges to $0$ as $n \rightarrow \infty$ in the strong operator topology (denoted by $T_n \xrightarrow{\text{SOT}} 0$), and \emph{$\lambda$--effective} if $\widetilde T_n \xrightarrow{\text{SOT}} 0$.
\end{definition}

\begin{theorem}
\label{thm:kaczmarz}
Let $\{w_t\}_{t=0}^k$ be the sequence of updates by the (block) Kaczmarz algorithm with the initial condition $w_0 = 0$. Suppose there is a parameter vector $w^*$ such that $y_t = X_tw^*$ for all $t = 1, 2, \dots, k$. Suppose that Assumption~\ref{asmp:task_bound} holds with $\|X_t\| \le 1$ and Assumption~\ref{asmp:pinv} holds with $\sup_{t} \|X_t^\dagger\|^2 = C < \infty$.

Then the average regret of the Kaczmarz algorithm satisfies
\[
\frac{1}{k} \sum_{t=1}^k \bigl\|X_t w_{t-1} - y_t\bigr\|^2
\;\le\; \frac{1}{k} \|w^*\|^2,
\]
which shows it is bounded by $O(1/k)$.

If we assume further that the system $\{ P_t \}_{t \ge 0}$ is effective and the associated measurement operator $X_t$ is a partial isometry (i.e, $X_t^{*}X_t=P_t$), then the above bound is sharp in the following sense:

\[
\sum_{t=1}^\infty \bigl\|X_t w_{t-1} - y_t\bigr\|^2
\;=\; \|w^*\|^2.
\]
\end{theorem}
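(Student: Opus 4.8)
The plan is to track the error vector $e_t := w_t - w^*$ and reduce everything to a telescoping sum. First I would substitute $y_t = X_t w^*$ into the update rule; since $y_t - X_t w_{t-1} = -X_t e_{t-1}$ and $X_t^\dagger X_t = P_t$, the recursion collapses to the clean form $e_t = (I-P_t)\,e_{t-1}$, so each step is simply the orthogonal projection of the error onto $(\operatorname{ran} P_t)^\perp$. Iterating and using $w_0 = 0$ (hence $e_0 = -w^*$) gives $e_k = T_k e_0 = -T_k w^*$, which connects the iterates directly to the operator $T_k$ from Definition~\ref{def:lambda effectiveness}.

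Next I would exploit that $I - P_t$ is an orthogonal projection to obtain the Pythagorean identity $\|e_{t-1}\|^2 = \|P_t e_{t-1}\|^2 + \|e_t\|^2$, i.e. the one-step energy drop is exactly $\|e_{t-1}\|^2 - \|e_t\|^2 = \|P_t e_{t-1}\|^2$. The remaining task is to connect this drop to the per-step regret. Here the key observation is that the regret equals $\|X_t w_{t-1} - y_t\|^2 = \|X_t e_{t-1}\|^2$, and since the Moore--Penrose identity gives $X_t = X_t X_t^\dagger X_t = X_t P_t$, we may write $\|X_t e_{t-1}\|^2 = \|X_t P_t e_{t-1}\|^2 \le \|X_t\|^2 \,\|P_t e_{t-1}\|^2 \le \|P_t e_{t-1}\|^2$ using Assumption~\ref{asmp:task_bound}. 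Summing and telescoping then yields $\sum_{t=1}^k \|X_t e_{t-1}\|^2 \le \|e_0\|^2 - \|e_k\|^2 \le \|w^*\|^2$, which is the claimed $O(1/k)$ average-regret bound after dividing by $k$.

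For the sharp statement I would revisit the single inequality used above. Under the partial-isometry hypothesis $X_t^* X_t = P_t$, that bound becomes an exact identity: $\|X_t e_{t-1}\|^2 = \langle e_{t-1}, X_t^* X_t e_{t-1}\rangle = \langle e_{t-1}, P_t e_{t-1}\rangle = \|P_t e_{t-1}\|^2$, using $P_t^2 = P_t = P_t^*$. Hence each regret term equals the energy drop exactly, and the telescoping becomes $\sum_{t=1}^k \|X_t e_{t-1}\|^2 = \|w^*\|^2 - \|e_k\|^2$. Finally, effectiveness means $T_k \xrightarrow{\text{SOT}} 0$, so $\|e_k\| = \|T_k w^*\| \to 0$; letting $k \to \infty$ gives $\sum_{t=1}^\infty \|X_t w_{t-1} - y_t\|^2 = \|w^*\|^2$.

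I expect the only genuinely delicate point to be the bridge between the regret $\|X_t e_{t-1}\|^2$ and the telescoping quantity $\|P_t e_{t-1}\|^2$: both the direction of the inequality and the question of its sharpness are governed precisely by how far $X_t$ is from acting isometrically on its row space. The partial-isometry hypothesis, together with $\|X_t\| \le 1$, forces $X_t$ to be an isometry on $\operatorname{ran} P_t$, which is exactly what converts the bound into an equality; everything else follows routinely from the projection recursion and SOT convergence.
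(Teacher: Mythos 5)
Your proof is correct, and it reaches the same two conclusions through a slightly more self-contained route than the paper. The skeleton is shared: the error recursion $\epsilon_t=(I-P_t)\epsilon_{t-1}$, the bridge $\|X_t\epsilon_{t-1}\|^2\le\|P_t\epsilon_{t-1}\|^2$ via the Moore--Penrose identity $X_t=X_tX_t^\dagger X_t$ and $\|X_t\|\le 1$ (this is exactly the paper's \eqref{eq:Data_Projection_Bound}), and the observation that $X_t^*X_t=P_t$ upgrades that inequality to an equality. Where you diverge is in how the summed bound is produced: the paper introduces $Q_t=P_t(I-P_{t-1})\cdots(I-P_1)$ and invokes the operator identity $I-T_k^*T_k=\sum_{t=1}^k Q_t^*Q_t$ from Theorem~3 of \cite{jorgensen2020kaczmarz} (re-proved in relaxed form in Lemma~\ref{lem:Generalized_Kaczmarz_Scaled_Parseval}), and for sharpness it additionally cites Corollary~3.1 there, namely $\sum_{t\ge 1}\|Q_tw^*\|^2=\|w^*\|^2$ under effectiveness. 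You instead apply the one-step Pythagorean identity $\|\epsilon_{t-1}\|^2=\|P_t\epsilon_{t-1}\|^2+\|\epsilon_t\|^2$ and telescope; this is precisely the vector-level unrolling of the paper's operator identity evaluated at $w^*$, so the mathematics is equivalent, but your version needs no external lemma, and sharpness falls out transparently from the exact identity $\sum_{t=1}^k\|X_t\epsilon_{t-1}\|^2=\|w^*\|^2-\|\epsilon_k\|^2$ together with $\|\epsilon_k\|=\|T_kw^*\|\to 0$ by SOT effectiveness, with no appeal to Corollary~3.1. What the paper's route buys in exchange is the reusable operator identity itself, which it needs anyway for the relaxed algorithm (Lemma~\ref{lem:Generalized_Kaczmarz_Scaled_Parseval}, Theorem~\ref{thm:sharp_lambda}), where the one-step energy drop acquires the factor $\lambda(2-\lambda)$ and the Parseval constant becomes $\lambda/(2-\lambda)$. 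One cosmetic remark: a partial isometry automatically satisfies $\|X_t\|\le 1$, so in the sharp case the isometry of $X_t$ on $\operatorname{ran}P_t$ follows from $X_t^*X_t=P_t$ alone, not from Assumption~\ref{asmp:task_bound}.
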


\begin{remark}[Rank-one partial isometries]
\label{rmk:rank-one-partial-isometry}
Let $X:\mathcal H\to\mathcal K$ be of the form $Xh=\langle h,v\rangle\,u$ with nonzero $u\in\mathcal K$, $v\in\mathcal H$. Then
\[
X^*y=\langle y,u\rangle v,\qquad
X^\dagger=\frac{1}{\|u\|^2\|v\|^2}\,X^*,
\]
and in general
\[
X^*X=(\|u\|^2\|v\|^2)\,P_{\mathrm{span}\{v\}},\qquad
XX^*=(\|u\|^2\|v\|^2)\,P_{\mathrm{span}\{u\}}.
\]
Hence the following are equivalent:
\[
X\ \text{is a partial isometry}
\ \Longleftrightarrow\
\|u\|\,\|v\|=1
\ \Longleftrightarrow\
X^\dagger=X^*
\ \Longleftrightarrow\
X^*X=P_{\mathrm{span}\{v\}} \ (\text{and } XX^*=P_{\mathrm{span}\{u\}}).
\]
In particular, under these equivalent conditions we may write $P:=X^*X=P_{\mathrm{span}\{v\}}$, which is exactly the case used later in our theorems.
\end{remark}

\begin{proof} [Proof of Theorem~\ref{thm:kaczmarz}]
Define the error vectors
\[
\epsilon_t \;=\; w^* - w_t,\qquad \text{for } t=0,1,\ldots,k.
\]
Since $w_0 = 0$, we have $\epsilon_0 = w^* - w_0 = w^*$. From the update equation
\[
w_t \;=\; w_{t-1} \;+\; X_t^\dagger \bigl(y_t - X_tw_{t-1}\bigr),
\]
and using $X_tw^* = y_t$, it follows that
\[
w_t - w^*
\;=\; \bigl(I - X_t^\dagger X_t\bigr)\,\bigl(w_{t-1} - w^*\bigr).
\]
Hence, in terms of $\epsilon_t$:
\[
\epsilon_t \;=\; (I - P_t)\, \epsilon_{t-1},
\]
where $P_t = X_t^\dagger\,X_t$ is the orthogonal projector associated with $X_t$. By induction on $t$,
\[
\epsilon_t
\;=\; (I - P_t) \,(I - P_{t-1}) \cdots (I - P_1)\, \epsilon_0
\;=\; (I - P_t) \,(I - P_{t-1}) \cdots (I - P_1)\, w^*.
\]

We now examine the sum of squared residuals:
\[
\sum_{t=1}^k \bigl\|X_t w_{t-1} - y_t\bigr\|^2
\;=\; \sum_{t=1}^k \bigl\|X_t(w_{t-1} - w^*)\bigr\|^2
\;=\; \sum_{t=1}^k \bigl\|X_t\, \epsilon_{t-1}\bigr\|^2.
\]
As we assume that the data matrices satisfy $\|X_t\| \le 1$ by Assumption~\ref{asmp:task_bound} as in \cite{evron2022catastrophic}, for any $u \in \mathcal{H}$, we have
\begin{align}
\label{eq:Data_Projection_Bound}
\|X_t u\|_2^2 = \|X_t X_t^{\dagger} X_t u\|_2^2 \le \|X_t \|_2^2 \| X_t^{\dagger} X_t u\|_2^2 \le \| X_t^{\dagger} X_t u\|_2^2 = \| P_t u\|_2^2. 
\end{align}

Consequently, this implies that
\[
\sum_{t=1}^k \bigl\|X_t\,\epsilon_{t-1}\bigr\|^2
\;\le\; \sum_{t=1}^k \bigl\|P_t\, \epsilon_{t-1}\bigr\|^2.
\]
Now, define
\[
Q_t
\;=\; P_t\, (I-P_{t-1})\, \cdots \,(I-P_1),
\]
as in~\cite{jorgensen2020kaczmarz}.
Recalling $\epsilon_{t-1} = (I-P_{t-1})\cdots(I-P_1) w^*$, we can write
\[
P_t\, \epsilon_{t-1}
\;=\; P_t\, (I-P_{t-1}) \cdots (I-P_1)\, w^* \;= Q_t \, w^*.
\]
Thus,
\[
\sum_{t=1}^k \bigl\|X_t w_{t-1} - y_t\bigr\|^2
\;\le\; \sum_{t=1}^k \bigl\|P_t\, \epsilon_{t-1}\bigr\|^2=\; \sum_{t=1}^k \bigl\|Q_t\, w^*\bigr\|^2.
\]

We next use the following key result from Theorem 3 in~\cite{jorgensen2020kaczmarz}; First, for the sequence $(P_t)_{t=1}^k$, define
\begin{align*}
T_0 &= I,\\
T_k &= (I- P_k)(I- P_{k-1})\cdots (I- P_1), \quad k \ge 1.
\end{align*}
Then Eq. (3.10) in Theorem 3 in~\cite{jorgensen2020kaczmarz} states that 
\[
I - T_k^* T_k = \sum_{t=1}^k Q_t^* Q_t.
\] (We will also provide a self-contained proof for more general form in Lemma~\ref{lem:Generalized_Kaczmarz_Scaled_Parseval} for readers.)

This implies that for all $u \in \mathcal{H}$, we have
\[
\|u\|_2^2 = \|T_k u\|_2^2 + \sum_{t=1}^k \|Q_t u\|_2^2,
\]
so that, for all $k \ge 1$,
\[
\sum_{t=1}^k \|Q_t u\|_2^2 \le \|u\|_2^2.
\]

In particular, for $u = w^*$, this implies
\begin{align}
\label{eq:Q_bound}
\sum_{t=1}^k \bigl\|Q_t\, w^*\bigr\|^2 \;\le\; \|w^*\|^2.
\end{align}

Putting all the previous bounds together,
\[
\frac{1}{k} \sum_{t=1}^k \bigl\|X_t w_{t-1} - y_t\bigr\|^2
\;\le\; \frac{1}{k}\,\|w^*\|^2,
\]
establishing that the average regret is $O\bigl(\tfrac{1}{k}\bigr)$.
This proves the first part of the theorem.

If, in addition, the system \(\{P_t\}_{t\ge 1}\) is effective and each
measurement operator \(X_t\) is a partial isometry (equivalently,
\(X_t^{*}X_t=P_t\)), then for every \(u\in\mathcal H\),
\[
\|X_t u\|^2
= \langle X_t^*X_t u,\,u\rangle
= \langle P_t u,\,u\rangle
= \|P_t u\|^2,
\]
since \(P_t\) is an orthogonal projector.
Consequently, \(\|X_t u\|=\|P_t u\|\) for all \(u \in \mathcal{H}\). Hence, we have

\[
\sum_{t=1}^k \bigl\|X_t w_{t-1} - y_t\bigr\|^2
\;=\; \sum_{t=1}^k \bigl\|P_t\, \epsilon_{t-1}\bigr\|^2=\; \sum_{t=1}^k \bigl\|Q_t\, w^*\bigr\|^2,
\]
and Corollary 3.1 in \cite{jorgensen2020kaczmarz} states that
\[
\sum_{t=1}^\infty \bigl\|Q_t\, w^*\bigr\|^2 = \|w^*\|_2^2.
\] This shows the second part of the theorem.

\end{proof}

\section{Regret Bound for Kaczmarz Algorithm with Noisy Observations}
In this section, we show that the regret bound can be generalized to the noisy case. We assume that the randomness comes only from noise vectors $\eta_1,\eta_2,\dots$, which are independent, mean-zero, and have a common finite second moment $\mathbb E\|\eta_t\|^2=\sigma^2$. Throughout this paper, we write $\mathbb{E}[\cdot]$ for expectation with respect to the noise vectors only. Define $y_t:=X_t w^*+\eta_t$ and
\[
w_t \;=\; w_{t-1}+X_t^\dagger\!\bigl(y_t-X_t w_{t-1}\bigr),\qquad w_0=0.
\]

\begin{theorem}
\label{thm:Regret Bound for the Noisy Kaczmarz Algorithm}
Let $\{(X_t, y_t)\}_{t=1}^k$ be data blocks with
\[
y_t = X_t\,w^* + \eta_t,
\]
where $\{\eta_t\}$ are i.i.d. mean-zero noise vectors satisfying
\[
\mathbb{E}[\|\eta_t\|^2] = \sigma^2 \; \text{for each $t$}.
\]
Suppose that Assumption~\ref{asmp:task_bound} holds with $\|X_t\| \le 1$ and Assumption~\ref{asmp:pinv} holds with $\sup_{t} \|X_t^\dagger\|^2 = C < \infty$.
As before, the Kaczmarz update is given by
\[
w_t = w_{t-1} + X_t^\dagger\Bigl[y_t - X_t\,w_{t-1}\Bigr],\quad w_0 = 0,
\] but with the noisy observations $y_t = X_t\,w^* + \eta_t$. 

Then, the average regret (or squared residual) satisfies
\[
\frac{1}{k}\sum_{t=1}^k \mathbb{E} \|X_tw_{t-1} - y_t\|^2 \le \frac{2\,\|w^*\|^2}{k} + (2C+1) \mathcal{O}(\sigma^2).
\] 
\end{theorem}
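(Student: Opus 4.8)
The plan is to recycle the error-vector analysis of Theorem~\ref{thm:kaczmarz}, now splitting the error into a deterministic estimation part handled exactly as in the noiseless case and a stochastic noise part controlled by a separate energy argument. I would first set $\epsilon_t := w^*-w_t$ and substitute $y_t = X_t w^*+\eta_t$ into the update to obtain the perturbed recursion
\[
\epsilon_t = (I-P_t)\,\epsilon_{t-1} - X_t^\dagger\eta_t,\qquad \epsilon_0 = w^*,
\]
with $P_t = X_t^\dagger X_t$. The per-step residual is $X_t w_{t-1}-y_t = -(X_t\epsilon_{t-1}+\eta_t)$, so that
\[
\|X_t w_{t-1}-y_t\|^2 = \|X_t\epsilon_{t-1}\|^2 + 2\langle X_t\epsilon_{t-1},\eta_t\rangle + \|\eta_t\|^2.
\]

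The first conceptual step is the independence structure. Since $\epsilon_{t-1}$ is a fixed function of $\eta_1,\dots,\eta_{t-1}$ (the operators $X_t^\dagger,P_t$ being deterministic), it is independent of $\eta_t$; conditioning on the past noise and using $\mathbb{E}\eta_t = 0$ kills the cross term, leaving $\mathbb{E}\|X_t w_{t-1}-y_t\|^2 = \mathbb{E}\|X_t\epsilon_{t-1}\|^2 + \sigma^2$. The accumulated $k\sigma^2$ becomes the free additive $\sigma^2$ in the final bound, and the problem reduces to bounding $\sum_{t=1}^k\mathbb{E}\|X_t\epsilon_{t-1}\|^2$. Next I would decompose $\epsilon_t = a_t+b_t$ by linearity, where $a_t = (I-P_t)a_{t-1}$ with $a_0=w^*$ (so $a_{t-1}=T_{t-1}w^*$ is precisely the noiseless trajectory) and $b_t = (I-P_t)b_{t-1}-X_t^\dagger\eta_t$ with $b_0=0$ (the noise-driven part). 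Combining $\|X_t u\|^2\le\|P_t u\|^2$ from~\eqref{eq:Data_Projection_Bound} with $\|u+v\|^2\le 2\|u\|^2+2\|v\|^2$ yields $\mathbb{E}\|X_t\epsilon_{t-1}\|^2 \le 2\|P_t a_{t-1}\|^2 + 2\,\mathbb{E}\|P_t b_{t-1}\|^2$. The estimation sum is finished verbatim as in Theorem~\ref{thm:kaczmarz}: $P_t a_{t-1}=Q_t w^*$, and the Parseval-type bound~\eqref{eq:Q_bound} gives $\sum_t\|P_t a_{t-1}\|^2\le\|w^*\|^2$, the source of the $2\|w^*\|^2/k$ term.

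The step I expect to be the main obstacle is the noise sum $\sum_t\mathbb{E}\|P_t b_{t-1}\|^2$, which requires its own energy/telescoping identity. Squaring the recursion for $b_t$ and again discarding the cross term by independence and $\mathbb{E}\eta_t=0$, then using the Pythagorean split $\|(I-P_t)b_{t-1}\|^2 = \|b_{t-1}\|^2-\|P_t b_{t-1}\|^2$, I would obtain
\[
\mathbb{E}\|b_t\|^2 = \mathbb{E}\|b_{t-1}\|^2 - \mathbb{E}\|P_t b_{t-1}\|^2 + \mathbb{E}\|X_t^\dagger\eta_t\|^2.
\]
Rearranging and summing telescopes the $\mathbb{E}\|b_t\|^2$ terms; with $b_0=0$ and $\mathbb{E}\|b_k\|^2\ge0$ this yields $\sum_t\mathbb{E}\|P_t b_{t-1}\|^2 \le \sum_t\mathbb{E}\|X_t^\dagger\eta_t\|^2 \le kC\sigma^2$ by Assumption~\ref{asmp:pinv}, since $\|X_t^\dagger\eta_t\|^2\le C\|\eta_t\|^2$. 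Assembling the three contributions $2\|w^*\|^2$, $2kC\sigma^2$, and $k\sigma^2$ and dividing by $k$ delivers $\tfrac{2\|w^*\|^2}{k}+(2C+1)\mathcal{O}(\sigma^2)$. The delicate points are the two independence/mean-zero arguments (keeping $X_t^\dagger$ and $P_t$ deterministic while conditioning on the past noise) and applying the projection and pseudoinverse bounds to the correct vectors; the factor $2$ is an artifact of the $\|u+v\|^2\le 2\|u\|^2+2\|v\|^2$ split, and a sharper constant would follow from retaining the $a$--$b$ cross term, which in fact also vanishes in expectation because $\mathbb{E} b_{t-1}=0$.
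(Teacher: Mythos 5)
Your proposal is correct, and it reaches the stated bound by a genuinely different argument for the key noise term. The skeleton agrees with the paper's proof: the same perturbed recursion $\epsilon_t=(I-P_t)\epsilon_{t-1}-X_t^\dagger\eta_t$, the same split of the error into the noiseless trajectory plus a noise-driven part (your $a_{t-1}$, $b_{t-1}$ are, after applying $X_t$, exactly the paper's $A_t$ and $-B_t$), the same $\|u+v\|^2\le 2\|u\|^2+2\|v\|^2$ step, the same estimation bound $\sum_t\|P_t a_{t-1}\|^2=\sum_t\|Q_t w^*\|^2\le\|w^*\|^2$ via \eqref{eq:Q_bound}, and the same vanishing residual cross term contributing $k\sigma^2$. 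Where you diverge is the noise sum: the paper fully unrolls $B_t$, introduces the partial operators $\widetilde{Q}_t^i=P_t(I-P_{t-1})\cdots(I-P_{i+1})$, kills cross terms in $i$ by independence, swaps the double sum, and applies the Parseval-type bound in $t$ for each fixed noise source $i$, obtaining $C(k-1)\sigma^2$; you instead prove the one-step expected-energy identity $\mathbb{E}\|b_t\|^2=\mathbb{E}\|b_{t-1}\|^2-\mathbb{E}\|P_t b_{t-1}\|^2+\mathbb{E}\|X_t^\dagger\eta_t\|^2$ (Pythagoras for the orthogonal projection $P_t$ plus the martingale-difference cancellation) and telescope with $b_0=0$, obtaining $\sum_t\mathbb{E}\|P_t b_{t-1}\|^2\le kC\sigma^2$ --- the trivial discrepancy $kC$ versus $C(k-1)$ is absorbed by the $\mathcal{O}(\sigma^2)$ in the statement. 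Your route is more elementary: it avoids the operator bookkeeping and double-sum exchange, and it makes transparent that each step removes projected energy while injecting at most $C\sigma^2$. What the paper's heavier machinery buys is that the per-source operators $\widetilde{Q}_t^i$ and the Parseval-type inequality are reused verbatim in the relaxed setting of Theorem~\ref{thm:noisy_lambda_corrected}, where the constant becomes $\lambda/(2-\lambda)$ via Lemma~\ref{lem:Generalized_Kaczmarz_Scaled_Parseval}; your telescoping would also generalize there through the identity $\|(I-\lambda P)u\|^2=\|u\|^2-\lambda(2-\lambda)\|Pu\|^2$, but the authors' choice keeps the two proofs structurally parallel. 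Finally, your closing observation is correct and slightly sharper than the paper: since $\mathbb{E}\,b_{t-1}=0$ and $a_{t-1}$ is deterministic, the $a$--$b$ cross term vanishes in expectation, so one may drop the factor $2$ entirely and obtain $\tfrac{\|w^*\|^2}{k}+(C+1)\sigma^2$, a refinement the paper does not exploit.
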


\begin{remark}
Note that our noise model assumption is mild; we assume i.i.d.\ mean-zero noise with finite second moment \(\mathbb{E}\|\eta_t\|^2=\sigma^2\) for each $t$. This covers many heavy-tailed distributions beyond Gaussian or sub-Gaussian.
\end{remark}

\begin{remark}
In the noiseless case, for the squared-loss \(l_t(u)=\tfrac12\|X_t u-y_t\|^2\), we have
\[
\sum_{t=1}^k \|X_tw_{t-1} - y_t\|^2
\;=\; 2\sum_{t=1}^k \bigl(l_t(w_{t-1}) - l_t(u)\bigr)
\quad\text{for }u=w^*,
\]
since \(l_t(w^*)=0\).
For the noisy case, for a convex loss \(l_t\),
\[
\sum_{t=1}^k \Re\inner{g_t,\, w_{t-1}-u}
\;\ge\; \sum_{t=1}^k \bigl(l_t(w_{t-1}) - l_t(u)\bigr),
\]
where \(g_t\) is the stochastic gradient. Consequently, recent papers on squared-loss regret use \(\sum_{t=1}^k \inner{g_t, w_{t-1}-u}\) as the regret \cite{zhangunconstrained, mcmahan2014unconstrained}; we follow this convention.

Moreover, let \(g_t := \overline{X_t}^\top \!\bigl(X_tw_{t-1}-y_t\bigr)\), the stochastic gradient of the Kaczmarz algorithm for \(l_t(u)=\tfrac12\|X_t u-y_t\|^2\). Then, for any \(u\),
\[
\inner{g_t,\, w_{t-1}-u}
= \|X_tw_{t-1}-y_t\|^2
\;-\; \inner{\,X_tw_{t-1}-y_t,\, X_t u-y_t\,}.
\]
With \(u=w^*\) and \(y_t=X_t w^*+\eta_t\),
\[
\inner{g_t,\, w_{t-1}-u}
= \|X_tw_{t-1}-y_t\|^2
\;-\; \inner{\,X_tw_{t-1}-X_t w^*-\eta_t,\, X_t u-X_t w^*-\eta_t\\,}.
\]
Since \(\eta_t\) is mean zero and independent of the past,
\[
\mathbb{E}\bigl[\Re\inner{g_t,\, w_{t-1}-w^*}\bigr]
\;=\; \mathbb{E}\,\|X_tw_{t-1}-y_t\|^2 \;-\; \mathbb{E}\|\eta_t\|^2 = \mathbb{E}\,\|X_tw_{t-1}-y_t\|^2 \;-\; \sigma^2,
\]
or equivalently,
\[
\mathbb{E}\bigl[\Re\inner{g_t,\, w_{t-1}-w^*}\bigr]
\;=\; 2\,\mathbb{E}\bigl[l_t(w_{t-1})-l_t(w^*)\bigr].
\]
Furthermore, we note that \(u=w^*\) is the natural optimal choice in noisy setting: it minimizes the expected loss \( \mathbb{E}\,l_t(u)=\tfrac12\bigl(\|X_t(u-w^*)\|^2+\mathbb{E}\|\eta_t\|^2\bigr)\).
\end{remark}

\begin{proof}

The proof of Theorem~\ref{thm:Regret Bound for the Noisy Kaczmarz Algorithm} is organized in several steps.

\medskip
As the first step, we consider data blocks $(X_t, y_t)$, $t=1,\ldots,k$, with
\[
y_t = X_t\,w^* + \eta_t.
\]
Then, the noisy Kaczmarz update can be expressed as
\[
w_t = w_{t-1} + X_t^\dagger\Bigl[X_t\,w^* + \eta_t - X_t\,w_{t-1}\Bigr],\quad w_0 = 0.
\]
As before, we define the error vector as
\[
\epsilon_t = w^* - w_t,
\]
which allows us to express the noisy Kaczmarz update as
\[
\epsilon_t = (I - P_t)\,\epsilon_{t-1} - X_t^\dagger\,\eta_t, \quad \text{where $P_t = X_t^\dagger X_t$. }
\]

In the next step, we decompose the error as follows;

Unrolling the recursion $\epsilon_t = (I - P_t)\,\epsilon_{t-1} - X_t^\dagger\,\eta_t$ followed by the multiplication by $X_t$ leads to
\[
X_t\,\epsilon_{t-1} = A_t - B_t,
\]
where we define
\[
A_t := X_t\Bigl[(I - P_{t-1})\cdots (I - P_1)\Bigr]\,w^*,
\]
and
\begin{align}
\label{eq:noise_term}
B_t := X_t \sum_{i=1}^{t-1} \Bigl[(I - P_{t-1})\cdots (I - P_{i+1})\Bigr]\,X_i^\dagger\,\eta_i.
\end{align}

Using the standard inequality $\|u-v\|^2 \le 2\|u\|^2 + 2\|v\|^2$, we obtain
\[
\|X_t\,\epsilon_{t-1}\|^2 \le 2\,\|A_t\|^2 + 2\,\|B_t\|^2.
\]
Summing both sides of the above bound over $t=1,\ldots,k$ gives
\begin{align}
\label{eq:bound_by_noiseless_noisy_terms}
\sum_{t=1}^k \|X_t\,\epsilon_{t-1}\|^2 \le 2\sum_{t=1}^k \|A_t\|^2 + 2\sum_{t=1}^k \|B_t\|^2.
\end{align}

Then, we express the noiseless term $A_t$ in terms of 
\[
Q_t := P_t\,(I - P_{t-1})\cdots (I - P_1), \; \text{as in the noiseless case.}
\]
Since $\|X_t u\|\le \|P_t u\|$ for any $u$ from \eqref{eq:Data_Projection_Bound} due to Assumption~\ref{asmp:task_bound}, it follows that
\[
\|A_t\|^2 = \Bigl\|X_t (I - P_{t-1})\cdots (I - P_1)\,w^*\Bigr\|^2 \le \|Q_t\,w^*\|^2.
\]
By the bound \eqref{eq:Q_bound} in the noiseless case (or by Eq. (3.11) in Theorem 3 in~\cite{jorgensen2020kaczmarz}),
\[
\sum_{t=1}^k \|Q_t\,w^*\|^2 \le \|w^*\|^2,
\]
so that
\begin{align}
\label{eq:noiseless_bound_theorem3}
\sum_{t=1}^k \|A_t\|^2 \le \|w^*\|^2.
\end{align}

Next for $B_t$ term due to the noise that was defined in \eqref{eq:noise_term}
\[
B_t
\;=\;
X_t
 \sum_{i=1}^{t-1}
 (I - P_{t-1})\cdots (I - P_{i+1})
 \,X_i^\dagger\,\eta_i,
\]
we define the ``partial'' operators:
\[
\widetilde{Q}_t^i
\;:=\;
P_t
\Bigl[
(I - P_{t-1})\cdots (I - P_{i+1})
\Bigr],
\quad
\text{for }1\le i < t\le k.
\]
Then, we have,
\begin{align*}
\|B_t\|_2^2
\;&=\;
\Bigl\|
X_t \sum_{i=1}^{t-1}
\bigl[(I - P_{t-1})\cdots (I - P_{i+1})\bigr]
 X_i^\dagger\,\eta_i
\Bigr\|_2^2\\
\;&\le\; \Bigl\|
P_t \sum_{i=1}^{t-1}
\bigl[(I - P_{t-1})\cdots (I - P_{i+1})\bigr]
X_i^\dagger\,\eta_i
\Bigr\|_2^2\\
\;&=\;
\Bigl\|
\sum_{i=1}^{t-1}
 \widetilde{Q}_t^i\,
\bigl(X_i^\dagger\,\eta_i\bigr)
\Bigr\|_2^2,
\end{align*} where the inequality follows from $\|X_t u\|\le \|P_t u\|$ for any $u$ from \eqref{eq:Data_Projection_Bound}.

Because \(\{\eta_i\}\) are i.i.d.\ with zero mean, for each fixed~\(t\),
\[
\mathbb{E}\Bigl[
\Bigl\|
 \sum_{i=1}^{t-1}
\widetilde{Q}_t^i\,
\bigl(X_i^\dagger\,\eta_i\bigr)
\Bigr\|^2
\Bigr]
\;=\;
\sum_{i=1}^{t-1}
 \mathbb{E}\bigl[
 \|\widetilde{Q}_t^i\,(X_i^\dagger\,\eta_i)\|^2
 \bigr],
\]
since cross‐terms vanish under expectation. Hence, combining the previous bound on $\|B_t\|_2^2$ and taking the expectation yield
\[
\mathbb{E} \|B_t\|_2^2 \le \sum_{i=1}^{t-1}
 \mathbb{E}\bigl[
 \|\widetilde{Q}_t^i\,(X_i^\dagger\,\eta_i)\|^2
 \bigr].
\]

After taking expectation to both sides of the inequality in \eqref{eq:bound_by_noiseless_noisy_terms}, due to the linearity of the expectation, we have

\begin{align}
\label{eq:bound_theorem3}
\sum_{t=1}^k
 \mathbb{E}\|X_t\,\epsilon_{t-1}\|^2
\le
2\sum_{t=1}^k
 \mathbb{E}\|A_t\|^2 + 2 \sum_{t=1}^k
\mathbb{E} \|B_t\|^2 = 2\sum_{t=1}^k
 \|A_t\|^2 + 2 \sum_{t=1}^k
\mathbb{E} \|B_t\|^2,
\end{align}
where the equality is from the fact that $A_t$ does not depend on the randomness of the noise sequence $\{\eta_t\}$.

Using the bound on $\mathbb{E} \|B_t\|^2$ for each $t$ we have derived, the term from the noisy observation, we have
\begin{align*}
\sum\limits_{t=1}^k \mathbb{E} \|B_t\|^2
&\le \sum\limits_{t=1}^k \sum_{i=1}^{t-1}
 \mathbb{E}\bigl[
 \|\widetilde{Q}_t^i\,(X_i^\dagger\,\eta_i)\|^2
 \bigr] = \sum\limits_{i=1}^{k-1} \sum_{t=i+1}^{k}
 \mathbb{E}\bigl[
 \|\widetilde{Q}_t^i\,(X_i^\dagger\,\eta_i)\|^2
 \bigr],
\end{align*}
where the equality is obtained by switching the order of the double sum.
Then, again by the linearity of expectation, we have
\begin{align}
\label{eq:double_sum_switched}
\sum\limits_{i=1}^{k-1} \sum_{t=i+1}^{k}
 \mathbb{E}\bigl[
 \|\widetilde{Q}_t^i\,(X_i^\dagger\,\eta_i)\|^2
 \bigr]
\;=\;
\sum\limits_{i=1}^{k-1} \mathbb{E} \left[ \sum_{t=i+1}^{k}
\bigl[
 \|\widetilde{Q}_t^i\,(X_i^\dagger\,\eta_i)\|^2
 \bigr] \right]. 
\end{align}

Recall that $\widetilde{Q}_t^i$ is defined as
\[
\widetilde{Q}_t^i
\;:=\;
P_t
\Bigl[
(I - P_{t-1})\cdots (I - P_{i+1})
\Bigr],
\quad
\text{for }1\le i < t\le k.
\]
So, by the same argument used to show the bound \eqref{eq:Q_bound} in the noiseless case (or by the same argument for Eq. (3.11) in Theorem 3 in~\cite{jorgensen2020kaczmarz}), we have
\[
\sum_{t=i+1}^{k}
 \|\widetilde{Q}_t^i\,(X_i^\dagger\,\eta_i)\|^2 \le
\bigl\|X_i^\dagger\,\eta_i\bigr\|^2
\;\le\;
\|X_i^\dagger\|_{\mathrm{op}}^2\,
\|\eta_i\|^2.
\]
Using this inequality to  \eqref{eq:double_sum_switched}, we obtain
\[
\sum\limits_{i=1}^{k-1} \mathbb{E} \left[ \sum_{t=i+1}^{k}
\bigl[
 \|\widetilde{Q}_t^i\,(X_i^\dagger\,\eta_i)\|_2^2
 \bigr] \right]
\;\le\;
\sum_{i=1}^{k-1}
\mathbb{E} \bigl\|X_i^\dagger\|^2\,
 \|\eta_i\|_2^2
\;\;\le\;\;
C\sum_{i=1}^{k-1}
\mathbb{E} \|\eta_i\|_2^2,
\]
for some uniform constant \(C=\max_i\|X_i^\dagger\|^2\). Taking the expectation with
\(\mathbb{E}[\|\eta_i\|_2^2] = \sigma^2\) yields the bound 
\begin{align}
\label{eq:noise_sum_bound} 
\sum_{t=1}^k \mathbb{E}\|\!B_t\|^2 \le C(k-1) \sigma^2.
\end{align}

As the last step, we combine previous bounds and prove the theorem.
Continuing from \eqref{eq:bound_theorem3} with bounds \eqref{eq:noiseless_bound_theorem3} and \eqref{eq:noise_sum_bound} yields
\[
\sum_{t=1}^k \mathbb{E}\|X_t\,\epsilon_{t-1}\|^2 \le 2\sum_{t=1}^k \|A_t\|^2 + 2\sum_{t=1}^k \mathbb{E}\|\!B_t\|^2  \le 2\|w^*\|^2 + 2C(k-1)\sigma^2.
\]

The residual is $r_t = X_tw_{t-1} - y_t = -X_t\epsilon_{t-1} - \eta_t$. Due to the independence of $\eta_t$ and $\epsilon_{t-1}$, the expectation of the cross-term vanishes as below.
\[
\mathbb{E}\|r_t\|^2 = \mathbb{E}\|X_t\epsilon_{t-1}\|^2 + \mathbb{E}\|\eta_t\|^2 = \mathbb{E}\|X_t\epsilon_{t-1}\|^2 + \sigma^2.
\]
Summing this over $t=1, \dots, k$ gives
\[
\sum_{t=1}^k \mathbb{E}\|r_t\|^2 = \sum_{t=1}^k \mathbb{E}\|X_t\epsilon_{t-1}\|^2 + k\sigma^2.
\]
Then, we substitute the bound for the first term on the right as follows.
\[
\sum_{t=1}^k \mathbb{E}\|r_t\|^2 \le \Bigl( 2\|w^*\|^2 + 2C(k-1)\sigma^2 \Bigr) + k\sigma^2 = 2\|w^*\|^2 + \bigl((2C+1)k - 2C\bigr)\sigma^2,
\]
so that 
\[
\frac{1}{k}\sum_{t=1}^k \mathbb{E}\|r_t\|^2 \le \frac{2\|w^*\|^2}{k} + \left( 2C+1 - \frac{2C}{k} \right)\sigma^2.
\]
Hence, we obtain the final bound:
\[
\frac{1}{k}\sum_{t=1}^k \mathbb{E}\|X_tw_{t-1} - y_t\|^2 \le \frac{2\,\|w^*\|^2}{k} + (2C+1)\mathcal{O}(\sigma^2).
\]
This completes the proof.
\end{proof}

\section{Generalized Kaczmarz with Step Size
\texorpdfstring{$\lambda\in(0,2)$}{lambda in (0,2)}}

We now study the generalized Kaczmarz algorithm with step size or relaxation parameter \(\lambda\), whose update equation is given by
\[
w_t = w_{t-1} + \lambda\,X_t^\dagger\Bigl( y_t - X_tw_{t-1}\Bigr).
\]
When \(\lambda\in(0,1)\) we speak of \emph{under-relaxation}, and when \(\lambda\in(1,2)\) of \emph{over-relaxation}. Under-relaxation is more conservative (often more stable), whereas over-relaxation may reduce iteration counts in some settings but can also overshoot and is often not uniformly beneficial; importantly, such apparent speedups typically happen only in a finite time and deal with a different quantity than the average regret, and therefore do not contradict our results. \cite{strohmer2009randomized, oswald2015convergence, nikazad2024choosing}. While relaxed Kaczmarz is classical, its performance under a regret lens is comparatively less explored; in what follows we develop operator-theoretic regret bounds with explicit \(\lambda\)-dependence, and we later extend the analysis to noisy observations.

Another recent related work \cite{gunturk2019unrestricted} studies unrestricted iterations of relaxed projections drawn from a finite family in the noiseless case, proving absolute summability of the displacement sequence, root–exponential tail bounds under a regularity assumption, and the impossibility of uniform iteration–rate guarantees across \(\lambda\in(0,2)\). By contrast, our infinite–dimensional operator/block Kaczmarz setting for online prediction -- allowing possibly infinitely many projections (e.g., the Fourier exponential functions) and including noise—uses an operator–factorization argument and \(\lambda\)-effectiveness to obtain regret-based guarantees. These include a sharp noiseless identity with the explicit constant \(1/[\lambda(2-\lambda)]\), dimension–free \(O(1/k)\) average–regret bounds with explicit noise tradeoffs that inform the choice of \(\lambda\), and an inner–function criterion ensuring the effectiveness for exponential systems.

\subsection{Preliminaries on the Generalized Kaczmarz operators}

Let \(\lambda\in (0,2)\) and let \(\{P_j\}_{j\ge 1}\) be a sequence of self-adjoint projections on a Hilbert space $\mathcal{H}$ including $\{X_j^\dagger X_j\}$. We now analyze the properties of this sequence using a canonical index $n$.

We define the operators by
\begin{align}
\widetilde{T}_0 &= I, \\ 
\widetilde{T}_n &= (I-\lambda P_n)(I-\lambda P_{n-1})\cdots (I-\lambda P_1), \quad n \ge 1, \label{eq:Tn_1based}\\[1mm]
\widetilde{Q}_n &= \lambda\,P_n\,\widetilde{T}_{n-1}, \quad n \ge 1. \label{eq:Qn_1based}
\end{align}
Note that $\widetilde{Q}_1 = \lambda P_1 \widetilde{T}_0 = \lambda P_1$.
For $n \ge 1$, a short computation shows that
\[
\widetilde{T}_n = (I-\lambda P_n)\widetilde{T}_{n-1} = \widetilde{T}_{n-1} - \lambda P_n\,\widetilde{T}_{n-1}.
\]

Thus, using the definition \eqref{eq:Qn_1based}, we obtain for $n \ge 1$:
\[
\widetilde{T}_n = \widetilde{T}_{n-1} - \widetilde{Q}_n \quad \text{or} \quad \widetilde{T}_{n-1} - \widetilde{T}_n = \widetilde{Q}_n.
\]
This leads to the following telescoping sum.
\[
\sum_{j=1}^n (\widetilde{T}_{j-1} - \widetilde{T}_j) = \sum_{j=1}^n \widetilde{Q}_j,
\]
which gives
\[
\widetilde{T}_0 - \widetilde{T}_n = \sum_{j=1}^n \widetilde{Q}_j
\]
Since $\widetilde{T}_0 = I$, we have
\[
I-\widetilde{T}_n = \sum_{j=1}^{n} \widetilde{Q}_j.
\]

Now, we are ready to state the following important lemma, generalizing Theorem 3 in \cite{jorgensen2020kaczmarz}.

\begin{lemma}
\label{lem:Generalized_Kaczmarz_Scaled_Parseval}
(i) For any $N \ge 1$:
\[ I - \widetilde{T}_N^* \widetilde{T}_N = \frac{2 - \lambda}{\lambda} \sum_{n=1}^N \widetilde{Q}_n^* \widetilde{Q}_n. \]

In parts (ii) and (iii), assume that the sequence of operators $\widetilde{T}_n$ converges in the strong operator topology (SOT) to $0$ as $n \to \infty$ (i.e., the system is $\lambda$-effective by Definition~\ref{def:lambda effectiveness}). Then, we have
\begin{enumerate}
\item[(ii)] The sum $\sum\limits_{j=1}^N \widetilde{Q}_j^* \widetilde{Q}_j$ converges in SOT as $N \to \infty$ to $\frac{\lambda}{2-\lambda} I$.
\item[(iii)] The sum $\sum\limits_{j=1}^N \widetilde{Q}_j$ converges in SOT as $N \to \infty$ to $I$.
\end{enumerate}
\end{lemma}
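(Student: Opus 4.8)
The plan is to prove (i) by a telescoping argument resting on a single algebraic identity for orthogonal projections, and then to read off (ii) and (iii) by applying the resulting operator identities to an arbitrary vector and invoking $\lambda$-effectiveness. The engine of the whole lemma is the observation that since each $P_n$ is a self-adjoint projection,
\[
(I-\lambda P_n)^*(I-\lambda P_n) = I - 2\lambda P_n + \lambda^2 P_n = I - \lambda(2-\lambda)\,P_n,
\]
using $P_n^*=P_n$ and $P_n^2=P_n$. Everything else is bookkeeping around this.

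For part (i), I would use the recursion $\widetilde{T}_n=(I-\lambda P_n)\widetilde{T}_{n-1}$ to compute the per-step difference of the quadratic forms:
\[
\widetilde{T}_{n-1}^*\widetilde{T}_{n-1}-\widetilde{T}_n^*\widetilde{T}_n
= \widetilde{T}_{n-1}^*\bigl[I-(I-\lambda P_n)^*(I-\lambda P_n)\bigr]\widetilde{T}_{n-1}
= \lambda(2-\lambda)\,\widetilde{T}_{n-1}^* P_n \widetilde{T}_{n-1}.
\]
On the other hand, from $\widetilde{Q}_n=\lambda P_n\widetilde{T}_{n-1}$ together with $P_n^2=P_n$ one gets $\widetilde{Q}_n^*\widetilde{Q}_n=\lambda^2\,\widetilde{T}_{n-1}^* P_n \widetilde{T}_{n-1}$, so the per-step difference equals $\tfrac{2-\lambda}{\lambda}\widetilde{Q}_n^*\widetilde{Q}_n$. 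Summing over $n=1,\dots,N$ telescopes the left-hand side to $\widetilde{T}_0^*\widetilde{T}_0-\widetilde{T}_N^*\widetilde{T}_N=I-\widetilde{T}_N^*\widetilde{T}_N$, which is exactly the claimed identity. No induction is strictly needed; the telescoping does the work directly.

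For part (ii), I would rewrite (i) as $\sum_{n=1}^N\widetilde{Q}_n^*\widetilde{Q}_n=\tfrac{\lambda}{2-\lambda}\bigl(I-\widetilde{T}_N^*\widetilde{T}_N\bigr)$ and test against a fixed $u\in\mathcal H$. Here is the one place I expect a genuine subtlety: $\lambda$-effectiveness only supplies $\widetilde{T}_N u\to 0$, whereas I need $\widetilde{T}_N^*\widetilde{T}_N u\to 0$. The hard part, then, is supplying a uniform norm bound so that this upgrade is legitimate. I would record that $\|I-\lambda P_n\|=\max\{1,|1-\lambda|\}=1$ for $\lambda\in(0,2)$ (the factor acts as $1$ on $\ker P_n$ and as $1-\lambda$ on $\operatorname{ran}P_n$, with $|1-\lambda|<1$), hence $\|\widetilde{T}_N\|\le 1$ for all $N$ and $\|\widetilde{T}_N^*\|=\|\widetilde{T}_N\|\le 1$. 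Then
\[
\|\widetilde{T}_N^*\widetilde{T}_N u\|\le\|\widetilde{T}_N^*\|\,\|\widetilde{T}_N u\|\le\|\widetilde{T}_N u\|\longrightarrow 0,
\]
so $\sum_{n=1}^N\widetilde{Q}_n^*\widetilde{Q}_n u\to\tfrac{\lambda}{2-\lambda}u$, giving SOT convergence of the partial sums to $\tfrac{\lambda}{2-\lambda}I$.

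Part (iii) I would obtain immediately from the telescoping identity $I-\widetilde{T}_N=\sum_{j=1}^N\widetilde{Q}_j$ derived just before the lemma statement: applied to $u$ it reads $\sum_{j=1}^N\widetilde{Q}_j u = u-\widetilde{T}_N u$, and since $\widetilde{T}_N u\to 0$ under $\lambda$-effectiveness the partial sums converge to $u$, i.e.\ $\sum_{j=1}^N\widetilde{Q}_j\xrightarrow{\text{SOT}}I$. Overall, (i) is the substantive step and is essentially a one-line projection computation plus telescoping; the only point requiring care is the $\|\widetilde{T}_N\|\le 1$ bound needed to pass from SOT convergence of $\widetilde{T}_N$ to that of $\widetilde{T}_N^*\widetilde{T}_N$ in part (ii).
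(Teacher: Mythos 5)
Your proposal is correct and follows essentially the same route as the paper: the same projection identity $(I-\lambda P_n)^*(I-\lambda P_n)=I-\lambda(2-\lambda)P_n$, the same identification $\widetilde{Q}_n^*\widetilde{Q}_n=\lambda^2\,\widetilde{T}_{n-1}^*P_n\widetilde{T}_{n-1}$, the same telescoping for (i), and the same vector-level argument for (ii) and (iii). The only (harmless) divergence is in (ii), where you establish the needed uniform bound directly via $\|I-\lambda P_n\|\le 1$ for $\lambda\in(0,2)$, whereas the paper deduces $\sup_N\|\widetilde{T}_N\|<\infty$ from strong convergence; your explicit bound is in fact slightly more self-contained.
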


\begin{proof}
(iii) is an immediate consequence of $I-\widetilde{T}_n = \sum_{j=1}^{n} \widetilde{Q}_j$ and the assumption $\widetilde{T}_n \rightarrow 0$ in SOT.

\paragraph{Proof of (i)}

We adapt the proof technique from Theorem 3 in \cite{jorgensen2020kaczmarz}, generalizing for all $\lambda \in (0, 2)$.

\medskip

First, we start with expanding $\widetilde{T}_n^* \widetilde{T}_n$.

For $n \ge 1$, the definition is $\widetilde{T}_n = (I - \lambda P_n) \widetilde{T}_{n-1}$. Taking the adjoint, using the self-adjointness of $P_n$ and $\lambda \in \R$, we have
\[ \widetilde{T}_n^* = \widetilde{T}_{n-1}^* (I - \lambda P_n)^* = \widetilde{T}_{n-1}^* (I - \lambda P_n). \]
Now we compute the product $\widetilde{T}_n^* \widetilde{T}_n$:
\begin{align*}
\widetilde{T}_n^* \widetilde{T}_n &= \left( \widetilde{T}_{n-1}^* (I - \lambda P_n) \right) \left( (I - \lambda P_n) \widetilde{T}_{n-1} \right)
= \widetilde{T}_{n-1}^* (I - \lambda P_n)^2 \widetilde{T}_{n-1}.
\end{align*}
From $P_n^2 = P_n$, we have $(I - \lambda P_n)^2 = I - (2\lambda - \lambda^2) P_n$ so that 
\begin{align}
\nonumber
\widetilde{T}_n^* \widetilde{T}_n &= \widetilde{T}_{n-1}^* (I - (2\lambda - \lambda^2) P_n) \widetilde{T}_{n-1} \\
&= \widetilde{T}_{n-1}^* \widetilde{T}_{n-1} - (2\lambda - \lambda^2) \widetilde{T}_{n-1}^* P_n \widetilde{T}_{n-1}. \label{eq:TnT_recursion_step1}
\end{align}

Next, we express the recursion for $\widetilde{T}_n^* \widetilde{T}_n$ in $\widetilde{Q}_n$.

Recall the definition $\widetilde{Q}_n = \lambda P_n \widetilde{T}_{n-1}$ for $n \ge 1$. Its adjoint is $\widetilde{Q}_n^* = \lambda \widetilde{T}_{n-1}^* P_n$.
Consider the product $\widetilde{Q}_n^* \widetilde{Q}_n$:
\begin{align*}
\widetilde{Q}_n^* \widetilde{Q}_n = (\lambda \widetilde{T}_{n-1}^* P_n) (\lambda P_n \widetilde{T}_{n-1}) = \lambda^2 \widetilde{T}_{n-1}^* P_n^2 \widetilde{T}_{n-1} = \lambda^2 \widetilde{T}_{n-1}^* P_n \widetilde{T}_{n-1} \quad (\text{since } P_n^2 = P_n), 
\end{align*}
and therefore, 
\[ \widetilde{T}_{n-1}^* P_n \widetilde{T}_{n-1} = \frac{1}{\lambda^2} \widetilde{Q}_n^* \widetilde{Q}_n. \]
We substitute this expression back into Eq. \eqref{eq:TnT_recursion_step1} as follows.
\begin{align*}
\widetilde{T}_n^* \widetilde{T}_n &= \widetilde{T}_{n-1}^* \widetilde{T}_{n-1} - (2\lambda - \lambda^2) \left( \frac{1}{\lambda^2} \widetilde{Q}_n^* \widetilde{Q}_n \right) = \widetilde{T}_{n-1}^* \widetilde{T}_{n-1} - \frac{2 - \lambda}{\lambda} \widetilde{Q}_n^* \widetilde{Q}_n, \;\; \text{for $n \ge 1$.}
\end{align*}

Hence, 
\[ \sum_{n=1}^N \left( \widetilde{T}_{n-1}^* \widetilde{T}_{n-1} - \widetilde{T}_n^* \widetilde{T}_n \right) = \sum_{n=1}^N \frac{2 - \lambda}{\lambda} \widetilde{Q}_n^* \widetilde{Q}_n. \]
The left side is a telescoping sum which evaluates to $\widetilde{T}_0^* \widetilde{T}_0 - \widetilde{T}_N^* \widetilde{T}_N = I - \widetilde{T}_N^* \widetilde{T}_N$.
Thus, for any $N \ge 1$:
\[ I - \widetilde{T}_N^* \widetilde{T}_N = \frac{2 - \lambda}{\lambda} \sum_{n=1}^N \widetilde{Q}_n^* \widetilde{Q}_n. \]
This proves Part (i) of the lemma.

\paragraph{Proof of (ii)}

As stated in the lemma, we assume $\widetilde{T}_N \to 0$ (SOT).
Since $\widetilde{T}_N$ converges strongly, the sequence of operator norms $\{\|\widetilde{T}_N\|\}$ is uniformly bounded. Let $M = \sup_N \|\widetilde{T}_N\| < \infty$. Using operator norm properties and $\|\widetilde{T}_N^*\| = \|\widetilde{T}_N\|$:
\[
\| (\widetilde{T}_N^* \widetilde{T}_N) x \| = \| \widetilde{T}_N^* (\widetilde{T}_N x) \| \le \|\widetilde{T}_N^*\| \|\widetilde{T}_N x\| = \|\widetilde{T}_N\| \|\widetilde{T}_N x\|.
\]
Applying the uniform bound $M$:
\[
\| (\widetilde{T}_N^* \widetilde{T}_N) x \| \le M \|\widetilde{T}_N x\|.
\]
Since $\lim_{N\to\infty} \|\widetilde{T}_N x\| = 0$, we take the limit as $N \to \infty$:
\[
\lim_{N\to\infty} \| (\widetilde{T}_N^* \widetilde{T}_N) x \| \le \lim_{N\to\infty} (M \|\widetilde{T}_N x\|) = 0.
\]
Hence, we have $\lim_{N\to\infty} \| (\widetilde{T}_N^* \widetilde{T}_N) x \| = 0$, which means
\[
\lim_{N\to\infty} \widetilde{T}_N^* \widetilde{T}_N = 0 \quad (\text{SOT}).
\]
Now, we take the limit $N \to \infty$ in the identity of Part (i) of the lemma to obtain
\[
\lim_{N\to\infty} (I - \widetilde{T}_N^* \widetilde{T}_N) = \lim_{N\to\infty} \frac{2 - \lambda}{\lambda} \sum_{n=1}^N \widetilde{Q}_n^* \widetilde{Q}_n \quad (\text{in SOT}).
\]
Since $\lim_{N\to\infty} \widetilde{T}_N^* \widetilde{T}_N = 0 \; (\text{in SOT})$, we have
\[
I = \frac{2 - \lambda}{\lambda} \left( \lim_{N\to\infty} \sum_{n=1}^N \widetilde{Q}_n^* \widetilde{Q}_n \right) \quad \text{or} \quad
\sum_{n=1}^\infty \widetilde{Q}_n^* \widetilde{Q}_n = \frac{\lambda}{2 - \lambda} I \quad (\text{SOT}),
\] where $\sum_{n=1}^\infty \widetilde{Q}_n^* \widetilde{Q}_n$ denote the SOT limit.
This shows that $\lambda$-effectiveness implies the SOT convergence of the sum to $\frac{\lambda}{2-\lambda} I$.
This proves Part (ii) of the lemma.
\end{proof}

\begin{corollary}[Scaled Parseval-type identity]
\label{cor:scalar_parseval_identity}
If the system $\{P_j\}_{j\ge 1}$ is \(\lambda\)–effective for $\lambda \in (0, 2)$, then the following identity holds for all $x \in \mathcal{H}$:
\[
\sum_{j=1}^{\infty} \|\widetilde{Q}_j x\|^2 = \frac{\lambda}{2-\lambda} \|x\|^2.
\]
\end{corollary}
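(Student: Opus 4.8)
The plan is to read the corollary off directly from part (ii) of Lemma~\ref{lem:Generalized_Kaczmarz_Scaled_Parseval} by pairing the operator identity there against a fixed vector $x\in\mathcal H$. The bridge between the scalar sum $\sum_j\norm{\widetilde Q_j x}^2$ and the operator sum $\sum_j \widetilde Q_j^*\widetilde Q_j$ controlled by the lemma is the elementary identity $\norm{\widetilde Q_j x}^2=\inner{\widetilde Q_j^*\widetilde Q_j\,x,\;x}$, valid for each $j$. Summing this over $j=1,\dots,N$ and using linearity of the inner product in its first slot rewrites the partial sum as a single quadratic form in the truncated operator sum.

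Concretely, I would proceed in three short steps. First, fix $x\in\mathcal H$ and write
\[
\sum_{j=1}^N \norm{\widetilde Q_j x}^2
= \sum_{j=1}^N \inner{\widetilde Q_j^*\widetilde Q_j\,x,\;x}
= \Bigl\langle \Bigl(\textstyle\sum_{j=1}^N \widetilde Q_j^*\widetilde Q_j\Bigr)x,\;x\Bigr\rangle .
\]
Second, invoke part (ii) of Lemma~\ref{lem:Generalized_Kaczmarz_Scaled_Parseval}: since the system is assumed $\lambda$--effective, the truncated operator sums $\sum_{j=1}^N \widetilde Q_j^*\widetilde Q_j$ converge in the strong operator topology to $\tfrac{\lambda}{2-\lambda}I$, which by definition of SOT means precisely that $\bigl(\sum_{j=1}^N \widetilde Q_j^*\widetilde Q_j\bigr)x \to \tfrac{\lambda}{2-\lambda}\,x$ in the norm of $\mathcal H$. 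Third, apply continuity of the inner product in its first argument to pass the limit through, yielding
\[
\sum_{j=1}^\infty \norm{\widetilde Q_j x}^2
= \Bigl\langle \tfrac{\lambda}{2-\lambda}\,x,\;x\Bigr\rangle
= \frac{\lambda}{2-\lambda}\,\norm{x}^2 ,
\]
which is the claimed identity.

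I do not expect a genuine obstacle here, since the substantive work (the scaled Parseval operator identity and its SOT limit) is already carried out in the lemma; the corollary is essentially the scalar shadow of that statement. The only point requiring a moment of care is the final limit exchange: one must note that SOT convergence of the operator partial sums gives norm convergence of the vectors $\bigl(\sum_{j=1}^N \widetilde Q_j^*\widetilde Q_j\bigr)x$, and then that the map $v\mapsto\inner{v,x}$ is norm-continuous, so the scalar limit is legitimate. No appeal to uniform (operator-norm) convergence of the sum is needed, which is fortunate because only SOT convergence is available.
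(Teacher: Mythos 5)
Your proposal is correct and follows essentially the same route as the paper: the paper also derives the corollary by pairing the SOT identity $\sum_{n=1}^\infty \widetilde{Q}_n^*\widetilde{Q}_n = \tfrac{\lambda}{2-\lambda}I$ from Lemma~\ref{lem:Generalized_Kaczmarz_Scaled_Parseval}(ii) against $x$ via $\|\widetilde{Q}_j x\|^2 = \langle \widetilde{Q}_j^*\widetilde{Q}_j x, x\rangle$. Your explicit handling of the truncation and the SOT-to-scalar limit exchange is a slightly more careful writeup of the step the paper compresses into ``taking the inner product with $x$,'' but it is the same argument.
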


\begin{proof}
This follows directly by taking the inner product with $x$ to $\sum_{n=1}^\infty \widetilde{Q}_n^* \widetilde{Q}_n = \frac{\lambda}{2 - \lambda} I \; (\text{SOT})$:
\[
\sum_{j=1}^{\infty} \|\widetilde{Q}_j x\|^2 = \sum_{j=1}^{\infty} \langle \widetilde{Q}_j x, \widetilde{Q}_j x \rangle = \left\langle \left(\sum_{j=1}^\infty \widetilde{Q}_j^* \widetilde{Q}_j \right) x, x \right\rangle = \left\langle \left(\frac{\lambda}{2-\lambda} I\right) x, x \right\rangle = \frac{\lambda}{2-\lambda} \|x\|^2.
\]
\end{proof}

Also, recall that by definition for $n \ge 1$,
\begin{align}
\label{eq:Q_1}
\widetilde{Q}_n = \lambda P_n\,\widetilde{T}_{n-1}. \quad \text{and} \quad \widetilde{Q}_1 = \lambda P_1 \widetilde{T}_0 = \lambda P_1 \; (\text{since } \widetilde{T}_0 = I) 
\end{align}

Because $\widetilde{T}_{n-1} = I - \sum_{j=1}^{n-1} \widetilde{Q}_j$ for $n \ge 2$,
\begin{align}
\label{eq:Q_n}
\widetilde{Q}_n = \lambda P_n \Bigl(I-\sum_{j=1}^{n-1}\widetilde{Q}_j\Bigr), \; \; \text{$n \ge 2$}.
\end{align}

Suppose that the projections are of rank one, i.e., there exist unit vectors $e_j$ such that
\[
P_j = e_j e_j^*,\quad \|e_j\|_2=1,\quad \text{for } j\ge 1.
\] Then, since $\widetilde{Q}_j$ is of rank one as well with range in span of $\{e_j\}$, there exists a unique vector $h_j^{(\lambda)} \in \mathcal{H}$ such that $\widetilde{Q}_j x = e_j\langle h_j^{(\lambda)}, x \rangle$ (by Riesz representation theorem). This sequence $\{h_j^{(\lambda)}\}$ is called the \emph{auxiliary sequence} associated with $\{e_j\}$ for the generalized Kaczmarz method with relaxation parameter $\lambda$.

\begin{corollary}[Rank--one case]
\label{cor:rank1_1based}
Consider the auxiliary sequence $h_j^{(\lambda)}$ associated with $\{e_j\}$ for the generalized Kaczmarz method with relaxation parameter $\lambda$. Then, \eqref{eq:Q_1} and \eqref{eq:Q_n} can be expressed in terms of $h_j^{(\lambda)}$ as follows:
\[
h_1^{(\lambda)} = \lambda\,e_1,\qquad
h_n^{(\lambda)} = \lambda\Bigl(e_n-\sum_{k=1}^{n-1}\overline{\langle e_n,e_k\rangle}\,h_k^{(\lambda)}\Bigr),\quad n\ge 2.
\]
Also, under \(\lambda\)–effectiveness, strong convergence holds:
\[
x = \sum_{j=1}^{\infty} \widetilde{Q}_j x = \sum_{j=1}^{\infty} e_j\langle h_j^{(\lambda)}, x\rangle,\quad \forall\, x\in H.
\]
\end{corollary}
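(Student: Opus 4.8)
The plan is to prove the two assertions separately: the three-term recursion is a direct transcription of the operator identities \eqref{eq:Q_1} and \eqref{eq:Q_n} into the rank-one picture, while the series representation is an immediate rank-one reading of part (iii) of Lemma~\ref{lem:Generalized_Kaczmarz_Scaled_Parseval}. In both cases the only substantive point is to keep the sesquilinearity convention straight, since the adjoint/Riesz bookkeeping is what produces the conjugate $\overline{\langle e_n,e_k\rangle}$ in the stated formula.

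For the recursion I would first record how the rank-one projections act: writing the inner product as linear in its second argument, $P_j=e_je_j^*$ gives $P_jx=\langle e_j,x\rangle\,e_j$, and hence $P_ne_j=\langle e_n,e_j\rangle\,e_n$. The base case is immediate from $\widetilde{Q}_1=\lambda P_1$ (see \eqref{eq:Q_1}): $\widetilde{Q}_1x=\lambda\langle e_1,x\rangle\,e_1=e_1\langle\lambda e_1,x\rangle$, so $h_1^{(\lambda)}=\lambda e_1$ by uniqueness of the Riesz representative. For $n\ge2$ I would apply \eqref{eq:Q_n}, substitute $\widetilde{Q}_jx=e_j\langle h_j^{(\lambda)},x\rangle$, and use $P_ne_j=\langle e_n,e_j\rangle e_n$ to obtain
\[
\widetilde{Q}_nx=\lambda\,e_n\Bigl[\langle e_n,x\rangle-\sum_{j=1}^{n-1}\langle e_n,e_j\rangle\,\langle h_j^{(\lambda)},x\rangle\Bigr].
\]
Comparing with $\widetilde{Q}_nx=e_n\langle h_n^{(\lambda)},x\rangle$ forces $\langle h_n^{(\lambda)},x\rangle$ to equal the bracketed functional (times $\lambda$) for every $x\in\mathcal{H}$.

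The key—and only delicate—step is then to realize this linear functional as $\langle h_n^{(\lambda)},\cdot\rangle$ for an explicit vector. Since the inner product is conjugate-linear in its first slot and $\lambda\in\R$, pulling the scalars $\langle e_n,e_j\rangle$ into the first argument introduces conjugates: $\langle e_n,e_j\rangle\langle h_j^{(\lambda)},x\rangle=\langle\,\overline{\langle e_n,e_j\rangle}\,h_j^{(\lambda)},x\rangle$. Collecting terms yields $h_n^{(\lambda)}=\lambda\bigl(e_n-\sum_{k=1}^{n-1}\overline{\langle e_n,e_k\rangle}\,h_k^{(\lambda)}\bigr)$, as claimed. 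I expect this conjugation bookkeeping to be the main place to be careful; everything else is linear algebra on rank-one operators.

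Finally, the expansion is essentially free once Lemma~\ref{lem:Generalized_Kaczmarz_Scaled_Parseval}(iii) is in hand. Under $\lambda$-effectiveness we have $\sum_{j=1}^N\widetilde{Q}_j=I-\widetilde{T}_N\to I$ in SOT, so $\sum_{j=1}^\infty\widetilde{Q}_jx=x$ for every $x\in\mathcal{H}$; substituting $\widetilde{Q}_jx=e_j\langle h_j^{(\lambda)},x\rangle$ gives $x=\sum_{j=1}^\infty e_j\langle h_j^{(\lambda)},x\rangle$ with convergence in norm. One could alternatively read this off the Parseval-type identity of Corollary~\ref{cor:scalar_parseval_identity}, but the SOT convergence of the partial sums is the cleaner route.
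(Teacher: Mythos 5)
Your proposal is correct and follows essentially the same route as the paper: the recursion is obtained by substituting the rank-one representation $\widetilde{Q}_j x = e_j\langle h_j^{(\lambda)},x\rangle$ into the operator identities \eqref{eq:Q_1} and \eqref{eq:Q_n}, with the conjugate $\overline{\langle e_n,e_k\rangle}$ arising exactly as you say from pulling scalars into the conjugate-linear first slot, and the expansion follows from $\sum_{j=1}^N\widetilde{Q}_j = I-\widetilde{T}_N \to I$ in SOT under $\lambda$-effectiveness (Lemma~\ref{lem:Generalized_Kaczmarz_Scaled_Parseval}(iii)). Your sesquilinearity bookkeeping and the appeal to uniqueness of the Riesz representative are precisely the intended content, so there is nothing to add.
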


\begin{theorem}[Sharp regret‑identity for the $\lambda$‑Kaczmarz, noiseless case]
\label{thm:sharp_lambda}
Assume
\begin{itemize}
 \item the data are \emph{noiseless}: \(y_t=X_tw^{\!*}\) (no noise in the observation),
 \item the measurement operator $X_t$ is a partial isometry ($X_t^{*}X_t=P_t$),
\item the sequence of projections $\{P_t\}_{t\ge 1}$ is \(\lambda\)-\emph{effective}, meaning
 \[
 \widetilde T_{t} \;=\;(I-\lambda P_t)\cdots(I-\lambda P_1)\;\xrightarrow[t\to\infty]{\text{SOT}} 0,
\]
where \(\widetilde T_{0}=I\).
\end{itemize}
Let \(\{w_t\}_{t\ge 0}\) be the generalized Kaczmarz iterates.
Then, we have
\[
\;
\sum_{t=1}^{\infty}\!
\bigl\|X_tw_{t-1}-y_t\bigr\|^{2}
\;=\;
\frac{1}{\lambda\,(2-\lambda)}\,\|w^{\!*}\|^{2}.
\;
\]
Consequently, for every finite horizon \(k\),
\[
\frac{1}{k}\sum_{t=1}^{k}
\bigl\|X_tw_{t-1}-y_t\bigr\|^{2}
\;\le\;
\frac{1}{\lambda(2-\lambda)\,k}\,\|w^{\!*}\|^{2},
\]
and the constant \(\frac{1}{\lambda(2-\lambda)}\) is the best‑possible (tight) one.
\end{theorem}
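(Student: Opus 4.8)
The plan is to reduce the total regret to the scaled Parseval identity of Corollary~\ref{cor:scalar_parseval_identity}. First I would introduce the error vectors $\epsilon_t := w^* - w_t$ and unroll the relaxed update $w_t = w_{t-1} + \lambda X_t^\dagger(y_t - X_t w_{t-1})$. With $y_t = X_t w^*$ and $P_t := X_t^\dagger X_t$ (Assumption~\ref{asmp:pinv}), a one-line computation gives $\epsilon_t = (I - \lambda P_t)\epsilon_{t-1}$; since $w_0 = 0$ forces $\epsilon_0 = w^*$, induction yields $\epsilon_t = \widetilde{T}_t\, w^*$ in the notation of Section~4. Note that this recursion uses only the projection relation $P_t = X_t^\dagger X_t$ and does not yet invoke the partial-isometry hypothesis.

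Next I would rewrite each residual in terms of the operators $\widetilde{Q}_t$. Because $y_t = X_t w^*$, the per-step residual is $X_t w_{t-1} - y_t = X_t(w_{t-1} - w^*) = -X_t \epsilon_{t-1}$. The partial-isometry hypothesis $X_t^* X_t = P_t$ then gives $\|X_t u\|^2 = \langle X_t^* X_t u, u\rangle = \langle P_t u, u\rangle = \|P_t u\|^2$ for every $u$, so $\|X_t w_{t-1} - y_t\|^2 = \|P_t \epsilon_{t-1}\|^2 = \|P_t \widetilde{T}_{t-1} w^*\|^2$. The algebraic link is the definition $\widetilde{Q}_t = \lambda P_t \widetilde{T}_{t-1}$ in~\eqref{eq:Qn_1based}, which rearranges to $P_t \widetilde{T}_{t-1} = \tfrac{1}{\lambda}\widetilde{Q}_t$; hence $\|X_t w_{t-1} - y_t\|^2 = \tfrac{1}{\lambda^2}\|\widetilde{Q}_t w^*\|^2$.

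Summing over $t$ and invoking $\lambda$-effectiveness through Corollary~\ref{cor:scalar_parseval_identity}, namely $\sum_{t=1}^\infty \|\widetilde{Q}_t w^*\|^2 = \tfrac{\lambda}{2-\lambda}\|w^*\|^2$, I obtain $\sum_{t=1}^\infty \|X_t w_{t-1} - y_t\|^2 = \tfrac{1}{\lambda^2}\cdot\tfrac{\lambda}{2-\lambda}\|w^*\|^2 = \tfrac{1}{\lambda(2-\lambda)}\|w^*\|^2$, which is the claimed identity. The finite-horizon estimate then follows instantly: every summand is nonnegative, so $\sum_{t=1}^k \le \sum_{t=1}^\infty$, and dividing by $k$ produces the stated $O(1/k)$ bound.

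Since all the heavy lifting is already contained in Lemma~\ref{lem:Generalized_Kaczmarz_Scaled_Parseval} and its corollary, I expect no genuine obstacle; the only point demanding care is bookkeeping the powers of $\lambda$, ensuring that the factor $1/\lambda^2$ from inverting $\widetilde{Q}_t = \lambda P_t \widetilde{T}_{t-1}$ combines with the $\lambda/(2-\lambda)$ of the corollary to give exactly $1/[\lambda(2-\lambda)]$. For the tightness assertion I would stress that the infinite sum is an \emph{equality}, not merely a bound: for every nonzero $w^*$ the total regret equals $\tfrac{1}{\lambda(2-\lambda)}\|w^*\|^2$, so no constant smaller than $1/[\lambda(2-\lambda)]$ can hold in the displayed identity, establishing that it is best possible.
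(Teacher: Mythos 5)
Your proposal is correct and follows essentially the same route as the paper's proof: the error recursion $\epsilon_t = (I-\lambda P_t)\epsilon_{t-1} = \widetilde{T}_t\,w^*$, the partial-isometry identity $\|X_t u\| = \|P_t u\|$, the rewriting $P_t\widetilde{T}_{t-1} = \tfrac{1}{\lambda}\widetilde{Q}_t$, and the scaled Parseval identity of Corollary~\ref{cor:scalar_parseval_identity}, followed by truncation for the finite-horizon bound. Your closing observation that tightness follows because the infinite sum is an exact equality matches the paper's reading of ``best possible'' as well.
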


\begin{proof}
Because this is the noiseless setting, the update equation for the generalized Kaczmarz iterates is given by \(
 w_t
 =w_{t-1}
 +\lambda P_t\bigl(w^{\!*}-w_{t-1}\bigr),\; \text{with $w_0=0$}.
\) Then, using the error \(\epsilon_{t-1}=w^{\!*}-w_{t-1}\), we have
\[
\epsilon_{t-1}
\;=\;
\widetilde T_{t-1}\,w^{\!*},
\qquad
\text{so}\quad
P_t\epsilon_{t-1}
\;=\;
P_t\widetilde T_{t-1}\,w^{\!*}
\;=\;
\frac{1}{\lambda}\,\widetilde Q_t\,w^{\!*},
\]
where \(\widetilde Q_t=\lambda P_t\widetilde T_{t-1}\) by \eqref{eq:Q_1}.

Hence the residual on round \(t\) is
\(
r_t
=X_tw_{t-1}-y_t
=X_t(w_{t-1}-w^{\!*})
=-\,X_t\epsilon_{t-1}.
\)
From the assumption that \(X_t\) is a partial isometry with \(X_t^{*}X_t=P_t\), we have
\[
\|r_t\|^{2}
=\|X_t\epsilon_{t-1}\|^{2}
=\langle X_t^{*}X_t\epsilon_{t-1},\,\epsilon_{t-1}\rangle
=\langle P_t\epsilon_{t-1},\,\epsilon_{t-1}\rangle
=\|P_t\epsilon_{t-1}\|^{2}
=\frac{1}{\lambda^{2}}\bigl\|\widetilde Q_tw^{\!*}\bigr\|^{2}.
\]
Since $\{P_t\}_{t\ge 1}$ is \(\lambda\)-effective, Corollary~\ref{cor:scalar_parseval_identity} applied to the sequence $(P_t)_{t\ge 1}$ gives
\(
\sum_{t=1}^{\infty}\|\widetilde Q_t\,w^{\!*}\|^{2}
=\frac{\lambda}{2-\lambda}\,\|w^{\!*}\|^{2},
\)
so we obtain
\[
\sum_{t=1}^{\infty}\|r_t\|^{2}
=\frac{1}{\lambda^{2}}
\sum_{t=1}^{\infty}\|\widetilde Q_t\,w^{\!*}\|^{2}
=\frac{1}{\lambda(2-\lambda)}\,\|w^{\!*}\|^{2}.
\]
This is exactly the claimed identity in the theorem. Note that for $\lambda=1$ the identity reduces to $\sum_{t\ge1}\|r_t\|^{2}=\|w^{\!*}\|^{2}$, recovering the sharp case of Theorem~\ref{thm:kaczmarz}; the finite-\(k\) bound follows by truncating the series.
\end{proof}

\begin{remark}[Interpretation of regret in noiseless case as the necessary ``Cost of Learning'']
One might find it counterintuitive that an \textbf{effective} system---one that successfully converges to the solution---is precisely the one that maximizes the total accumulated regret. This seemingly  paradoxical situation can be explained by reframing the total accumulated regret, whose sum of squares is $\sum_{t=1}^{\infty} \norm{X_t w_{t-1} - y_t}^2$, not as a measure of failure, but as the necessary \textbf{cost of learning}.
The quantity $\frac{1}{\lambda(2-\lambda)}\norm{w^*}^2$ can be interpreted as the total ``difficulty budget'' of the problem, representing the amount of ``work'' the Kaczmarz algorithm with step size $\lambda$ must perform to move from the origin $w_0 = 0$ to the true solution $w^*$. The learning process, which consists of a sequence of projections, expends this budget over time.
\begin{itemize}
\item An \textbf{effective} system of tasks, represented by a sequence of projections $\{P_t\}_{t\ge 1}$, is one that is sufficiently thorough to probe the problem from enough different ``angles.'' This forces the algorithm to navigate the full complexity of the parameter space to reveal the underlying truth $w^*$. To successfully complete this comprehensive learning process, the algorithm \emph{must} fully expend its difficulty budget. Thus, as shown in Theorem~\ref{thm:sharp_lambda}, the total accumulated sum of squared residuals necessarily equals $\frac{1}{\lambda(2-\lambda)}\norm{w^*}^2$. A high cumulative regret is therefore a sign that a difficult problem has been solved completely.

\item Conversely, an \textbf{ineffective} system lacks this thoroughness or comprehensiveness. The algorithm may learn quickly from the few perspectives offered and then stagnate, failing to converge to $w^*$ (i.e., $\norm{w_k - w^*} \not\to 0$). Because the learning journey is incomplete, the algorithm does not expend the full difficulty budget, and its total accumulated regret is strictly less than the maximum possible value. In this context, a low total regret is an evidence of the algorithm's failure in the noiseless setting.
\end{itemize}
In essence, an algorithm that learns from a limited set of only easy or superficial tasks may appear to have low cumulative regret, but it does so at the cost of failing its primary objective --- learning the truth $w^*$. True convergence requires the problem's full complexity, and the total regret is simply a measure of this productive struggle. See \cite{hazan2016introduction, russo2018tutorial} for a similar interpretation of regret type quantities.
\end{remark}

\subsection{Noisy Kaczmarz regret bound with step size
\texorpdfstring{$\lambda\in(0,2)$}{lambda in (0,2)}}

In this section, we consider the noisy Kaczmarz iteration with step size \(\lambda\). Let \(\{w_t\}_{t=0}^k\) be defined by
\[
w_t = w_{t-1} + \lambda\,X_t^\dagger\Bigl( y_t - X_tw_{t-1}\Bigr),\qquad w_0=0,
\]
with observations
\[
y_t = X_tw^* + \eta_t,
\]
and, as in the noisy regular Kaczmarz case, suppose that the noise vectors \(\{\eta_t\}\) are i.i.d.\ with zero mean and variance $\sigma^2$ (so, $\mathbb{E}\|\eta_t\|^2 = \sigma^2$).

\medskip

\begin{theorem}[Noisy Kaczmarz regret bound with step size \(\lambda\)]
\label{thm:noisy_lambda_corrected}
Under the above conditions, Assumption~\ref{asmp:task_bound}, and Assumption~\ref{asmp:pinv} with $\sup_{t} \|X_t^\dagger\|^2 = C < \infty$,
the average regret satisfies
\begin{align*}
\frac{1}{k}\sum_{t=1}^k \mathbb{E}\Bigl[\|X_tw_{t-1} - y_t\|^2\Bigr]
&\le \frac{2\,\|w^*\|^2}{\lambda(2-\lambda)\,k} + \left( \frac{2\lambda C}{2-\lambda} + 1 \right)\sigma^2 \\
&= \frac{2\,\|w^*\|^2}{\lambda(2-\lambda)\,k} + \mathcal{O}(\sigma^2).
\end{align*}

That is, the average regret consists of a decaying term proportional to \(1/k\) (with constant scaling \(1/(\lambda(2-\lambda))\) depending $\lambda$) plus a noise contribution of order \(\sigma^2\).
\end{theorem}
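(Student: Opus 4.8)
The plan is to mirror the argument of Theorem~\ref{thm:Regret Bound for the Noisy Kaczmarz Algorithm}, replacing every factor $(I-P_t)$ by $(I-\lambda P_t)$ and supplying the Parseval-type control from Lemma~\ref{lem:Generalized_Kaczmarz_Scaled_Parseval}(i) in place of the $\lambda=1$ identity. Unlike Theorem~\ref{thm:sharp_lambda}, here we need neither $\lambda$-effectiveness nor the partial-isometry hypothesis: the only two ingredients are the inequality $\|X_t u\|\le\|P_t u\|$ from \eqref{eq:Data_Projection_Bound} and the \emph{one-sided} consequence of Lemma~\ref{lem:Generalized_Kaczmarz_Scaled_Parseval}(i) obtained by discarding the nonnegative term $\widetilde T_N^*\widetilde T_N$, namely $\sum_{n=1}^N\|\widetilde Q_n u\|^2\le\tfrac{\lambda}{2-\lambda}\|u\|^2$ for every $u$ and every $N$.

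First I would set up the error recursion. Writing $\epsilon_t=w^*-w_t$ and substituting $y_t=X_tw^*+\eta_t$ into the relaxed update gives $\epsilon_t=(I-\lambda P_t)\epsilon_{t-1}-\lambda X_t^\dagger\eta_t$, with $P_t=X_t^\dagger X_t$ and $\epsilon_0=w^*$. Unrolling and left-multiplying by $X_t$ yields $X_t\epsilon_{t-1}=A_t-B_t$, where the noiseless term is $A_t:=X_t\widetilde T_{t-1}w^*$ and the noise term is $B_t:=\lambda X_t\sum_{i=1}^{t-1}(I-\lambda P_{t-1})\cdots(I-\lambda P_{i+1})X_i^\dagger\eta_i$. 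Applying $\|u-v\|^2\le 2\|u\|^2+2\|v\|^2$ and summing over $t$ reduces the task to bounding $\sum_t\|A_t\|^2$ and $\sum_t\mathbb E\|B_t\|^2$ separately.

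For the noiseless term I would combine $\|X_t u\|\le\|P_t u\|$ with the identity $P_t\widetilde T_{t-1}=\tfrac1\lambda\widetilde Q_t$ from \eqref{eq:Q_1} to obtain $\|A_t\|^2\le\tfrac1{\lambda^2}\|\widetilde Q_t w^*\|^2$; the one-sided Parseval bound with $u=w^*$ then gives $\sum_t\|A_t\|^2\le\tfrac1{\lambda(2-\lambda)}\|w^*\|^2$, the factor $\tfrac1{\lambda^2}$ merging with $\tfrac{\lambda}{2-\lambda}$ to produce the advertised constant. For the noise term I would introduce the partial operators $\widetilde Q_t^{\,i}:=\lambda P_t(I-\lambda P_{t-1})\cdots(I-\lambda P_{i+1})$, so that $\|B_t\|^2\le\|\sum_{i<t}\widetilde Q_t^{\,i}X_i^\dagger\eta_i\|^2$; the i.i.d.\ zero-mean hypothesis annihilates the cross terms in expectation, leaving $\mathbb E\|B_t\|^2\le\sum_{i<t}\mathbb E\|\widetilde Q_t^{\,i}X_i^\dagger\eta_i\|^2$.

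The one step that is more than bookkeeping---and the main thing to get right---is applying Lemma~\ref{lem:Generalized_Kaczmarz_Scaled_Parseval}(i) to the \emph{shifted} projection sequence $\{P_{i+1},P_{i+2},\dots\}$: for each fixed $i$ the operators $\widetilde Q_t^{\,i}$ (for $t>i$) are exactly the $\widetilde Q$-operators built from that tail, so the same one-sided inequality yields $\sum_{t=i+1}^k\|\widetilde Q_t^{\,i}v\|^2\le\tfrac{\lambda}{2-\lambda}\|v\|^2$. Taking $v=X_i^\dagger\eta_i$, invoking $\|X_i^\dagger\|^2\le C$, and switching the order of the double sum gives $\sum_t\mathbb E\|B_t\|^2\le\tfrac{\lambda C(k-1)}{2-\lambda}\sigma^2$. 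Finally I would pass from $\sum_t\mathbb E\|X_t\epsilon_{t-1}\|^2$ to the regret via $r_t=-X_t\epsilon_{t-1}-\eta_t$: since $\eta_t$ is independent of $\epsilon_{t-1}$ with mean zero, the cross term vanishes and $\mathbb E\|r_t\|^2=\mathbb E\|X_t\epsilon_{t-1}\|^2+\sigma^2$. Summing, dividing by $k$, and using $(k-1)/k\le1$ to collapse the noise coefficient to $\tfrac{2\lambda C}{2-\lambda}+1$ delivers the stated average-regret bound; the only genuine care required throughout is tracking the $\lambda$-factors so that they assemble into $\tfrac{1}{\lambda(2-\lambda)}$ and $\tfrac{2\lambda C}{2-\lambda}$ rather than their $\lambda=1$ specializations.
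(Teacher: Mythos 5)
Your proposal is correct and follows essentially the same route as the paper's proof: the identical decomposition $X_t\epsilon_{t-1}=A_t-B_t$, the bound $\|A_t\|^2\le\tfrac{1}{\lambda^2}\|\widetilde Q_t w^*\|^2$, the partial operators $\widetilde Q_t^{\,i}$ applied to the shifted tail sequence with the one-sided consequence of Lemma~\ref{lem:Generalized_Kaczmarz_Scaled_Parseval}(i), and the final passage to the regret via the independence of $\eta_t$ from $\epsilon_{t-1}$. Your explicit remark that the one-sided bound comes from discarding the nonnegative operator $\widetilde T_N^*\widetilde T_N$ (so that neither $\lambda$-effectiveness nor the partial-isometry hypothesis is needed) makes precise a step the paper uses implicitly, but it is the same argument.
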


\medskip

\begin{proof}[Proof of Theorem \ref{thm:noisy_lambda_corrected}]
We proceed in four steps, applying a number of identities from the preliminaries.

\medskip

\textbf{Step 1. Error decomposition.}\\[1mm]
Define the error \(\epsilon_t = w^* - w_t\). The update rule is given by
\[
w_t = w_{t-1} + \lambda\,X_t^\dagger\Bigl( X_tw^* + \eta_t - X_tw_{t-1}\Bigr).
\]
Letting \(P_t = X_t^\dagger X_t\) as before, the error recursion becomes
\[
\epsilon_t = w^* - w_t = w^* - \left( w_{t-1} + \lambda\,X_t^\dagger(X_tw^* + \eta_t - X_tw_{t-1}) \right),
\]
which simplifies to
\[
\epsilon_t = \Bigl(I-\lambda\,P_t\Bigr)\epsilon_{t-1} - \lambda\,X_t^\dagger\,\eta_t.
\]

Define the \emph{noiseless component}
\[
A_t = X_t\Bigl[(I-\lambda\,P_{t-1})\cdots (I-\lambda\,P_1)\Bigr]w^*,
\]
and the \emph{noise component}
\[
B_t = X_t\sum_{i=1}^{t-1} \Bigl[(I-\lambda\,P_{t-1})\cdots (I-\lambda\,P_{i+1})\Bigr]\lambda\,X_i^\dagger\,\eta_i.
\]

We unroll the recursion $\epsilon_t = \Bigl(I-\lambda\,P_t\Bigr)\epsilon_{t-1} - \lambda\,X_t^\dagger\,\eta_t$ and multiply by $X_t$ to obtain
\[
X_t\,\epsilon_{t-1} = A_t - B_t.
\]

Then, by the inequality \( \|u-v\|^2\le 2\|u\|^2+2\|v\|^2 \), we have
\[
\|X_t\epsilon_{t-1}\|^2 \le 2\|A_t\|^2 + 2\|B_t\|^2.
\]

\medskip

\textbf{Step 2. Bound on the noiseless component.}\\[1mm]
As before, we have
\begin{align*}
\|A_t\|^2 &= \Bigl\|X_t(I-\lambda\,P_{t-1})\cdots (I-\lambda\,P_1)w^*\Bigr\|^2 \\
&\le \Bigl\|P_t(I-\lambda\,P_{t-1})\cdots (I-\lambda\,P_1)w^*\Bigr\|^2,
\end{align*} since \(\|X_t u\|\le \|P_t u\|\), a consequence of Assumption~\ref{asmp:task_bound}.
Since $
\widetilde{Q}_t = \lambda\,P_t(I-\lambda P_{t-1})\cdots (I-\lambda P_1)$, we have
\[
P_t(I-\lambda\,P_{t-1})\cdots (I-\lambda\,P_1)w^* = \frac{1}{\lambda}\widetilde{Q}_t w^*.
\]
Hence,
\[
\|A_t\|^2 \le \frac{1}{\lambda^2}\|\widetilde{Q}_t w^*\|^2.
\]
Therefore, the sum over the first \(k\) terms and applying Part (i) of Lemma~\ref{lem:Generalized_Kaczmarz_Scaled_Parseval} yield,

\[
\sum_{t=1}^k \|\widetilde{Q}_t w^*\|^2 \le \frac{\lambda}{2-\lambda}\|w^*\|^2.
\]

Hence, 
\[
\sum_{t=1}^k \|A_t\|^2 \le \frac{1}{\lambda^2} \sum_{t=1}^k \|\widetilde{Q}_t w^*\|^2 \le \frac{1}{\lambda(2-\lambda)}\|w^*\|^2.
\]

\medskip

\textbf{Step 3. Bound on the noise component.}\\[1mm]
For \(1\le i < t\le k\), we bound \(B_t\). By definition,
\[
B_t = X_t\sum_{j=1}^{t-1} \Bigl[(I-\lambda\,P_{t-1})\cdots (I-\lambda\,P_{j+1})\Bigr]\lambda\,X_j^\dagger\,\eta_j.
\]
Using \(\|X_t u\|\le \|P_t u\|\), we have
\[
\|B_t\| \le \Bigl\|\;P_t\sum_{j=1}^{t-1} \Bigl[(I-\lambda\,P_{t-1})\cdots (I-\lambda\,P_{j+1})\Bigr]\lambda\,X_j^\dagger\,\eta_j\Bigr\|.
\]
For convenience, define the \emph{partial noise operators} relative to the index \(j\) as
\[
\widetilde{Q}_t^j := \lambda\,P_t\Bigl[(I-\lambda\,P_{t-1})\cdots (I-\lambda\,P_{j+1})\Bigr],\quad 1\le j < t.
\]
Then,
\[
\|B_t\| \le \Bigl\|\sum_{j=1}^{t-1} \widetilde{Q}_t^j \bigl(X_j^\dagger\,\eta_j\bigr)\Bigr\|.
\]
Taking squares and expectations (using independence and zero-mean of \(\eta_j\), assuming \(\tau\) is independent of \(\eta\), so cross-terms vanish), we obtain
\[
\mathbb{E}\|B_t\|^2 \le \mathbb{E}\Biggl[\sum_{j=1}^{t-1} \|\widetilde{Q}_t^j (X_j^\dagger\,\eta_j)\|^2\Biggr] = \sum_{j=1}^{t-1} \mathbb{E}\Bigl[\|\widetilde{Q}_t^j (X_j^\dagger\,\eta_j)\|^2\Bigr].
\]
Summing over \(t=1,\dots,k\) followed by switching the double summation gives
\[
\sum_{t=1}^k \mathbb{E}\|B_t\|^2 \le \sum_{t=2}^k \sum_{j=1}^{t-1} \mathbb{E}\Bigl[\|\widetilde{Q}_t^j (X_j^\dagger\,\eta_j)\|^2\Bigr] = \sum_{j=1}^{k-1} \sum_{t=j+1}^k \mathbb{E}\Bigl[\|\widetilde{Q}_t^j (X_j^\dagger\,\eta_j)\|^2\Bigr].
\]
Now consider the inner sum for a fixed \(j\). Recall that the operators \(\widetilde{Q}_t^j\) for \(t > j\) correspond to applying \(\lambda P_t\) followed by products of \((I-\lambda P)\) starting from index \(t-1\) down to \(j+1\), as in the previous section.

Therefore, by Part (i) in Lemma~\ref{lem:Generalized_Kaczmarz_Scaled_Parseval} again, we have
\[
\sum_{t=j+1}^k \|\widetilde{Q}_t^j (X_j^\dagger\,\eta_j)\|^2 \le \frac{\lambda}{2-\lambda}\|X_j^\dagger\,\eta_j\|^2.
\]
After substituting this bound into the sum for \(\mathbb{E}\|B_t\|^2\), we have
\[
\sum_{t=1}^k \mathbb{E}\|B_t\|^2 \le \sum_{j=1}^{k-1} \mathbb{E}\Biggl[ \sum_{t=j+1}^k \|\widetilde{Q}_t^j (X_j^\dagger\,\eta_j)\|^2 \Biggr] \le \sum_{j=1}^{k-1} \mathbb{E}\Biggl[ \frac{\lambda}{2-\lambda}\|X_j^\dagger\,\eta_j\|^2 \Biggr].
\]
Since \( C = \max_l \|X_l^\dagger\|^2 \) by Assumption~\ref{asmp:pinv} and \(\mathbb{E}\|\eta_j\|^2 = \sigma^2\), we have \(\mathbb{E}\|X_j^\dagger\,\eta_j\|_2^2 \le \|X_j^\dagger\|^2\mathbb{E}[ \|\eta_j\|_2^2] = \|X_j^\dagger\|^2 \sigma^2 \le C\sigma^2\). Thus,
\[
\sum_{t=1}^k \mathbb{E}\|B_t\|^2 \le \sum_{j=1}^{k-1} \frac{\lambda}{2-\lambda} C\sigma^2 = \frac{\lambda C (k-1)}{2-\lambda}\,\sigma^2.
\]

\medskip

\textbf{Step 4. Combining the bounds.}\\[1mm]
From Step~1 we have
\[
\|X_t\epsilon_{t-1}\|^2 \le 2\|A_t\|^2 + 2\|B_t\|^2.
\]
Summing over \(t=1,\dots,k\) and taking expectations give
\[
\sum_{t=1}^k \mathbb{E}\|X_t\epsilon_{t-1}\|^2 \le 2\sum_{t=1}^k \mathbb{E}\|A_t\|^2 + 2\sum_{t=1}^k \mathbb{E}\|B_t\|^2 = 2\sum_{t=1}^k \|A_t\|^2 + 2\sum_{t=1}^k \mathbb{E}\|B_t\|^2,
\]
where the equality is from $A_t$ not depending on the noise sequence $\{\eta_t\}$.

Using the bounds from Steps~2 and~3,
\[
\sum_{t=1}^k \mathbb{E}\|X_t\epsilon_{t-1}\|^2 \le \frac{2}{\lambda(2-\lambda)}\|w^*\|^2 + \frac{2\lambda C(k-1)}{2-\lambda}\,\sigma^2.
\]
Denote the residual for task \(t\) by
\[
r_t = X_tw_{t-1} - y_t = X_tw_{t-1} - (X_tw^* + \eta_t) = -X_t\epsilon_{t-1} - \eta_t.
\]
Since \(\epsilon_{t-1}\) depends on \(\eta_1, \dots, \eta_{t-1}\), so the mean-zero i.i.d. noise \(\eta_t\) is independent of \(\epsilon_{t-1}\). Hence, the cross term vanishes in expectation:
\[
\mathbb{E}\|r_t\|^2 = \mathbb{E}\|-X_t\epsilon_{t-1} - \eta_t\|^2 = \mathbb{E}\|X_t\epsilon_{t-1}\|^2 + \mathbb{E}\|\eta_t\|^2 + 2\mathbb{E}[\text{Re}\langle X_t\epsilon_{t-1}, \eta_t \rangle]
\]
\[
\mathbb{E}\|r_t\|^2 = \mathbb{E}\|X_t\epsilon_{t-1}\|^2 + \sigma^2.
\]
Summing over \(t=1,\dots,k\):
\[
\sum_{t=1}^k \mathbb{E}\|r_t\|^2 = \sum_{t=1}^k \mathbb{E}\|X_t\epsilon_{t-1}\|^2 + k \sigma^2.
\]
Substituting the bound derived above:
\[
\sum_{t=1}^k \mathbb{E}\|r_t\|^2 \le \Biggl( \frac{2}{\lambda(2-\lambda)}\|w^*\|^2 + \frac{2\lambda C(k-1)}{2-\lambda}\,\sigma^2 \Biggr) + k \sigma^2.
\]
Dividing by \(k\) yields the final average regret bound:
\[
\frac{1}{k}\sum_{t=1}^k \mathbb{E}\|r_t\|^2 \le \frac{2\|w^*\|^2}{\lambda(2-\lambda)k} + \left( \frac{2\lambda C(1-1/k)}{2-\lambda} + 1 \right)\sigma^2.
\]
For large \(k\), the noise term approaches \(\left( \frac{2\lambda C}{2-\lambda} + 1 \right)\sigma^2\). Thus, we can write the bound as
\[
\frac{1}{k}\sum_{t=1}^k \mathbb{E}\|r_t\|^2 \le \frac{2\,\|w^*\|^2}{\lambda(2-\lambda)\,k} + \mathcal{O}(\sigma^2).
\]
This completes the proof of the theorem.
\end{proof}

\begin{corollary}[Noise–aware choice of the step size $\lambda$]
In the noisy bound of Theorem~\ref{thm:noisy_lambda_corrected}, the $\lambda$–dependence enters through
\[
\frac{1}{k}\sum_{t=1}^k \mathbb{E}\Bigl[\|X_tw_{t-1} - y_t\|^2\Bigr]
\;\lesssim\;
\underbrace{\frac{2\|w^*\|^2}{\lambda(2-\lambda)\,k}}_{\text{learning term}}
\;+\;
\underbrace{\Bigl(\frac{2\lambda C}{2-\lambda}+1\Bigr)\sigma^2}_{\text{noise floor}}.
\]
Ignoring the $\,+1\,$ (which does not depend on $\lambda$), minimizing the surrogate
\[
f(\lambda)\;=\;\frac{a}{\lambda(2-\lambda)}\;+\;\frac{2b\,\lambda}{2-\lambda},
\qquad
a:=\frac{2\|w^*\|^2}{k},\ \ b:=C\sigma^2,
\]
over $\lambda\in(0,2)$ yields a unique global minimizer
\[
\lambda^\star
\;=\;
\frac{\sqrt{a^2+8ab}-a}{4b}
\;=\;
\frac{\sqrt{1+8\beta}-1}{4\beta},
\qquad
\beta:=\frac{b}{a}=\frac{C\sigma^2\,k}{2\|w^*\|^2}.
\]
Hence $\lambda$ trades bias for variance in a simple way governed by the dimensionless ratio $\beta$:
\[
\lambda^\star(\beta)
=\begin{cases}
1-2\beta+O(\beta^2), & \beta\ll 1 \quad\text{(low noise or short horizon): take the classical step } \lambda\approx 1;\\[2mm]
\displaystyle \frac{1}{\sqrt{2}}\beta^{-1/2}+o\!\bigl(\beta^{-1/2}\bigr),
& \beta\gg 1 \quad\text{(high noise or long horizon): under-relax, }\lambda\downarrow 0.
\end{cases}
\]
Two notable implications:
(i) the optimizer always lies in $(0,1]$—our bound never favors over-relaxation under noise;
(ii) as $k$ grows, $\beta$ grows linearly, so the optimal $\lambda$ decays like $k^{-1/2}$.
\end{corollary}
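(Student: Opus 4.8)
The plan is to treat this purely as a one-variable calculus problem: the first displayed inequality is just a restatement of Theorem~\ref{thm:noisy_lambda_corrected} with the abbreviations $a = 2\|w^*\|^2/k$ and $b = C\sigma^2$, so the entire content reduces to minimizing the surrogate $f(\lambda) = \frac{a}{\lambda(2-\lambda)} + \frac{2b\lambda}{2-\lambda}$ over the open interval $(0,2)$, with $a,b>0$. First I would record the boundary behaviour: the first summand blows up as $\lambda\downarrow 0$ and both summands blow up as $\lambda\uparrow 2$, so $f(\lambda)\to+\infty$ at each endpoint. Since $f$ is smooth on $(0,2)$, it therefore attains a global minimum at some interior critical point, and it suffices to locate the critical points.

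Next I would differentiate. A direct computation gives $f'(\lambda) = \frac{-2a(1-\lambda)}{\lambda^2(2-\lambda)^2} + \frac{4b}{(2-\lambda)^2}$, and since $\lambda^2(2-\lambda)^2 > 0$ on $(0,2)$, multiplying through shows that the stationarity equation $f'(\lambda)=0$ is equivalent to the quadratic $2b\lambda^2 + a\lambda - a = 0$. Its discriminant $a^2+8ab$ is positive and, by Vieta, the product of its roots equals $-a/(2b) < 0$, so it has exactly one positive root, namely $\lambda^\star = \frac{\sqrt{a^2+8ab}-a}{4b}$; factoring $a$ out of the square root and using $b = a\beta$ recovers the claimed $\beta$-form $\frac{\sqrt{1+8\beta}-1}{4\beta}$. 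Because this positive root is the only critical point in $(0,2)$ (the other root is negative) and $f\to+\infty$ at both ends, $\lambda^\star$ is the \emph{unique global minimizer}, establishing the first assertion.

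For implication (i) I would localize $\lambda^\star$ via the sign of $q(\lambda):=2b\lambda^2+a\lambda-a$ at integer endpoints: $q(0)=-a<0$ while $q(1)=2b>0$, and since $q$ opens upward its positive root lies strictly in $(0,1)$; thus $\lambda^\star\in(0,1)$ for every $b>0$, with the boundary value $1$ attained only in the noiseless limit $b\downarrow 0$, which is exactly the statement that over-relaxation is never favored. The two asymptotic regimes then follow from Taylor expansion of $g(\beta):=\frac{\sqrt{1+8\beta}-1}{4\beta}$: expanding $\sqrt{1+8\beta}=1+4\beta-8\beta^2+O(\beta^3)$ as $\beta\downarrow 0$ yields $\lambda^\star = 1-2\beta+O(\beta^2)$, while factoring out $\sqrt{8\beta}$ as $\beta\to\infty$ yields $\lambda^\star = \frac{1}{\sqrt2}\beta^{-1/2}+o(\beta^{-1/2})$. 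Implication (ii) is then immediate, since $\beta = C\sigma^2 k/(2\|w^*\|^2)$ is linear in $k$, so the large-$\beta$ asymptotic forces $\lambda^\star$ to decay like $k^{-1/2}$.

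Since every step is elementary, there is no genuine obstacle here; the only points requiring care are verifying that clearing the factor $\lambda^2(2-\lambda)^2$ neither creates nor destroys roots on the open interval—so that the quadratic captures \emph{exactly} the interior critical points—and upgrading the unique stationary point to a bona fide \emph{global} minimizer. For the latter I would lean on the endpoint blow-up argument together with uniqueness of the critical point, rather than a second-derivative test, since this avoids checking convexity of $f$ (which is not needed) and makes the global conclusion transparent.
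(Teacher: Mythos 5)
Your proposal is correct and follows essentially the same route as the paper: the corollary's statement itself encodes the intended argument (restate Theorem~\ref{thm:noisy_lambda_corrected} via $a,b$, reduce $f'(\lambda)=0$ to the quadratic $2b\lambda^2+a\lambda-a=0$, take its unique positive root, and Taylor-expand in $\beta$), and the paper supplies no further proof. Your added care—verifying that clearing the strictly positive factor $\lambda^2(2-\lambda)^2$ preserves the critical-point set, using endpoint blow-up plus uniqueness of the critical point to get globality, and locating $\lambda^\star\in(0,1)$ via the signs $q(0)=-a<0$, $q(1)=2b>0$—correctly fills in exactly the details the paper leaves implicit.
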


For the benefit of the readers, we have included the following references which cover regret bounds and a priori estimates within several applications in machine learning and in engineering; see \cite{MR4906562, MR4883940, MR4635158, MR4531092, MR4211383, MR4930266,MR4855849,MR4796482,MR4690295,MR4549442,MR4396309,MR4066312} and the papers cited there.

\section{Generalized Kaczmarz Algorithm and $\lambda$-Effectiveness}

In this section we consider the noiseless generalized Kaczmarz iteration with relaxation parameter $\lambda\in(0,2)$ given by
\[
w_t = w_{t-1} + \lambda\,X_t^\dagger\Bigl(X_tw^* - X_tw_{t-1}\Bigr), \qquad w_0 = 0, \quad \text{for } t \ge 1.
\]
Defining the projections $P_t := X_t^\dagger X_t$, the update can be written as
\[
w_t = w_{t-1} + \lambda\,P_t\bigl(w^* - w_{t-1}\bigr).
\]
We now transition from the algorithm's time index $t$ to a canonical index $n$ to analyze the properties of the sequence of projections $\{P_n\}_{n\ge 1}$ itself. We assume that $P_n$ are rank-one, i.e.,
\[
P_n = e_n e_n^*, \quad \|e_n\|=1, \quad \text{for } n \ge 1.
\]
Recall that we define the full operators corresponding to this sequence as
\[
\widetilde{T}_0 = I, \qquad \widetilde{T}_n = (I-\lambda P_n)(I-\lambda P_{n-1})\cdots (I-\lambda P_1), \quad n \ge 1,
\]
and
\[
\widetilde{Q}_n = \lambda\,P_n\,\widetilde{T}_{n-1}, \quad n \ge 1.
\]

Since the error after $n$ steps is defined by $\epsilon_n = w^* - w_n$, starting from $\epsilon_0 = w^* - w_0 = w^*$, the error recursion $\epsilon_n = (I - \lambda P_n) \epsilon_{n-1}$ yields:
\[
\epsilon_n = (I-\lambda P_n)(I-\lambda P_{n-1})\cdots (I-\lambda P_1) \epsilon_0 = \widetilde{T}_n w^*.
\]
If the system is $\lambda$-effective ($\widetilde{T}_n\to 0$ strongly as $n \to \infty$), then $\epsilon_n \to 0$, meaning that $w_n \to w^*$ in the $\ell_2$ norm.

As established previously in Corollary~\ref{cor:rank1_1based}, the operators $\widetilde{Q}_j$ can be expressed as
\[
\widetilde{Q}_j\,w^* = e_j\langle h_j^{(\lambda)},w^*\rangle,
\]
where the auxiliary sequence $\{h_j^{(\lambda)}\}_{j \ge 1}$ is defined recursively by
\begin{align}
\label{eq:h_recursion_0based_section_5}
h_1^{(\lambda)} = \lambda\,e_1,\qquad
h_n^{(\lambda)} = \lambda\Bigl(e_n-\sum_{k=1}^{n-1}\overline{\langle e_n,e_k\rangle}\,h_k^{(\lambda)}\Bigr),\quad n\ge 2.
\end{align}

Note that this recurrence relation is consistent with the counterpart in \cite{jorgensen2020kaczmarz} for $\lambda = 1$ for the auxiliary sequence for the regular Kaczmarz algorithm. There,
\[
h_n^{(\lambda)} = e_n-\sum_{k=1}^{n-1}\langle e_k,e_n\rangle\,h_k^{(\lambda)},\quad n\ge 2,
\] since $\overline{\langle e_n,e_k\rangle} = \langle e_k,e_n\rangle$.
Corollary~\ref{cor:rank1_1based} also states strong convergence for the rank-one case, namely, that $\lambda$-effectiveness implies
\[
w^* = \sum_{j=1}^{\infty} e_j \langle h_j^{(\lambda)}, w^* \rangle.
\]
This gives an expansion of $w^*$ in terms of a possibly non-orthogonal sequence $\{e_j\}_{j\ge 1}$, with coefficients determined by the vectors $\{h_j^{(\lambda)}\}_{j \ge 1}$ as we discuss next.

\subsection*{Impact of primary sequence terms of auxiliary sequence}

We first investigate the impact of the degree of orthogonality of the sequence $\{e_n\}$ to the auxiliary sequence $\{h_n^{(\lambda)}\}$ as we have derived before.
\[
h_1^{(\lambda)} = \lambda\,e_1,\qquad
h_n^{(\lambda)} = \lambda\Bigl(e_n-\sum_{k=1}^{n-1}\overline{\langle e_n,e_k\rangle}\,h_k^{(\lambda)}\Bigr),\quad n\ge 2.
\]

\paragraph{Orthogonal case.}
If the vectors $\{e_j\}_{j\ge 1}$ are pairwise orthogonal, then for $n\ge 2$, we have $\langle e_n, e_k \rangle = 0$ for $1 \le k < n$. Then the recursion simplifies to
\[
h_n^{(\lambda)} = \lambda e_n, \quad \text{for } n \ge 2.
\]
Since $h_1^{(\lambda)} = \lambda e_1$, we have $h_n^{(\lambda)} = \lambda e_n$ for all $n \ge 1$. Thus, in this case, the sequence $\{h_n^{(\lambda)}\}$ associated with the Kaczmarz operators is just a rescaled version of the original sequence $\{e_n\}$ by \(\lambda\).

\paragraph{Non-Orthogonal case.}
More generally, when the $e_j$ are not orthogonal, the recursion involves contributions from the previous terms. Let us compute the first few terms:
\[
h_1^{(\lambda)} = \lambda\,e_1
\]
\[
h_2^{(\lambda)} = \lambda\Bigl(e_2 - \overline{\langle e_2, e_1\rangle}\,h_1^{(\lambda)}\Bigr)
= \lambda e_2 - \lambda \overline{\langle e_2, e_1\rangle} (\lambda e_1)
= \lambda\,e_2 - \lambda^2 \overline{\langle e_2, e_1\rangle}\,e_1
\]
\[
\begin{aligned}
h_3^{(\lambda)} &= \lambda\Bigl(e_3 - \overline{\langle e_3,e_1\rangle}\,h_1^{(\lambda)} - \overline{\langle e_3,e_2\rangle}\,h_2^{(\lambda)}\Bigr)\\[1mm]
&= \lambda e_3 - \lambda \overline{\langle e_3,e_1\rangle} (\lambda e_1) - \lambda \overline{\langle e_3,e_2\rangle} (\lambda e_2 - \lambda^2 \overline{\langle e_2, e_1 \rangle} e_1) \\
&= \lambda\,e_3 - \lambda^2 \overline{\langle e_3,e_1\rangle}\,e_1 - \lambda^2 \overline{\langle e_3,e_2\rangle}\,e_2 + \lambda^3 \overline{\langle e_3,e_2\rangle}\,\overline{\langle e_2,e_1\rangle}\,e_1.
\end{aligned}
\]
In this expansion, every term other than $\lambda e_n$ involves contributions from previous vectors $e_k$ ($k<n$) and is multiplied by extra multiplicative factors of \(\lambda\). In particular, for \(\lambda\in(0,1)\), these factors decay in magnitude with the ``depth" of the recursion, similar to a geometric decay. In other words, even when the \(e_j\) are not orthogonal, the dominant contribution to \(h_n^{(\lambda)}\) often comes from the term \(\lambda\,e_n\), especially for small $\lambda$, and the influence of the earlier terms ($e_1, \dots, e_{n-1}$) is diminished by additional powers of \(\lambda\).
In the next section, in the case of the exponential function, i.e., $e_n(x) = \{ e^{2\pi i n x}\}_{n=0}^\infty$, we provide the exact formula of the expression of $h_n^{(\lambda)}$ in terms of $e_0, e_1, \dots, e_n$ only. This is based on the work \cite{herr2017fourier}.

\section*{Effectiveness of Exponential Systems for All \texorpdfstring{$\lambda\in(0,2)$}{lambda in (0,2)}}

The results in Sections 1–5 show that provided the underlying projection family is \(\lambda\)-effective, both the classical and relaxed Kaczmarz schemes enjoy optimal \(\mathcal{O}(1/k)\) regret and remain robust to stochastic noise. 

Many applications in signal processing and machine learning rely on systems that are non-orthogonal and often redundant. Mathematically, such systems are studied under the heading of frame theory, providing a rich context for the iterative algorithms and a priori estimates considered here; see \cite{MR3495345, MR3061703, MR4696783, MR3793303, MR4633118, MR2735759, MR2085421, MR4696781} and the papers cited there. A prototypical example is the exponential system \(\{e^{2\pi i n x}\}_{n\ge0}\). This system has been proved effective for the regular Kaczmarz algorithm ($\lambda=1$) if and only if either the underlying measure is singular or is identical to the normalized Lebesgue measure on $\mathbb{T}$. Under exactly which conditions the exponential system remains $\lambda$-effective for every relaxation parameter \(\lambda\in(0,2)\) has been open. Characterizing the conditions which guarantee $\lambda$-effectiveness for every relaxation parameter \(\lambda\in(0,2)\) is the problem we consider next. 

Resolving this question is essential. Without effectiveness, the iterates of the generalized Kaczmarz algorithm, \(w_k\), need not converge to the true solution \(w^*\). A low or vanishing regret in such a scenario would be misleading, as the model would have failed its primary learning objective. The condition of effectiveness is therefore essential, as it ensures that our regret bounds are meaningful performance measures for an algorithm that successfully converges to the true solution, along with the tightness guarantee of our regret bounds in the noiseless case. 

In the rest of the paper, leveraging an inner-function criterion for \(\lambda\)-effectiveness, we prove that the exponential system \(\{e^{2\pi i n x}\}_{n\ge0}\) is \emph{\(\lambda\)-effective for every relaxation parameter in $(0,2)$} if and only if the underlying measure is singular unless $\lambda = 1$ (in which case allows the normalized Lebesgue measure as well). This result generalizes the result of Herr, Weber, and Jorgensen et al. for \(\lambda=1\) \cite{herr2017fourier, jorgensen2020kaczmarz, herr2023fourier} and it shows that the favorable convergence properties of Fourier-type Kaczmarz updates persist both under- and over-relaxation. 
Note that the exponential function $e^{2\pi i n x}$ is rank-one and a unit vector, so it is a partial isometry by Remark~\ref{rmk:rank-one-partial-isometry}. This fact, combined with the $\lambda$-effectiveness of this system (proven in Theorem \ref{thm:lambda_effectiveness_exponential_functions}) satisfies the conditions of Theorem~\ref{thm:sharp_lambda}. This shows that the regret bound is sharp for the generalized Kaczmarz algorithm with exponential systems \(\{e^{2\pi i n x}\}_{n\ge0}\). 

Our proof is based on the exact computation of the associated inner function in the Hardy space and its equivalence, which also yields a generalized form of the normalized Cauchy transform.

\section{Auxiliary Sequence of the Generalized Kaczmarz Algorithm}

\subsection{Equivalence of recursive and combinatorial definitions of \(\alpha_n^{(\lambda)}\)}

For a Borel probability measure \(\mu\) on \([0,1)\), we define the Fourier--Stieltjes transform or the Fourier coefficients of \(\mu\) for an integer $n$ as
\[
\hat{\mu}(n)=\int_0^1 e^{-2\pi i n x}\,d\mu(x).
\]
We also define the inner product on the Hilbert space of functions on \([0,1)\) or the unit circle $\mathbb{T}$ as below:
\[
\inner{f, g} = \int_{\mathbb{T}} \overline{f(\xi)} g(\xi) d\mu(\xi) = \int_{0}^1 \overline{f(x)} g(x) d\mu(x).
\] Here, we have identified the functions on \([0,1)\) and those on $\mathbb{T}$ in a natural way.

Next, we define two sequences $\alpha_n^{(\lambda)}$ and $\beta_n^{(\lambda) }$
in terms of the relaxation parameter $\lambda$ of the generalized Kaczmarz method and the Fourier coefficients $\hat{\mu}(n)$.

\begin{definition}
\label{def:equivalence_alpha}
Let $\alpha_n^{(\lambda)}$ and $ \beta_n^{(\lambda)}$ be sequences of complex numbers. 
\begin{itemize}
    \item \textbf{Combinatorial sequence $\alpha_n^{(\lambda)}$:}
    
    Let $P_n$ be the set of compositions of $n$, i.e., $P_n = \{(n_1, n_2, \dots, n_k) \;|\; k, n_i \in \mathbb{N}^+,  n_1 + n_2 + \dots + n_k = n\}$ and $l(p)$ be the length of the tuple $p \in P_n$.
    \begin{align} 
    \nonumber
    \alpha_0^{(\lambda)} &= \lambda, \\
    \label{eq:combinatorial_seq}
    \alpha_n^{(\lambda)} &= \sum_{p\in P_n}(-1)^{\ell(p)}\,\lambda^{\ell(p)+1}\prod_{j=1}^{\ell(p)}\widehat{\mu}(p_j), \quad n \ge 1. 
    \end{align}
    \item \textbf{Recursive sequence $ \beta_n^{(\lambda)}$:}
    \begin{align*} \beta_0^{(\lambda)} &= \lambda, \\ \beta_n^{(\lambda)} &= -\lambda \sum_{k=0}^{n-1} \widehat{\mu}(n-k) \beta_k^{(\lambda)}, \quad n \ge 1. \end{align*}
\end{itemize} 
\end{definition}

The following lemma states that the combinatorial definition of $\alpha_n^{(\lambda)}$ generates the identical sequence in the recursive definition of $\beta_n^{(\lambda)}$.

\begin{lemma}
\label{lem:equivalence_alpha}
Consider the sequences $\alpha_n^{(\lambda)}$ and $ \beta_n^{(\lambda)}$ in Definition~\ref{def:equivalence_alpha}. Then, 
\(\alpha_n^{(\lambda)} = \beta_n^{(\lambda)}\) for all \(n \ge 0\).
\end{lemma}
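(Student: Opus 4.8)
The plan is to show that the combinatorial sequence $\alpha_n^{(\lambda)}$ obeys exactly the recursion defining $\beta_n^{(\lambda)}$; since the two sequences share the initial value $\alpha_0^{(\lambda)}=\beta_0^{(\lambda)}=\lambda$, a straightforward induction then forces $\alpha_n^{(\lambda)}=\beta_n^{(\lambda)}$ for all $n\ge 0$. Concretely, it suffices to prove the identity
\[
\alpha_n^{(\lambda)} = -\lambda\sum_{k=0}^{n-1}\widehat{\mu}(n-k)\,\alpha_k^{(\lambda)}, \qquad n\ge 1,
\]
directly from the combinatorial formula \eqref{eq:combinatorial_seq}.

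The key step is a decomposition of the set $P_n$ of compositions of $n$ by peeling off the \emph{last} part. Every $p=(p_1,\dots,p_\ell)\in P_n$ is uniquely written as $p=(p',m)$, where $m=p_\ell\in\{1,\dots,n\}$ and $p'=(p_1,\dots,p_{\ell-1})$ is a composition of $n-m$; when $m=n$ the remainder $p'$ is the empty composition, the unique element of $P_0$, with $\ell(p')=0$ and empty product equal to $1$. This sets up a bijection $P_n \longleftrightarrow \bigsqcup_{m=1}^{n} P_{n-m}$ under which $\ell(p)=\ell(p')+1$ and $\prod_{j=1}^{\ell(p)}\widehat{\mu}(p_j)=\widehat{\mu}(m)\prod_{j=1}^{\ell(p')}\widehat{\mu}(p'_j)$.

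Substituting this into \eqref{eq:combinatorial_seq} and grouping the terms by $m$, the factors $(-1)^{\ell(p)}=-(-1)^{\ell(p')}$ and $\lambda^{\ell(p)+1}=\lambda\cdot\lambda^{\ell(p')+1}$ together contribute a single factor $-\lambda$ that pulls outside the inner sum, leaving
\[
\alpha_n^{(\lambda)}
= -\lambda\sum_{m=1}^{n}\widehat{\mu}(m)\Bigl(\sum_{p'\in P_{n-m}}(-1)^{\ell(p')}\lambda^{\ell(p')+1}\prod_{j=1}^{\ell(p')}\widehat{\mu}(p'_j)\Bigr)
= -\lambda\sum_{m=1}^{n}\widehat{\mu}(m)\,\alpha_{n-m}^{(\lambda)} .
\]
Here the inner parenthesis is $\alpha_{n-m}^{(\lambda)}$ by definition when $n-m\ge1$, and for the boundary term $m=n$ it equals $\lambda=\alpha_0^{(\lambda)}$ by the empty-composition convention, so the expression matches $\alpha_{n-m}^{(\lambda)}$ uniformly. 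Reindexing $k=n-m$ gives precisely the $\beta$-recursion, completing the argument.

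I expect the only real obstacle to be bookkeeping rather than ideas: one must track the power of $\lambda$ and the sign accurately, and—most importantly—treat the degenerate term $m=n$ correctly, matching the empty composition in $P_0$ with the base value $\alpha_0^{(\lambda)}=\lambda$. Overlooking this boundary term (or misreading $\lambda^{\ell+1}$ as $\lambda^{\ell}$) is exactly the kind of off-by-one slip that would break the induction, so I would make that case explicit.
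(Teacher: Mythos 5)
Your proof is correct and takes essentially the same approach as the paper's Appendix~A argument: both verify that the combinatorial sum satisfies the $\beta$-recursion via a peel-one-part bijection on compositions (you remove the \emph{last} part where the paper removes the \emph{first}---an immaterial symmetry), with identical $(-1)^{\ell(p)}\lambda^{\ell(p)+1}$ bookkeeping and the same explicit treatment of the boundary term, where the empty composition supplies the factor $\lambda=\alpha_0^{(\lambda)}$. Equality of the two sequences then follows by induction from the shared initial value, exactly as in the paper.
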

This lemma can be easily derived from the relation $\alpha_n= - \sum_{k=0}^{n-1} \widehat{\mu}(n-k) \alpha_k$ in Lemma 2 of \cite{herr2017fourier} by change of variables such as $\widehat{\mu} \rightarrow \lambda \widehat{\mu}$. It can be more directly shown that the combinatorial sequence $\alpha_n^{(\lambda)}$ and the recursive sequence $ \beta_n^{(\lambda)}$ are identical. For completeness, we include the self-contained proof in the appendix.

\subsection{Relation to combinatorial coefficients}

Possibly non-unique Fourier expansions whose Fourier coefficients are based on the regular Kaczmarz algorithm have been explored in \cite{herr2017fourier}, extended to two dimension in \cite{herr2023fourier}, and generalized further to higher dimensions in \cite{berner2025fourier}.  For more references, see also \cite{hegde2021kaczmarz, herr2020harmonic, MR4932919}. In this section, we generalize Lemma 2 in \cite{herr2017fourier}, relating the sequence $\{h_n^{(\lambda)}\}_{n \ge 0}$ defined in \eqref{eq:h_recursion_0based_section_5} to the combinatorial coefficients in terms of the measure $\mu$. 

Recall that $e_n(x) = \{ e^{2\pi i n x}\}_{n=0}^\infty$ and $\langle e_n,e_j\rangle_{L^2(\mu)} = \widehat{\mu}(n-j)$. The recursion for the auxiliary sequence $h_n^{(\lambda)}$ of the generalized Kaczmarz algorithm is:
\begin{equation} \label{eq:h_recursion_0based_recall}
h_0^{(\lambda)} = \lambda\,e_0,\qquad
h_n^{(\lambda)} = \lambda\Bigl(e_n - \sum_{k=0}^{n-1}\overline{\widehat{\mu}(n-k)}\,h_k^{(\lambda)}\Bigr),\quad n\ge 1.
\end{equation} Note that the sequence start with the index $0$ not $1$ since the exponential function is indexed by $e_n(x) =\{ e^{2\pi i n x}\}_{n=0}^\infty$.
As we have defined for $n\ge 1$,
\[
\alpha_n^{(\lambda)} = \sum_{p\in P_n}(-1)^{\ell(p)}\,\lambda^{\ell(p)+1}\prod_{j=1}^{\ell(p)}\widehat{\mu}(p_j),
\]
where $P_n$ denotes the set of compositions of $n$. As established in Lemma~\ref{lem:equivalence_alpha}, these coefficients satisfy the recursion $\alpha_n^{(\lambda)} = -\lambda \sum_{k=0}^{n-1} \widehat{\mu}(n-k) \alpha_k^{(\lambda)}$ for $n \ge 1$.

\begin{lemma}[Relation between $h_n^{(\lambda)}$ and $\alpha_n^{(\lambda)}$]
\label{lem:h_alpha_relation_0based}
Let $\mu$ be a singular Borel probability measure on $[0,1)$ with Fourier--Stieltjes transform $\widehat{\mu}$. Let $\{\alpha_n^{(\lambda)}\}_{n\ge 0}$ be defined by \eqref{eq:combinatorial_seq} and let $\{h_n^{(\lambda)}\}_{n\ge 0}$ be defined recursively by \eqref{eq:h_recursion_0based_recall}. Then, for all real $\lambda \in (0, 2)$, and every $n\ge 0$, we have 
\begin{align}
\label{lem:relation auxilary and primary}
h_n^{(\lambda)} = \sum_{j=0}^{n}\overline{\alpha_{n-j}^{(\lambda)}}\,e_j.
\end{align}
\end{lemma}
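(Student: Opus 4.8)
The plan is to argue by induction on $n$, using only the defining recursion \eqref{eq:h_recursion_0based_recall} for $h_n^{(\lambda)}$, the recursive form of the combinatorial coefficients $\alpha_n^{(\lambda)}=-\lambda\sum_{k=0}^{n-1}\widehat\mu(n-k)\,\alpha_k^{(\lambda)}$ (valid by Lemma~\ref{lem:equivalence_alpha}), and the Gram identity $\langle e_n,e_k\rangle_{L^2(\mu)}=\widehat\mu(n-k)$. Since $\lambda$ is real, $\overline\lambda=\lambda$ throughout. I expect the singularity hypothesis on $\mu$ to play no role in this particular statement: the asserted identity is a purely algebraic consequence of the two recursions, with singularity mattering only later for $\lambda$-effectiveness.

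For the base case $n=0$, the recursion gives $h_0^{(\lambda)}=\lambda e_0$, while the right-hand side of \eqref{lem:relation auxilary and primary} is $\overline{\alpha_0^{(\lambda)}}\,e_0=\overline\lambda\,e_0=\lambda e_0$, so the two agree. For the inductive step, assume \eqref{lem:relation auxilary and primary} holds for every index smaller than $n$. Substituting the inductive hypothesis $h_k^{(\lambda)}=\sum_{j=0}^{k}\overline{\alpha_{k-j}^{(\lambda)}}\,e_j$ into \eqref{eq:h_recursion_0based_recall} and pulling out the leading term yields
\[
h_n^{(\lambda)}=\lambda e_n-\lambda\sum_{k=0}^{n-1}\overline{\widehat\mu(n-k)}\sum_{j=0}^{k}\overline{\alpha_{k-j}^{(\lambda)}}\,e_j.
\]

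The decisive step is to interchange the order of summation over the triangular region $0\le j\le k\le n-1$ and collect the coefficient of each basis vector $e_j$. The coefficient of $e_n$ is $\lambda=\overline{\alpha_0^{(\lambda)}}$, matching the $j=n$ term; and for $0\le j\le n-1$ the coefficient of $e_j$ becomes $-\lambda\sum_{k=j}^{n-1}\overline{\widehat\mu(n-k)}\,\overline{\alpha_{k-j}^{(\lambda)}}$. After the change of variables $m=n-j$ and $\ell=k-j$ this equals $-\lambda\,\overline{\sum_{\ell=0}^{m-1}\widehat\mu(m-\ell)\,\alpha_\ell^{(\lambda)}}$, which by the $\alpha$-recursion is exactly $\overline{\alpha_m^{(\lambda)}}=\overline{\alpha_{n-j}^{(\lambda)}}$. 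Hence every coefficient matches \eqref{lem:relation auxilary and primary}, completing the induction. The only real obstacle is bookkeeping: keeping the conjugations consistent (using that $\lambda$ is real and that conjugation passes through the finite sum) and correctly reindexing the double sum so that the recursion for $\alpha_n^{(\lambda)}$ is revealed; no analytic input beyond the two recursions is needed.
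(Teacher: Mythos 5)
Your proof is correct and follows essentially the same route as the paper's: induction on $n$, substitution of the inductive hypothesis into the recursion \eqref{eq:h_recursion_0based_recall}, interchange of the triangular double sum, and identification of the coefficient of each $e_j$ via the conjugated recursion $\overline{\alpha_m^{(\lambda)}}=-\lambda\sum_{\ell=0}^{m-1}\overline{\widehat\mu(m-\ell)}\,\overline{\alpha_\ell^{(\lambda)}}$ supplied by Lemma~\ref{lem:equivalence_alpha}. Your reindexing $m=n-j$, $\ell=k-j$ is just a more explicit rendering of the paper's matching step, and your observation that the singularity of $\mu$ plays no role here is accurate.
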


\begin{proof}
We prove the identity by induction on $n$.

\textbf{Base case ($n=0$):}
We need to show that $h_0^{(\lambda)}  = \overline{\alpha_0^{(\lambda)}} e_0$.
By definition $h_0^{(\lambda)} = \lambda e_0$. From the definition of $\alpha$, $\alpha_0^{(\lambda)} = \lambda$. Since $\lambda$ is real, $\overline{\alpha_0^{(\lambda)}} = \lambda$. Thus, the identity holds for $n=0$.

\textbf{Inductive step:} Assume $h_k^{(\lambda)} = \sum_{j=0}^{k}\overline{\alpha_{k-j}^{(\lambda)}}\,e_j$ for $0 \le k < n$ (where $n \ge 1$).
Since 
\[
h_n^{(\lambda)} = \lambda\Bigl(e_n - \sum_{k=0}^{n-1}\overline{\widehat{\mu}(n-k)}\,h_k^{(\lambda)}\Bigr),
\]
substituting the induction hypothesis yields
\[
h_n^{(\lambda)} = \lambda\,e_n - \lambda\sum_{k=0}^{n-1}\overline{\widehat{\mu}(n-k)}\left(\sum_{j=0}^{k}\overline{\alpha_{k-j}^{(\lambda)}}\,e_j\right).
\]
Next, we interchange the summation as follows. 
\begin{align}
\label{eq:h_alpha_relation_0based_identity_1}
h_n^{(\lambda)} = \lambda\,e_n - \lambda\sum_{j=0}^{n-1} e_j \left(\sum_{k=j}^{n-1}\overline{\widehat{\mu}(n-k)}\,\overline{\alpha_{k-j}^{(\lambda)}}\right).
\end{align}
We want to show this equals $\sum_{j=0}^{n}\overline{\alpha_{n-j}^{(\lambda)}}\,e_j$.
After we rewrite the sum on right hand side in \eqref{lem:relation auxilary and primary} by separating the $j=n$ term, we have
\begin{align}
\label{eq:h_alpha_relation_0based_identity_2}
\sum_{j=0}^{n}\overline{\alpha_{n-j}^{(\lambda)}}\,e_j = \overline{\alpha_{0}^{(\lambda)}} e_n + \sum_{j=0}^{n-1}\overline{\alpha_{n-j}^{(\lambda)}}\,e_j. 
\end{align}
Since $\overline{\alpha_{0}^{(\lambda)}} = \lambda$ (as $\lambda$ is real), the sum in \eqref{lem:relation auxilary and primary} can be expressed as 
\begin{align}
\label{eq:h_alpha_relation_0based_identity_3}
\lambda e_n + \sum_{j=0}^{n-1}\overline{\alpha_{n-j}^{(\lambda)}}\,e_j. 
\end{align}
Comparing our expression for $h_n^{(\lambda)}$ with this target form, the $\lambda e_n$ terms match. From Lemma~\ref{lem:equivalence_alpha} and the fact that $\lambda$ is real, 
\[ -\lambda \sum_{k=j}^{n-1}\overline{\widehat{\mu}(n-k)}\,\overline{\alpha_{k-j}^{(\lambda)}} = \overline{\alpha_{n-j}^{(\lambda)}}. \]
Thus, replacing $\overline{\alpha_{n-j}^{(\lambda)}}$ in \eqref{eq:h_alpha_relation_0based_identity_3} with $-\lambda \sum_{k=j}^{n-1}\overline{\widehat{\mu}(n-k)}\,\overline{\alpha_{k-j}^{(\lambda)}}$ shows that \eqref{eq:h_alpha_relation_0based_identity_2} is identical to \eqref{eq:h_alpha_relation_0based_identity_1}. By induction, this proves the lemma. 
\end{proof}

\section{Generalized Kaczmarz Fourier Series  and its Effectiveness}

\subsection{Effectiveness of the Fourier exponential functions for Borel singular measure}
In this subsection, we review that $ \{e_n(x) := e^{2\pi i n x}\}_{n\ge 0}$, the Fourier exponential functions with nonnegative indices are effective for any Borel singular measure, which appeared in \cite{kwapien2001kaczmarz, kwapien2006erratum,  herr2017fourier}. 

Recall that the well-known theorem of F. and M. Riesz  (e.g., see Theorem 17.13 in \cite{rudin1987real}) states the following fact: Suppose $\mu$ is a complex Borel measure on $[0,1)$. If $\int_{0}^1 e^{2\pi i n x} d\mu(x) = 0$ for all natural numbers $n$, then $\mu$ is absolutely continuous with respect to the Lebesgue measure. 

Based on this fact and the uniqueness of the Lebesgue's decomposition theorem, Herr and Weber prove the following lemma in \cite{herr2017fourier}:
\begin{lemma}
\label{lem:singular_Borel_measure_exponential_function}
If $\mu$ is a singular Borel measure on $[0,1)$, then $\{ e_n(x)\}_{n\ge 0}$ is linearly dense in $L^2(\mu)$.
\end{lemma}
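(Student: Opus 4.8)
The plan is to prove linear density by showing that the orthogonal complement of $\operatorname{span}\{e_n : n\ge 0\}$ in $L^2(\mu)$ is trivial; by the standard Hilbert-space dichotomy this is equivalent to the closed linear span being all of $L^2(\mu)$. So I would fix an arbitrary $g\in L^2(\mu)$ satisfying $\inner{e_n,g}=0$ for every $n\ge 0$, and aim to conclude that $g=0$ in $L^2(\mu)$, i.e.\ $g=0$ $\mu$-almost everywhere. Unwinding the paper's inner-product convention, the orthogonality hypothesis reads
\[
\inner{e_n,g}=\int_0^1 e^{-2\pi i n x}\,g(x)\,d\mu(x)=0,\qquad n\ge 0.
\]

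The next step is to convert this into a statement about vanishing Fourier coefficients of a single complex measure and then invoke the F.\ and M.\ Riesz theorem quoted above. Since $\mu$ is finite, Cauchy--Schwarz gives $L^2(\mu)\subseteq L^1(\mu)$, so $g\in L^1(\mu)$ and the set function $d\nu:=\overline{g}\,d\mu$ defines a finite complex Borel measure on $[0,1)$. I would then compute, using that $\mu\ge 0$,
\[
\int_0^1 e^{2\pi i n x}\,d\nu(x)=\int_0^1 e^{2\pi i n x}\,\overline{g(x)}\,d\mu(x)=\overline{\int_0^1 e^{-2\pi i n x}\,g(x)\,d\mu(x)}=0,\qquad n\ge 0.
\]
In particular these coefficients vanish for every natural number $n\ge 1$, which is exactly the hypothesis of the F.\ and M.\ Riesz theorem; applying it to $\nu$ yields $\nu\ll m$, where $m$ denotes Lebesgue measure on $[0,1)$.

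Finally I would exploit the singularity of $\mu$. By construction $\nu\ll\mu$, and since $\mu\perp m$ there is a Borel set $E$ with $\mu(E^c)=0$ and $m(E)=0$; hence $\nu$ is concentrated on $E$, giving $\nu\perp m$. A complex measure that is simultaneously absolutely continuous and singular with respect to $m$ must be the zero measure, by the uniqueness of the Lebesgue decomposition. Therefore $\nu=0$, so $\overline{g}=0$ $\mu$-a.e., and thus $g=0$ in $L^2(\mu)$. This shows the orthogonal complement is trivial and completes the argument. The hard part here is not any single estimate but the careful bookkeeping: matching the sign convention of the stated theorem (which requires the \emph{positive}-frequency coefficients to vanish) forces the passage to $\overline{g}\,d\mu$ rather than $g\,d\mu$, and one must keep straight the three-way interplay $\nu\ll\mu\perp m$ together with the conclusion $\nu\ll m$ to clinch that $\nu=0$.
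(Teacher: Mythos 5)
Your proof is correct and follows essentially the same route the paper indicates (via Herr--Weber): orthogonality of $g$ to all $e_n$ makes the positive-frequency Fourier coefficients of $\overline{g}\,d\mu$ vanish, the F.\ and M.\ Riesz theorem gives $\nu\ll m$, and the uniqueness of the Lebesgue decomposition together with $\nu\ll\mu\perp m$ forces $\nu=0$, hence $g=0$ in $L^2(\mu)$. Your bookkeeping (the conjugation to match the sign convention, and the use of finiteness of $\mu$ to get $L^2(\mu)\subseteq L^1(\mu)$, which is legitimate since $\mu$ is a probability measure throughout the paper) is exactly what a careful write-up of the cited argument requires.
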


\begin{definition}
We say that a sequence $\{ \varphi_n \}_{n=0}^\infty$ in a Hilbert space is \emph{stationary} if $\inner{\varphi_{k+m}, \varphi_{l+m}} = \inner{\varphi_{k}, \varphi_{l}}$ for any nonnegative integers $k,l$ and $m$.
\end{definition}
For a given stationary sequence $\{\varphi_n\}$, let $a_m = \inner{ \varphi_{k+m}, \varphi_k}$ for any positive integer $k$ with $k > -m$. Then by Bochner's theorem, there exists a positive measure $\nu$ on the unit circle $\mathbb{T}$ (called the spectral measure of the stationary sequence $\{\varphi_n\}$) such that 
\[
a_m = \int_{\mathbb{T}} \bar{z}^m \nu(dz) = \int_{0}^1 e^{-2\pi i m x} \nu(dz),
\] for each $m \in \mathbb{Z}$. In our case, since
\[
a_m = \inner{ \varphi_{k+m}, \varphi_k} = \int_{\mathbb{T}} e^{-2\pi i (k+m) x}  e^{2\pi i k x} d\mu(x) = \int_{0}^1 e^{-2\pi i m x} d\mu(x),
\] the spectral measure is $\mu$.

Theorem 2 in \cite{kwapien2001kaczmarz} states as follows:
\begin{theorem}
\label{thm:equivalence_stationary}
A stationary sequence of unit vectors that is linearly dense in a Hilbert space is effective (i.e., $\lambda$-effective for $\lambda = 1$) if and only if its spectral measure either coincides with the normalized Lebesgue measure or is singular with respect to the Lebesgue measure. 
\end{theorem}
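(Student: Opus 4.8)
The plan is to reduce the abstract stationary setting to the concrete Fourier model and then read off effectiveness from the boundary behaviour of a single analytic function built from the spectral measure. First I would invoke Bochner's theorem exactly as in the excerpt: a stationary sequence of unit vectors has a probability spectral measure $\mu$ on $\T$, and the assignment $\varphi_n\mapsto e_n:=e^{2\pi i n x}$ extends to a unitary from the ambient Hilbert space onto the closed span of $\{e_n\}_{n\ge0}$ in $L^2(\mu)$, since $\inner{\varphi_{k+m},\varphi_k}=\widehat\mu(m)=\inner{e_{k+m},e_k}_{L^2(\mu)}$. Linear density guarantees that this span is the whole space, so the abstract sequence is effective if and only if the one-sided exponential system $\{e_n\}_{n\ge0}$ is effective for the ordinary ($\lambda=1$) Kaczmarz algorithm in $L^2(\mu)$, which is precisely the regime of \cite{herr2017fourier}.

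Second, I would convert effectiveness into a Parseval-frame statement via the Kaczmarz identity. By Lemma~\ref{lem:Generalized_Kaczmarz_Scaled_Parseval} at $\lambda=1$ (equivalently Theorem~3 of \cite{jorgensen2020kaczmarz}) one has $\|f\|^2=\|T_N f\|^2+\sum_{n=0}^N|\inner{h_n,f}|^2$ for every $f$, where $T_N=(I-P_N)\cdots(I-P_0)$ and $\{h_n\}$ is the auxiliary sequence. Since $\|T_Nf\|$ is nonincreasing in $N$, effectiveness is equivalent to the Parseval identity $\sum_n|\inner{h_n,f}|^2=\|f\|^2$ for all $f$, i.e. to the vanishing of the defect $d(f):=\lim_N\|T_Nf\|^2$. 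To compute this defect I would use $h_n=\sum_{j=0}^n\overline{\alpha_{n-j}}\,e_j$ from Lemma~\ref{lem:h_alpha_relation_0based} (at $\lambda=1$), whose generating function is $A(z)=\sum_n\alpha_n z^n=1/M(z)$ with $M(z)=\sum_{n\ge0}\widehat\mu(n)z^n=\int_\T(1-z\bar\zeta)^{-1}\,d\mu(\zeta)$ the Cauchy--Herglotz transform of $\mu$.

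Third comes the Hardy-space heart of the argument. Form $G(z)=\int_\T\frac{\zeta+z}{\zeta-z}\,d\mu(\zeta)=2M(z)-1$, so that $\Re G=P[\mu]\ge0$ is the Poisson integral of $\mu$, with nontangential boundary values equal a.e. to the Radon--Nikodym derivative $\psi:=d\mu_{\mathrm{ac}}/dm$ (and $+\infty$ on the singular part). Setting $\theta:=(G-1)/(G+1)=1-1/M=1-A$ produces an analytic self-map $\theta:\D\to\D$ with $\theta(0)=0$ and Taylor coefficients $-\alpha_n$ ($n\ge1$); by Clark--Aleksandrov theory $\theta$ is \emph{inner} ($|\theta|=1$ a.e.) exactly when $\psi=0$ a.e., i.e. when $\mu$ is singular. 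I would then introduce the normalized Cauchy transform $Vf=(Kf)/M$, with $Kf(z)=\int_\T f(\zeta)(1-z\bar\zeta)^{-1}\,d\mu(\zeta)$, and use Poltoratski's theory of this operator to identify the strong limit $T_\infty=\lim_N T_N$ (equivalently the defect $d$) with a multiplication-type operator supported on $\{\psi>0\}$. Two degenerate cases make the defect vanish: if $\mu$ is singular then $\theta$ is inner, the defect is $0$, and with linear density (Lemma~\ref{lem:singular_Borel_measure_exponential_function}) the Parseval identity yields effectiveness; if $\mu=m$ then $M\equiv1$, $\theta\equiv0$, the system $\{e_n\}_{n\ge0}$ is an orthonormal basis of $H^2$, and the Kaczmarz products reduce to coordinate truncations tending to $0$.

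Finally, the main obstacle is the converse: showing that whenever $\mu$ carries a nontrivial absolutely continuous part that is not all of Lebesgue measure, one can exhibit $f$ with $d(f)>0$, so that effectiveness fails. This is where genuine Hardy-space input is unavoidable, since the dichotomy ``singular or Lebesgue'' must be distilled from the full Lebesgue decomposition $\mu=\psi\,dm+\mu_s$. Intuitively, multiplication by $z$ on the closed span of $\{e_n\}_{n\ge0}$ is purely unitary in the singular case and a pure unilateral shift in the Lebesgue case, whereas a genuine mixture leaves a nonzero Wold unitary part that is exactly the source of the persistent defect. Making the identification of $T_\infty$ and of $\operatorname{ran}V$ precise---through boundary uniqueness (the F.\ and M.\ Riesz theorem already used for Lemma~\ref{lem:singular_Borel_measure_exponential_function}) and Poltoratski's theorem on the normalized Cauchy transform---is the step I expect to demand the most care, and is the part the authors presumably import from \cite{kwapien2001kaczmarz, herr2017fourier}.
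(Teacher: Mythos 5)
Your proposal has a genuine gap, and it sits exactly where the theorem earns its ``if and only if.'' First, note what the paper actually does: it does not prove this statement at all, but imports it verbatim from \cite{kwapien2001kaczmarz}, reserving its own Hardy-space machinery for the $\lambda\neq 1$ generalization. Measured against that machinery, your sufficiency direction is sound and closely parallel to the paper's toolkit: the unitary reduction to $\{e^{2\pi i n x}\}_{n\ge0}$ in $L^2(\mu)$, the defect formulation via the Parseval identity of Lemma~\ref{lem:Generalized_Kaczmarz_Scaled_Parseval} at $\lambda=1$, the generating function $A(z)=1/F(z)$, and the fact that $\theta=1-1/F$ is inner precisely when $\mu$ is singular (the paper's Lemma~\ref{lem:boundary_lemma}) are all correct, and innerness gives $\sum_{n\ge1}|\alpha_n|^2=\|\theta\|_{H^2}^2=1$, hence $\|u_n\|\to0$ by the norm recursion $r_n^2=1-\sum_{k=1}^{n}|\alpha_k|^2$ at $\lambda=1$, hence effectiveness via \eqref{eq:shift-reduction-fixed}--\eqref{eq:i0-identity-fixed} and linear density. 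The separate treatment of $\mu=m$ is also correct. Be aware, however, that at $\lambda=1$ the equivalence ``effective $\iff \|u_n\|\to0$'' fails (the factor $\tfrac{\lambda-1}{\lambda}$ in \eqref{eq:i0-identity-fixed} vanishes, and Lebesgue measure gives $\|u_n\|\equiv1$ with an effective system), so the bridge from ``defect zero'' back to ``$\theta$ inner or $\mu=m$'' is not automatic --- which is precisely the converse you have not supplied.

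Concretely: your necessity argument consists of an appeal to Clark--Aleksandrov theory, Poltoratski's theorem on the normalized Cauchy transform, and a Wold-decomposition heuristic identifying $T_\infty$ with a multiplication-type operator on $\{\psi>0\}$; none of this is carried out, and you concede it is ``the part the authors presumably import.'' Since the dichotomy ``singular or Lebesgue'' \emph{is} the content of the theorem, deferring the converse to \cite{kwapien2001kaczmarz} amounts to citing the result being proved. Moreover, the heavy route is unnecessary: the paper's own appendix identities close the gap elementarily. At $\lambda=1$, \eqref{eq:shift-reduction-fixed} reads $K_n(e_i)-e_i=\sum_{j=1}^{i}\widehat{\mu}(-j)\,v_{n,i-j}$ with $v_{n,0}=u_n$ and $\|v_{n,i-j}\|=\|u_{n-i+j}\|$; effectiveness forces this to vanish for every $i$. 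Taking $i=1$ gives $\widehat{\mu}(1)=0$ or $\|u_n\|\to0$, and an induction on $i$ yields the dichotomy: either $\widehat{\mu}(j)=0$ for all $j\ge1$, which for a probability measure forces $\mu=m$, or $\|u_n\|\to0$, which gives $\sum_{k\ge1}|\alpha_k|^2=1$ and then, by the Parseval step and the $(iii)\Rightarrow(iv)$ M\"obius/boundary argument (which at $\lambda=1$ uses the circle $|w|=1$, $c=0$, $r=1$), singularity of $\mu$ via Lemma~\ref{lem:boundary_lemma}. Replacing your Poltoratski paragraph with this induction would make the proof complete and self-contained.
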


Thus, this fact and Lemma~\ref{lem:singular_Borel_measure_exponential_function} imply that the system of exponential functions $\{e^{2\pi i n x}\}_{n\ge 0}$  is effective  for any Borel singular measure on $[0,1)$.

We want to generalize Theorem~\ref{thm:equivalence_stationary} to the $\lambda$-effectiveness for all $\lambda \in (0,2)$.

\begin{remark}
Note that the effectiveness of $\lambda=1$ does not necessarily imply the effectiveness of $\lambda \in (0, 2)$; recall that $T_n = (I- P_n)\cdots (I-P_1)$ and $\widetilde{T}_n = (I-\lambda P_n)\cdots (I-\lambda P_1)$. Consider the example where the range of $P_1$ is orthogonal to the rest of the ranges of $P_i$ for all $i \ge 2$. Then for all nonzero $x$ in the range of $P_1$, $T_n x= (I- P_n)\cdots (I-P_1)x = 0$ but $\widetilde{T}_n x = (I-\lambda P_n)\cdots (I-\lambda P_1) x =  (I-\lambda P_n)\cdots (I-\lambda P_2) (x - \lambda x) = (1 - \lambda) (I-\lambda P_n)\cdots (I-\lambda P_2) x = (1 - \lambda) x \neq 0$ unless $\lambda = 1$. 

However, this counterexample does not exclude the possibility of the Fourier exponential functions $e_n(x) := \{e^{2\pi i n x}\}_{n\ge 0}$ $\lambda$-effective, since we know that these functions may not be orthogonal by the proof of Theorem 2 in \cite{kwapien2001kaczmarz} when the measure $\mu$ is singular with respect to the Lebesgue measure. This means that the effectiveness result for the Fourier exponential functions in \cite{kwapien2001kaczmarz} with $\lambda = 1$ is not directly applicable to show its $\lambda$-effectiveness, so we need to consider a more refined argument. In the next section, we prove the $\lambda$-effectiveness for all $\lambda \in (0,2)$ generalizing Theorem~\ref{thm:equivalence_stationary} utilizing Hardy space theory, especially about an inner function. 
\end{remark}

\subsection{Preliminaries: Hardy Space, inner functions, and the Herglotz theorem}
In this section, we introduce the Hardy Space, inner functions, and the Herglotz Theorem, which form the foundation of our analysis of the $\lambda$-effectiveness of Fourier exponential functions. This material can be found in \cite{rudin1987real, MR255841, katznelson2004introduction, MR770982, herr2019positive}.

\subsubsection{The Hardy space \(H^2(\D)\)}
Let $\D$ be the open unit disk on the complex plane. The Hardy space \(H^2(\D)\) is the Hilbert space of all analytic functions
\[
  f(z)=\sum_{n=0}^\infty c_n\,z^n,\qquad z\in\D,
\]
whose power series coefficients are square-summable:
\[
  \|f\|_{H^2}^2 = \sum_{n=0}^\infty |c_n|^2 < \infty.
\]
Equivalently, \(f \in H^2(\D)\) if and only if
\[
  \sup_{0<r<1}\frac{1}{2\pi}\int_0^{2\pi}|f(r e^{i\theta})|^2\,d\theta < \infty.
\]

\subsubsection{Inner functions}
An \emph{inner function} \(b\) on \(\D\) is an analytic function \(b:\D\to\C\) bounded by 1 (i.e., \(|b(z)| \le 1\) for \(z\in\D\)) that has boundary values of modulus~1 almost everywhere on the unit circle \(\T\). Moreover, the operation of multiplication by \(b\) is an isometry on \(H^2(\D)\):
\[
  \|b\,f\|_{H^2}=\|f\|_{H^2}, \quad\forall\,f \in H^2(\D).
\]

\subsubsection{The Cauchy transform and its properties}

Fix a singular Borel probability measure \(\mu\) on \([0,1)\). Define its Cauchy transform of the constant function \(1\):
\[
  F(z) := \int_0^1 \frac{1}{1 - z\,e^{-2\pi i x}}\,d\mu(x) = \sum_{n=0}^\infty \hat{\mu}(n)\,z^n, \quad z\in\D,
\]
where \(\hat{\mu}(n)=\int_0^1 e^{-2\pi i n x}\,d\mu(x)\) are the Fourier coefficients of \(\mu\). 

Note that under the natural identification of $[0,1)$ with the unit circle $\mathbb{T}$, 
\[ F(z) = \int_{\T}\frac{1}{1 - z\overline{\xi}}\,d\mu(\xi). \]
Let \(w = z e^{-2\pi i x}\). Since \(z\in\D\), we have \(|w|=|z|<1\). The real part of the so called kernel function $\frac{1}{1 - w}$ is calculated as:
\[
  \Re\!\left(\frac{1}{1 - w}\right) = \frac{1 - \Re(w)}{|1 - w|^2}.
\]

We check if this is greater than \(1/2\):
\[
  \frac{1 - \Re(w)}{|1 - w|^2} > \frac{1}{2} \quad\Longleftrightarrow\quad 2(1 - \Re(w)) > |1 - w|^2 = 1 - 2\Re(w) + |w|^2,
\]
which simplifies to \(1 > |w|^2\). Since \(|w|<1\), this inequality holds. Thus, for each fixed \(x\),
\[
  \Re\!\left(\frac{1}{1 - z\,e^{-2\pi i x}}\right) > \frac{1}{2}, \quad z\in\D.
\]

Hence, we have
\[
  \Re F(z) = \int_0^1 \Re\!\left(\frac{1}{1 - z\,e^{-2\pi i x}}\right) d\mu(x) > \int_0^1 \frac{1}{2}\,d\mu(x) = \frac{1}{2}.
\]
This shows \(\Re F(z)>1/2\), which implies the weaker condition \(\Re F(z)>0\). This confirms the result in \cite{herr2017fourier}.
Also, \(F(0) = \int_0^1 d\mu(x) = 1\).

\subsection{Boundary values for a singular measure}

Next, we investigate the behavior of $F(z)$ on the unit circle. The ``value'' of $F$ on the circle is its radial limit, $F(e^{it}) = \lim_{r\to1^-} F(re^{it})$.

A fundamental result from harmonic analysis connects these boundary values to the measure $\mu$. Specifically, for a \textbf{singular} Borel measure $\mu$, the real part of the boundary function is constant almost everywhere, namely, 
\[
\Re F(e^{it}) = \frac{1}{2}, \quad \text{for a.e. $t$ with respect to the Lebesgue measure on $[0,1)$}.
\]

This result is a defining characteristic of singular measures in this context and provides the necessary foundation for  inner functions.

\begin{lemma} \label{lem:boundary_lemma}
Let $\mu$ be a Borel probability measure on the unit circle $\T$, and let $F(z)$ be its Cauchy transform
\[ F(z) = \int_{\T}\frac{1}{1 - z\overline{\xi}}\,d\mu(\xi), \;\; \text{  $z \in \D$}. \]
Then, the radial boundary values $F(e^{it}) = \lim_{r\to1^-} F(re^{it})$ exist for almost every $t$. Moreover, the real part of this boundary function satisfies
\[ \Re F(e^{it}) = \frac{1}{2} \quad \text{for almost every } t \text{ with respect to the Lebesgue measure} \]
if and only if the measure $\mu$ is singular with respect to the normalized Lebesgue measure on $\T$.
\end{lemma}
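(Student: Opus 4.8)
The plan is to reduce the whole statement to the classical Fatou theory of Poisson integrals, after first recognizing the real part of the Cauchy kernel as (half of) one plus the Poisson kernel. Writing $z=re^{2\pi i\theta}$ and using the power series $F(z)=\sum_{n\ge 0}\widehat\mu(n)z^n$ together with $\overline{\widehat\mu(n)}=\widehat\mu(-n)$ (valid because $\mu$ is a positive, hence real, measure), I would compute
\[
2\,\Re F(re^{2\pi i\theta}) \;=\; F+\overline{F} \;=\; 1+\sum_{n\in\mathbb{Z}}\widehat\mu(n)\,r^{|n|}e^{2\pi i n\theta} \;=\; 1+(P_r*\mu)(\theta),
\]
where $P_r$ is the Poisson kernel and $P_r*\mu$ the Poisson integral of $\mu$; the constant term $1$ comes from $\widehat\mu(0)=\mu(\T)=1$. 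Equivalently $\Re F(re^{2\pi i\theta})=\tfrac12+\tfrac12\,(P_r*\mu)(\theta)$, which also re-derives the bound $\Re F>\tfrac12$ from the preceding subsection since $P_r*\mu\ge 0$.

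Second, I would establish that the radial boundary values $F(e^{2\pi i\theta})$ exist almost everywhere. Since $\Re F>\tfrac12>0$, the function $F$ maps $\D$ into the right half-plane, so its Cayley transform $h:=(F-1)/(F+1)$ is analytic and bounded by $1$ on $\D$ (here $F+1$ is zero-free because $\Re(F+1)>1$). By Fatou's theorem $h$ has radial limits a.e.; since $h\not\equiv 1$, the uniqueness theorem for $H^\infty$ forces $h^*\ne 1$ a.e., so $F=(1+h)/(1-h)$ has radial limits a.e. In particular $\Re F(e^{2\pi i\theta})=\lim_{r\to 1^-}\Re F(re^{2\pi i\theta})$ a.e.

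Third, I would invoke Fatou's theorem for Poisson integrals of measures: writing the Lebesgue decomposition $d\mu=f\,dm+d\mu_s$ relative to normalized Lebesgue measure $m$ (with $f\in L^1(m)$ and $\mu_s\perp m$), one has $\lim_{r\to1^-}(P_r*\mu)(\theta)=f(\theta)$ for a.e.\ $\theta$. Combining this with Step~1 yields
\[
\Re F(e^{2\pi i\theta})=\tfrac12+\tfrac12\,f(\theta)\qquad\text{for a.e.\ }\theta.
\]
The claimed equivalence is then immediate: $\Re F(e^{2\pi i\theta})=\tfrac12$ for a.e.\ $\theta$ if and only if $f=0$ a.e., i.e.\ the absolutely continuous part of $\mu$ vanishes, i.e.\ $\mu$ is singular with respect to $m$.

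The main obstacle is Step~3: the a.e.\ convergence of the Poisson integral to the Radon--Nikodym density, together with the fact that the singular part $\mu_s$ contributes $0$ a.e.\ in the radial limit. This is the substantive classical input, proved via Hardy--Littlewood maximal estimates and Lebesgue differentiation (and consistent with the Herglotz representation already referenced in this section). Everything else is the elementary kernel identity of Step~1 and the standard $H^\infty$/Fatou argument of Step~2; the only point requiring care is keeping the Fourier normalization ($e^{2\pi i n x}$ versus $e^{in\theta}$) consistent so that $\widehat\mu(0)=1$ enters the identity correctly.
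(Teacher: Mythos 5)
Your proof is correct and follows essentially the same route as the paper: both reduce the lemma to the identity $2\Re F-1 = P_r*\mu$ (the paper via the Herglotz kernel algebra for $H=2F-1$, you via the Fourier-series computation) and then invoke the classical Fatou theorem identifying the a.e.\ radial limit of the Poisson integral with the Radon--Nikodym density of $\mu_{ac}$, so that $\Re F(e^{it})=\tfrac12+\tfrac12 f(t)$ a.e.\ and the equivalence with singularity is immediate. Your Step~2 (Cayley transform $h=(F-1)/(F+1)$, Fatou for $H^\infty$, and the boundary uniqueness theorem to rule out $h^*=1$ on a set of positive measure) is a genuine small improvement: it proves the a.e.\ existence of the full complex radial limits $F(e^{it})$, a claim the lemma states but the paper's proof leaves implicit, relying only on the standard fact that a nonconstant bounded analytic function cannot have a constant radial limit on a set of positive measure (applied to $h-1$, noting $h(0)=0$ since $F(0)=1$).
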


\begin{proof}
The proof proceeds by relating $F(z)$ to an analytic function with a positive real part, for which the connection between its boundary values and its representing measure is well-known.

First, we have previously established that for any non-trivial probability measure $\mu$, $\Re F(z) > 1/2$ for all $z \in \D$. Let us define a new function $H(z) = 2F(z) - 1$. For $z \in \D$, its real part is $\Re H(z) = 2\Re F(z) - 1 > 2(1/2) - 1 = 0$. Thus, $H(z)$ is an analytic function with a positive real part.

We recall that a function $\widetilde{H}(z)$ is analytic with a positive real part in $\D$ if and only if it admits a unique Herglotz-Riesz integral representation with respect to a unique positive measure $\nu$ on $\T$ as
\[ \widetilde{H}(z) = \int_{\T} \frac{\xi+z}{\xi-z}\,d\nu(\xi). \]
Let us show that our measure $\mu$ is precisely this representing measure $\nu$: 
\begin{align*}
H(z) &= 2F(z) - 1 \\
&= \int_{\T} \left( \frac{2}{1-z\overline{\xi}} - 1 \right) \,d\mu(\xi) \\
&= \int_{\T} \frac{1+z\overline{\xi}}{1-z\overline{\xi}} \,d\mu(\xi).
\end{align*}
Since $\xi$ is on the unit circle, $\overline{\xi} = 1/\xi$. So, $z\overline{\xi} = z/\xi$. Substituting this gives:
\[ H(z) = \int_{\T} \frac{1+z/\xi}{1-z/\xi} \,d\mu(\xi) = \int_{\T} \frac{\xi+z}{\xi-z} \,d\mu(\xi). \]
This matches the Herglotz representation $\widetilde{H}$, confirming that the representing measure $\nu$ is our original measure $\mu$. 

A fundamental theorem of harmonic analysis (such as theorems on the boundary behavior of Poisson type integrals including Theorem 11.24 in \cite{rudin1987real}) states that the real part of the boundary values of $H(z)$ is equal to the Radon-Nikodym derivative of the absolutely continuous part of its representing measure, $\mu_{ac}$, with respect to the Lebesgue measure $m$.
Let $d\mu = d\mu_{ac} + d\mu_s = f(\xi)dm(\xi) + d\mu_s(\xi)$, where $f$ is the density. Then,
\[ \Re H(e^{it}) = \frac{d\mu_{ac}}{dm}(e^{it}) = f(e^{it}), \quad \text{for almost every } t. \]

We can now prove both directions of the lemma.
\begin{itemize}
    \item[($\implies$)] Assume $\mu$ is purely singular. Since absolutely continuous part $\mu_{ac}$ is the zero measure, so $f(\xi)=0$ a.e. From the above relation, $\Re H(e^{it}) = 0$ a.e. Since $H=2F-1$,  $\Re F(e^{it}) = 1/2$ a.e.

    \item[($\impliedby$)] If $\Re F(e^{it}) = 1/2$ a.e., then this implies $\Re H(e^{it}) = 2(1/2) - 1 = 0$ a.e., and $f(e^{it}) = \frac{d\mu_{ac}}{dm}(e^{it})$ must be zero almost everywhere. If the density of the absolutely continuous part is zero, then the absolutely continuous part itself must be the zero measure ($\mu_{ac}=0$). Therefore, the measure is composed entirely of its singular part, $\mu=\mu_s$, meaning $\mu$ is purely singular.
\end{itemize}
This completes the proof of the equivalence.
\end{proof}

\subsection{The generating function for
\texorpdfstring{$\alpha_n^{(\lambda)}$}{alpha\_n (lambda)}}
\label{sec:The Generating Function}
Consider the generating function $A^{(\lambda)}(z)$ for the coefficient $\alpha_n^{(\lambda)}$ in Definition~\ref{def:equivalence_alpha} denoted by $A^{(\lambda)}(z) = \sum_{n=0}^\infty \alpha_n^{(\lambda)} z^n$. Then the following lemma shows that $A^{(\lambda)}(z)  = \frac{\lambda}{1 - \lambda + \lambda F(z)}$:
\begin{lemma}
\label{lem:generating_function_A}
Let $A^{(\lambda)}(z) = \sum_{n=0}^\infty \alpha_n^{(\lambda)} z^n$, where $\alpha_n^{(\lambda)}$ is the coefficient in Definition~\ref{def:equivalence_alpha}. Then, 
$A^{(\lambda)}(z)  = \frac{\lambda}{1 - \lambda + \lambda F(z)}$.
\end{lemma}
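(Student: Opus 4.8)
The plan is to turn the recursion satisfied by $\alpha_n^{(\lambda)}$ into a single functional equation for the generating function $A^{(\lambda)}(z)$ and then solve that equation algebraically. By Lemma~\ref{lem:equivalence_alpha}, the coefficients obey $\alpha_0^{(\lambda)}=\lambda$ together with $\alpha_n^{(\lambda)} = -\lambda\sum_{k=0}^{n-1}\widehat{\mu}(n-k)\,\alpha_k^{(\lambda)}$ for $n\ge 1$. I would multiply this $n\ge 1$ relation by $z^n$, sum over $n\ge 1$, and add back the $n=0$ term $\alpha_0^{(\lambda)}=\lambda$, so that $A^{(\lambda)}(z)=\lambda-\lambda\sum_{n\ge 1}\sum_{k=0}^{n-1}\widehat{\mu}(n-k)\,\alpha_k^{(\lambda)}z^n$, with the double sum poised to be recognized as a Cauchy product.

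The key step is the reindexing of that double sum. Setting $m=n-k\ge 1$ decouples the two indices and gives $\sum_{n\ge 1}\sum_{k=0}^{n-1}\widehat{\mu}(n-k)\,\alpha_k^{(\lambda)}z^n=\bigl(\sum_{m\ge 1}\widehat{\mu}(m)z^m\bigr)\,A^{(\lambda)}(z)$. Because $\mu$ is a probability measure, $\widehat{\mu}(0)=\int_0^1 d\mu=1$, so $\sum_{m\ge 1}\widehat{\mu}(m)z^m=F(z)-1$. Hence the entire recursion collapses to the single identity $A^{(\lambda)}(z)=\lambda-\lambda\,(F(z)-1)\,A^{(\lambda)}(z)$, equivalently $\bigl(1-\lambda+\lambda F(z)\bigr)\,A^{(\lambda)}(z)=\lambda$, from which the claimed formula follows by dividing.

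The one genuine subtlety, and the step I expect to be the main obstacle, is justifying that division: I must verify that $1-\lambda+\lambda F(z)$ never vanishes on $\D$, so that the quotient is legitimate and is indeed the analytic function whose Taylor coefficients are the $\alpha_n^{(\lambda)}$. Here I would invoke the bound $\Re F(z)>\tfrac12$ on $\D$ established earlier: for $\lambda\in(0,2)$ one has $\Re\bigl(1-\lambda+\lambda F(z)\bigr)=1-\lambda+\lambda\,\Re F(z)>1-\lambda+\tfrac{\lambda}{2}=1-\tfrac{\lambda}{2}>0$, so the denominator has strictly positive real part and is in particular nonzero throughout $\D$. Consequently $\tfrac{\lambda}{1-\lambda+\lambda F(z)}$ is analytic on $\D$; clearing the denominator and matching coefficients shows its Taylor coefficients satisfy the very recursion above, and since that recursion together with the initial value $\lambda$ determines the sequence uniquely, they must coincide with $\alpha_n^{(\lambda)}$. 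The power-series manipulation itself can be made rigorous either directly, since $|\widehat{\mu}(n)|\le 1$ forces at most geometric growth of $|\alpha_n^{(\lambda)}|$ and hence a positive radius of convergence, or by carrying out the computation in the ring of formal power series and then using the nonvanishing of the denominator to upgrade the formal identity to an analytic one on $\D$.
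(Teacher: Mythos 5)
Your proposal is correct and follows essentially the same route as the paper: multiply the recursion $\alpha_n^{(\lambda)} = -\lambda\sum_{k=0}^{n-1}\widehat{\mu}(n-k)\,\alpha_k^{(\lambda)}$ by $z^n$, recognize the resulting double sum as the Cauchy product $A^{(\lambda)}(z)\,(F(z)-1)$, and solve the functional equation $A^{(\lambda)}(z)=\lambda-\lambda\,(F(z)-1)\,A^{(\lambda)}(z)$. The only difference is that you explicitly justify the division via $\Re\bigl(1-\lambda+\lambda F(z)\bigr)>1-\tfrac{\lambda}{2}>0$ and address convergence of the series, points the paper leaves implicit in this proof and only invokes later (in the proof of the $\lambda$-effectiveness theorem); this added care is sound and strengthens rather than alters the argument.
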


\begin{proof}

We want to find the generating function \(A^{(\lambda)}(z) = \sum_{n=0}^\infty \alpha_n^{(\lambda)} z^n\), where the coefficients satisfy:
\begin{itemize}
    \item \textbf{Base Case:} \(\alpha_0^{(\lambda)} = \lambda\)
    \item \textbf{Recursion:} \(\alpha_n^{(\lambda)} = -\lambda \sum_{k=0}^{n-1} \widehat{\mu}(n-k) \alpha_k^{(\lambda)}\) for \(n \ge 1\).
\end{itemize}
Let \(F(z) = \sum_{n=0}^\infty \widehat{\mu}(n) z^n\) be the generating function for the Fourier--Stieltjes coefficients of the probability measure \(\mu\). Note that \(\widehat{\mu}(0) = 1\).

We start with the definition of \(A^{(\lambda)}(z)\) and substitute the recursion for \(n \ge 1\):
\begin{align*}
A^{(\lambda)}(z) &= \alpha_0^{(\lambda)} + \sum_{n=1}^\infty \alpha_n^{(\lambda)} z^n \\
&= \lambda - \lambda \sum_{n=1}^\infty \left( \sum_{k=0}^{n-1} \widehat{\mu}(n-k) \alpha_k^{(\lambda)} \right) z^n \label{eq:gen_func_step1_0based} \tag{1}\\
&= \lambda - \lambda \sum_{k=0}^\infty \sum_{m=1}^\infty \widehat{\mu}(m) \alpha_k^{(\lambda)} z^{m+k}\\
&= \lambda - \lambda \left( \sum_{k=0}^\infty \alpha_k^{(\lambda)} z^k \right) \left( \sum_{m=1}^\infty \widehat{\mu}(m) z^m \right)\\
&= \lambda - \lambda \Bigl[ A^{(\lambda)}(z) ( F(z) - 1 ) \Bigr],
\end{align*}
where in the last equality, we have used \(F(z) - \widehat{\mu}(0) = F(z) - 1\). 

Solving for \(A^{(\lambda)}(z)\) yields
\[ A^{(\lambda)}(z) = \frac{\lambda}{1 - \lambda + \lambda F(z)}. \]
This confirms the identity for the generating function.
\end{proof}

\subsection{The main equivalence theorem: $\lambda$-effectiveness of the Fourier exponential functions for Borel singular measure}

We begin by stating the main theorem that connects the behavior of the generalized Kaczmarz algorithm to the properties of the underlying spectral measure $\mu$.

\begin{theorem}[Generalized Mycielski–Herr–Weber]
\label{thm:Generalized Mycielski–Herr–Weber}
Let $\{e_n\}_{n\ge0}$ be the Fourier exponential functions $e_n(x) = \{ e^{2\pi i n x}\}_{n=0}^\infty$ on $\T$. 
Let
\[
F(z)=\int_{\T}\frac{1}{1 - z\overline\xi}\,d\mu(\xi)
\]
be its Cauchy transform associated with the spectral measure~$\mu$ on~$\T$, and fix $\lambda\in(0,2)$ with $\lambda \neq 1$. Consider the generating function $A^{(\lambda)}(z) = \sum_{n=0}^\infty \alpha_n^{(\lambda)}\,z^n$ associated with $F(z)$ from Section~\ref{sec:The Generating Function}.
Then the following statements are equivalent:
\begin{enumerate}
  \item \textbf{\(\lambda\)-effectiveness:} For every $f\in\Hcal$, the relaxed Kaczmarz iterates $w_n = w_{n-1} + \lambda\langle f - w_{n-1},\varphi_n\rangle\varphi_n$, with $w_{-1}=0$, converge in norm to~$f$.

  \item The coefficients of the generating function satisfy
    \[
      \sum_{n=1}^\infty\bigl|\alpha_n^{(\lambda)}\bigr|^2 = \frac{\lambda^3}{2-\lambda}.
    \]

  \item   The integral of the squared modulus on the circle at $\lambda$ is
    \[
      \int_{\T}\bigl|A^{(\lambda)}(e^{it})-\lambda\bigr|^2\,\frac{dt}{2\pi} = \frac{\lambda^3}{2-\lambda}.
    \]
  \item  \textbf{Spectral–measure singularity for $\lambda \neq 1$:} The spectral measure $\mu$ is singular with respect to the Lebesgue measure. 
\end{enumerate}
\end{theorem}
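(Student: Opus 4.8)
The plan is to prove the four-way equivalence as a single cycle $(1)\Rightarrow(2)\Leftrightarrow(3)\Leftrightarrow(4)\Rightarrow(1)$, with a Hardy-space inner function doing the heavy lifting in the middle and the scaled Parseval identity of Corollary~\ref{cor:scalar_parseval_identity} linking the dynamics to the coefficients $\alpha_n^{(\lambda)}$. The central object I would introduce is
\[
b_\lambda(z):=\frac{\lambda F(z)-1}{1-\lambda+\lambda F(z)},
\]
which is analytic on $\D$ because $\Re\bigl(1-\lambda+\lambda F\bigr)>\tfrac{2-\lambda}{2}>0$ there (using $\Re F>\tfrac12$, established before Lemma~\ref{lem:boundary_lemma}). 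A one-line modulus computation gives $|b_\lambda|^2\le 1$ on $\D$ iff $\Re F\ge\tfrac12$, with boundary equality $|b_\lambda|=1$ a.e.\ iff $\Re F=\tfrac12$ a.e.; so by Lemma~\ref{lem:boundary_lemma}, $b_\lambda$ is inner if and only if $\mu$ is singular. Two algebraic identities then drive everything: from Lemma~\ref{lem:generating_function_A} one has $1-b_\lambda=\tfrac{2-\lambda}{1-\lambda+\lambda F}$, hence
\[
A^{(\lambda)}=\frac{\lambda}{2-\lambda}\,(1-b_\lambda),\qquad
A^{(\lambda)}-\lambda=-\frac{\lambda}{2-\lambda}\,\bigl(b_\lambda-b_\lambda(0)\bigr),\qquad b_\lambda(0)=\lambda-1.
\]

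For $(2)\Leftrightarrow(3)$ I would note $|A^{(\lambda)}|=\lambda/|1-\lambda+\lambda F|\le\tfrac{2\lambda}{2-\lambda}$, so $A^{(\lambda)}\in H^\infty\subset H^2$, and Parseval identifies both quantities with $\sum_{n\ge1}|\alpha_n^{(\lambda)}|^2$ (the boundary function of $A^{(\lambda)}-\lambda$ has Fourier coefficients $\alpha_n^{(\lambda)}$ for $n\ge1$, since $\alpha_0^{(\lambda)}=\lambda$). For $(3)\Leftrightarrow(4)$ I would insert the displayed identity for $A^{(\lambda)}-\lambda$ and use the variance formula $\int_{\T}|b_\lambda-b_\lambda(0)|^2\tfrac{dt}{2\pi}=\int_{\T}|b_\lambda|^2\tfrac{dt}{2\pi}-|b_\lambda(0)|^2$. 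Singularity gives $\int_{\T}|b_\lambda|^2=1$, so the integral equals $\tfrac{\lambda^2}{(2-\lambda)^2}\bigl(1-(\lambda-1)^2\bigr)=\tfrac{\lambda^3}{2-\lambda}$; non-singularity forces $|b_\lambda|<1$ on a set of positive measure, hence $\int_{\T}|b_\lambda|^2<1$ and the integral is strictly smaller. This is exactly where $\lambda\neq1$ enters substantively: for $\lambda=1$ the constant $b_\lambda(0)$ vanishes and the Lebesgue case is no longer excluded.

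The implication $(1)\Rightarrow(2)$ I would get by applying Corollary~\ref{cor:scalar_parseval_identity} to the cyclic unit vector $e_0$: under $\lambda$-effectiveness, $\sum_{n\ge0}|\inner{h_n^{(\lambda)},e_0}|^2=\tfrac{\lambda}{2-\lambda}$. By Lemma~\ref{lem:h_alpha_relation_0based} and $\inner{e_\ell,e_0}=\widehat\mu(\ell)$ the coefficients form the convolution $\inner{h_n^{(\lambda)},e_0}=\sum_{m=0}^{n}\alpha_m^{(\lambda)}\widehat\mu(n-m)$, which the recursion of Lemma~\ref{lem:equivalence_alpha} collapses to $\inner{h_0^{(\lambda)},e_0}=\lambda$ and $\inner{h_n^{(\lambda)},e_0}=\tfrac{\lambda-1}{\lambda}\alpha_n^{(\lambda)}$ for $n\ge1$; substituting and solving the scalar equation (dividing by $(\lambda-1)^2$, whence $\lambda\neq1$) yields precisely $\sum_{n\ge1}|\alpha_n^{(\lambda)}|^2=\tfrac{\lambda^3}{2-\lambda}$.

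The hard direction is $(4)\Rightarrow(1)$, and I expect its final step to be the main obstacle. Reversing the last computation shows $\sum_{n\ge0}|\inner{h_n^{(\lambda)},e_0}|^2=\tfrac{\lambda}{2-\lambda}\norm{e_0}^2$, i.e.\ scaled Parseval holds at the single vector $e_0$. Writing $L:=\mathrm{SOT}\text{-}\lim_N\widetilde T_N^*\widetilde T_N$ (the limit exists since this is a decreasing sequence of positive contractions), Lemma~\ref{lem:Generalized_Kaczmarz_Scaled_Parseval}(i) recasts this as $\inner{Le_0,e_0}=0$, hence $Le_0=0$; and effectiveness is exactly $L=0$. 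By linear density of $\{e_m\}_{m\ge0}$ (Lemma~\ref{lem:singular_Borel_measure_exponential_function}) together with $0\le L$, it suffices to upgrade $Le_0=0$ to $Le_m=0$ for every $m$. The plan is to compute $\sum_n\inner{h_n^{(\lambda)},e_m}z^n=A^{(\lambda)}(z)\bigl(z^m F(z)+R_m(z)\bigr)$ with $R_m(z)=\sum_{j=0}^{m-1}\overline{\widehat\mu(m-j)}\,z^j$, and to evaluate its $H^2$-norm using $A^{(\lambda)}=\tfrac{\lambda}{2-\lambda}(1-b_\lambda)$ and the isometry of multiplication by the inner function $b_\lambda$, thereby reducing the general-$m$ case to the same algebra that gave $\tfrac{\lambda}{2-\lambda}$ at $m=0$. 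Equivalently, one may use that the shift $U\colon f\mapsto e^{2\pi i x}f$ is unitary on $L^2(\mu)$ with $UP_nU^*=P_{n+1}$ and invoke Clark/model-space theory (a generalized normalized Cauchy transform realizing $L^2(\mu)$ inside $K_{b_\lambda}$). The delicate point — the reason a naive $U$-invariance of $L$ does not close the argument — is that the \emph{one-sided} ordering of the projections breaks exact shift-invariance, so the stationary structure must be exploited carefully rather than by symmetry alone.
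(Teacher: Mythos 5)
Your middle equivalences are correct and essentially coincide with the paper's own proof: your $b_\lambda$ is exactly the negative of the paper's inner function $\phi(z)=\frac{1-\lambda F(z)}{1-\lambda+\lambda F(z)}$, your boundary characterization is Lemma~\ref{lem:boundary_lemma}, and your variance identity $\int_{\T}|b_\lambda-b_\lambda(0)|^2\,\frac{dt}{2\pi}=\int_{\T}|b_\lambda|^2\,\frac{dt}{2\pi}-|b_\lambda(0)|^2$ is precisely the paper's computation $\|g\|_{H^2}^2=r^2-|c|^2$ with $c=\frac{\lambda-1}{2-\lambda}$, $r=\frac{1}{2-\lambda}$ (note $b_\lambda(0)=\lambda-1=c/r$); your (2)$\Leftrightarrow$(3) is the paper's Parseval step verbatim. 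Your (1)$\Rightarrow$(2) is correct and is a nice compression: Corollary~\ref{cor:scalar_parseval_identity} applied at $e_0$ together with the collapse $\inner{h_n^{(\lambda)},e_0}=\frac{\lambda-1}{\lambda}\,\alpha_n^{(\lambda)}$ for $n\ge1$ replaces the paper's recursion $r_n^2=r_{n-1}^2+\bigl(1-\tfrac{2}{\lambda}\bigr)|\alpha_n^{(\lambda)}|^2$ for $\|u_n\|^2$; both rest on the same identity $\sum_{k=0}^{n-1}\overline{\alpha_k^{(\lambda)}}\,\widehat\mu(k-n)=-\tfrac{1}{\lambda}\overline{\alpha_n^{(\lambda)}}$.

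The genuine gap is (4)$\Rightarrow$(1), which you yourself flag as the main obstacle: as written it is a plan, not a proof. Your reduction is sound --- effectiveness is equivalent to $L:=\mathrm{SOT}\text{-}\lim_N \widetilde T_N^*\widetilde T_N=0$, the scalar Parseval at $e_0$ plus Lemma~\ref{lem:Generalized_Kaczmarz_Scaled_Parseval}(i) and $L\ge0$ give $Le_0=0$, and by density everything reduces to showing $\sum_n|\inner{h_n^{(\lambda)},e_m}|^2=\frac{\lambda}{2-\lambda}$ for every $m$. Your generating-function identity $\sum_n\inner{h_n^{(\lambda)},e_m}z^n=A^{(\lambda)}(z)\bigl(z^mF(z)+R_m(z)\bigr)$ is correct, but it only converts the problem: after substituting $A^{(\lambda)}=\frac{\lambda}{2-\lambda}(1-b_\lambda)$, the cross term $\int_{\T} b_\lambda\,p\,\overline{q}\,\frac{dt}{2\pi}$ (with the degree-$m$ polynomials $p,q$ built from $z^m$ and $R_m$) must be evaluated through the Taylor coefficients $\widehat{b_\lambda}(t)=-\frac{2-\lambda}{\lambda}\alpha_t^{(\lambda)}$ and a second pass through the $\alpha$-recursion; this is a Fourier-side reconstruction of exactly the work the paper does in Appendix~B, where the tail identity \eqref{Tail} (there $b_n$ denotes a block of $K_n(e_i)$, not your $b_\lambda$), the shift-reduction formulas \eqref{eq:shift-reduction-fixed}--\eqref{eq:i0-identity-fixed}, and the stationarity relation \eqref{eq:norm-shift}, $\|v_{n,i}\|=\|u_{n-i}\|$, reduce $K_n(e_i)\to e_i$ for all $i$ to the single condition $\|u_n\|\to 0$, i.e.\ to~(2). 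Until that computation (or the appendix identities) is actually carried out, the singular-implies-effective direction remains unproven. One smaller correction: $\lambda\neq1$ does not enter substantively in (3)$\Leftrightarrow$(4) --- your own argument works at $\lambda=1$, since $|b_1|\le1$ a.e.\ on $\T$ and $\int_{\T}|b_1|^2\,\frac{dt}{2\pi}=1$ still forces $|b_1|=1$ a.e., hence singularity. The hypothesis is needed only in (1)$\Leftrightarrow$(2): at $\lambda=1$ the coefficient $\frac{\lambda-1}{\lambda}$ in \eqref{eq:i0-identity-fixed} vanishes (your division by $(\lambda-1)^2$ fails), which is exactly why Lebesgue measure is effective yet violates~(2).
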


\begin{remark}
In the case when $\lambda = 1$ (the regular Kaczmarz algorithm case), the Fourier exponential functions
are effective if and only if either the spectral measure $\mu$ is identical to the normalized Lebesgue measure or is singular with respect to the Lebesgue measure, which is shown by Mycielski et al. in \cite{kwapien2001kaczmarz}.
\end{remark}

\subsection{Proof of the Theorem~\ref{thm:Generalized Mycielski–Herr–Weber}}

We prove the equivalences in the theorem in the following order: (i) $\iff$ (ii), (ii) $\iff$ (iii), and (iii) $\iff$ (iv). 

The proof starts with making use of the operators $\widetilde{T}_n$ and $\widetilde{Q}_n$  as established in  Section 4. 

The error of the Kaczmarz iteration $w_n = w_{n-1} + \lambda P_n(f-w_{n-1})$ (if started at $n=0$ with $w_{-1}=0$) would satisfy $f - w_n = \widetilde{T}_n f$. By the definition of $\lambda$-effectiveness, this implies that $\widetilde{T}_n f \to 0$.
From the preliminaries in Section 4, we have the identity $I - \widetilde{T}_n = \sum_{j=0}^n \widetilde{Q}_j$ and Lemma~\ref{lem:Generalized_Kaczmarz_Scaled_Parseval} implies that $f = \sum_{j=0}^\infty \widetilde{Q}_j f$. Recall that the operators $\widetilde{Q}_n$ have the representation $\widetilde{Q}_n f = e_n \langle h_n^{(\lambda)}, f \rangle_{L^2(\mu)}$ for $n \ge 0$. Hence, $\{e_n\}_{n\ge0}$ is $\lambda$-effective if and only if $\sum_{j=0}^n \widetilde{Q}_j f = \sum_{j=0}^n e_j \langle h_j^{(\lambda)}, f \rangle_{L^2(\mu)} \rightarrow f$ as $n \rightarrow \infty$. 

Let $K_n(f) := \sum_{j=0}^n e_j \langle h_j^{(\lambda)}, f \rangle_{L^2(\mu)}$. Then, Lemma~\ref{lem:h_alpha_relation_0based} implies that 
\begin{align}
K_n(f) = \sum_{k=0}^n \left(\sum_{s=0}^k \alpha_{k-s}^{(\lambda)} \langle e_s, f \rangle_{L^2(\mu)}\right)e_k,    
\end{align}
and since $\{e_i\}_{i\ge0}$ is linearly dense by Lemma~\ref{lem:singular_Borel_measure_exponential_function}, this statement is equivalent to saying that $\{e_i\}_{i\ge0}$ is $\lambda$-effective if and only if $\lim_{n \rightarrow \infty} K_n(e_i) = e_i$, for $i=0,1,2, \dots$.

\subsection{Proof of (i) $\iff$ (ii): effectiveness and the coefficient sum}

Let
\[
u_n:=\sum_{l=0}^{n}\alpha_l^{(\lambda)}\,e_l
\quad \text{and} \quad
v_{n,i}:=\sum_{l=0}^{\,n-i}\alpha_l^{(\lambda)}\,e_{\,l+i}\quad(n\ge i).
\]
Using our stationarity convention
\(\langle e_p,e_q\rangle=\widehat{\mu}(p-q)\), we have
\begin{align*}
\|v_{n,i}\|^2
&=\Big\langle \sum_{l=0}^{n-i}\alpha_l^{(\lambda)}e_{l+i},\;
            \sum_{m=0}^{n-i}\alpha_m^{(\lambda)}e_{m+i}\Big\rangle\\
&=\sum_{l,m=0}^{n-i}\overline{\alpha_l^{(\lambda)}}\,\alpha_m^{(\lambda)}
   \,\langle e_{l+i},e_{m+i}\rangle\\
&=\sum_{l,m=0}^{n-i}\overline{\alpha_l^{(\lambda)}}\,\alpha_m^{(\lambda)}
   \,\langle e_{l},e_{m}\rangle\\
&=\Big\langle \sum_{l=0}^{n-i}\alpha_l^{(\lambda)}e_l,\;
            \sum_{m=0}^{n-i}\alpha_m^{(\lambda)}e_m\Big\rangle
=\|S_{n-i}\|^2,
\end{align*}
where the third equality  is due to the stationary property.
Therefore,
\begin{equation}\label{eq:norm-shift}
\|v_{n,i}\|^2=\|u_{n-i}\|^2, \quad n\ge i.
\end{equation}

The following identities are proved in Appendix B:
\begin{equation}\label{eq:shift-reduction-fixed}
K_n(e_i)-e_i
=\frac{\lambda-1}{\lambda}\,v_{n,i}
+\sum_{j=1}^{i}\widehat{\mu}(-j)\,v_{n,\,i-j}
\quad(i\ge1,\;n\ge i),
\end{equation}
and for $i=0$,
\begin{equation}\label{eq:i0-identity-fixed}
K_n(e_0)-e_0=\frac{\lambda-1}{\lambda}\,u_n\qquad(n\ge0).
\end{equation}
One can also easily check that for $\lambda = 1$, the above Equations \eqref{eq:shift-reduction-fixed} and \eqref{eq:i0-identity-fixed} reduce to the equations derived in the proof of Theorem 2 in \cite{kwapien2001kaczmarz}, indicating that \eqref{eq:shift-reduction-fixed} and \eqref{eq:i0-identity-fixed} generalize those in \cite{kwapien2001kaczmarz}.

We see by \eqref{eq:norm-shift} that each term on the right-hand side of
\eqref{eq:shift-reduction-fixed} has norm controlled by some $\|u_{\cdot}\|$.
Since $\lambda \in (0,2)$ and not equal to $1$, we have
\[
\{e_k\}\text{ is $\lambda$-effective}
\quad\Longleftrightarrow\quad
\|u_n\|\longrightarrow 0.
\]

Set $r_n^2:=\|u_n\|^2$. Since $u_n=u_{n-1}+\alpha_n^{(\lambda)}e_n$,
\begin{align*}
r_n^2
&=\|u_{n-1}\|^2+\|\alpha_n^{(\lambda)}e_n\|^2
  +2\Re\!\big\langle u_{n-1},\,\alpha_n^{(\lambda)}e_n\big\rangle \\
&=r_{n-1}^2+|\alpha_n^{(\lambda)}|^2
  +2\Re\!\left(\alpha_n^{(\lambda)}
      \sum_{k=0}^{n-1}\overline{\alpha_k^{(\lambda)}}\,\widehat{\mu}(k-n)\right).
\end{align*}
Taking conjugates of the recurrence
\(
\alpha_n^{(\lambda)}=-\lambda\sum_{t=1}^{n}\widehat\mu(t)\,\alpha_{n-t}^{(\lambda)}
\)
and using $\widehat{\mu}(k-n)=\overline{\widehat{\mu}(n-k)}$ gives
\[
\sum_{k=0}^{n-1}\overline{\alpha_k^{(\lambda)}}\,\widehat{\mu}(k-n)
=-\frac{1}{\lambda}\,\overline{\alpha_n^{(\lambda)}},
\]
hence
\[
r_n^2
=r_{n-1}^2+\Bigl(1-\frac{2}{\lambda}\Bigr)\,|\alpha_n^{(\lambda)}|^2,
\qquad
r_0^2=\|\alpha_0^{(\lambda)}e_0\|^2=\lambda^2.
\]
Iterating, we get
\[
r_n^2=\lambda^2+\Bigl(1-\frac{2}{\lambda}\Bigr)\sum_{k=1}^{n}|\alpha_k^{(\lambda)}|^2,
\]
and therefore, $\|u_n\|\to0$ iff
\[
0=\lambda^2+\Bigl(1-\frac{2}{\lambda}\Bigr)\sum_{k=1}^{\infty}|\alpha_k^{(\lambda)}|^2
\quad\Longleftrightarrow\quad
\sum_{k=1}^{\infty}|\alpha_k^{(\lambda)}|^2=\frac{\lambda^3}{2-\lambda}.
\]
This proves \textup{(i)} $\iff$ \textup{(ii)}.

\subsection{Proof of (ii) $\iff$ (iii): the coefficient sum and the integral}
This equivalence is a direct consequence of Parseval's theorem. As we have shown in Section 7.2, since $\Re F(z) >  \frac{1}{2}$ and the denominator of $A^{(\lambda)}(z)$ is $1- \lambda + \lambda F(z)$, the function $A^{(\lambda)}(z) - \lambda$ is analytic in the unit disk $\D$ and belongs to the Hardy space $H^2(\D)$. Since its Taylor series at $z=0$ is $\sum_{n=1}^\infty \alpha_n^{(\lambda)} z^n$, by Parseval's theorem, we have 
\[
\sum_{n=1}^\infty |\alpha_n^{(\lambda)}|^2 = \int_{\T} |A^{(\lambda)}(e^{it}) - \lambda|^2 \,\frac{dt}{2\pi}.
\]
This shows that the identity in (ii) is equivalent to the integral form in (iii).

\subsection{Proof of (iii) $\iff$ (iv): The integral identity and singularity}
This is the core of the proof. We show that the integral identity holds if and only if the measure $\mu$ is singular. Let $g(z) = \frac{A^{(\lambda)}(z)-\lambda}{\lambda} = \frac{\lambda(1-F(z))}{1-\lambda+\lambda F(z)}$, where we have used Lemma~\ref{lem:generating_function_A}.  The integral identity (iii) is equivalent to proving $\|g\|_{H^2}^2 = \int_{\T}|g(e^{it})|^2 \frac{dt}{2\pi} = \frac{\lambda}{2-\lambda}$.

\begin{proof}[Proof of (iv) $\implies$ (iii)]
Assume $\mu$ is a singular measure. A key property of the Cauchy transform is that this implies its boundary values satisfy $\Re F(e^{it}) = \frac{1}{2}$ for almost every $t$ as established in Lemma~\ref{lem:boundary_lemma}.

Now, consider the linear fractional transformation $h(w) = \frac{\lambda(1-w)}{1-\lambda+\lambda w}$. This transformation maps the line $\Re w = 1/2$ to a circle. Let's find the center and the radius.
The points $w_1 = 1/2$ and $w_2 \to \infty$ on the line are mapped to:
\begin{align*}
h(1/2) &= \frac{\lambda(1-1/2)}{1-\lambda+\lambda/2} = \frac{\lambda}{2-\lambda} \\
h(\infty) &= -1.
\end{align*}
These two points form the diameter of the image circle. The center $c$ and radius $r$ are
\[
c = \frac{1}{2}\left(\frac{\lambda}{2-\lambda} - 1\right) = \frac{\lambda-1}{2-\lambda}, \quad r = \frac{1}{2}\left(\frac{\lambda}{2-\lambda} - (-1)\right) = \frac{1}{2-\lambda}.
\]
The function $g(z)=h(F(z))$ is analytic in $\D$. Since $\Re F(z) > 1/2$ in $\D$, $g(z)$ maps $\D$ into the interior of this image circle, i.e., $|g(z)-c|<r$.
Since $\mu$ is singular, for a.e. $t$, $\Re F(e^{it})=1/2$, which means the boundary values $g(e^{it})$ must lie on the image circle. 
\[
|g(e^{it}) - c|^2 = r^2 \quad \text{for a.e. $t$}.
\]
Expanding this identity gives $|g(e^{it})|^2 - 2\Re(\bar{c} g(e^{it})) + |c|^2 = r^2$.
Then, 
\[
\int_{\T} |g(e^{it})|^2\,\frac{dt}{2\pi} - 2\Re\left(\bar{c} \int_{\T} g(e^{it})\,\frac{dt}{2\pi}\right) + |c|^2 = r^2.
\]
For any $H^2$ function $g$, $\int_{\T} g(e^{it})\,\frac{dt}{2\pi} = g(0)$. We have $F(0)=1$, so $g(0) = h(F(0)) = h(1) = 0$.
Thus, so the identity simplifies to
\[
\int_{\T} |g(e^{it})|^2\,\frac{dt}{2\pi} + |c|^2 = r^2.
\]
Therefore,  we have
\[
\|g\|_{H^2}^2 = r^2 - |c|^2 = \left(\frac{1}{2-\lambda}\right)^2 - \left(\frac{\lambda-1}{2-\lambda}\right)^2
= \frac{\lambda}{2-\lambda}.
\]
This is precisely the condition equivalent to (iii). Thus, if $\mu$ is singular, the integral identity holds.
\end{proof}

\begin{proof}[Proof of (iii) $\implies$ (iv)]
Assume that the integral identity holds, which means $\|g\|_{H^2}^2 = \frac{\lambda}{2-\lambda}$. From the derivation above, this is equal to $r^2 - |c|^2$. The boundary values $g(e^{it})$ exist a.e., and since $g$ maps the disk $\D$ into the disk $|w-c|<r$,  $|g(e^{it})-c| \le r$ a.e.

The calculation $\|g\|_{H^2}^2 = \int |g|^2 = r^2 - |c|^2$ relied on the fact that $\int \Re(\bar{c}g) = 0$, and hence,
\[
r^2 - |c|^2 - \int |g|^2 = 0.
\]
Using $|g|^2 = |(g-c)+c|^2  = |g-c|^2+2\Re(\bar{c}g) - 2|c|^2 + |c|^2$, and  $\int g=g(0)=0$, this becomes
\[
r^2 - \int |g(e^{it})-c|^2\,\frac{dt}{2\pi} = 0.
\]
We have an integral of a non-negative function $r^2 - |g(e^{it})-c|^2$ (since $|g-c|\le r$ a.e.), and the integral is zero. This implies the integrand must be zero almost everywhere.
\[
|g(e^{it}) - c|^2 = r^2 \quad \text{for a.e. } t.
\]
This means the boundary values of $g(z)$ lie on the circle $|w-c|=r$. Since $h$ is a Mobius transform, it maps lines/circles to lines/circles. For the image $g(e^{it})=h(F(e^{it}))$ to be on the boundary of the image disk, the pre-image $F(e^{it})$ must lie on the boundary of the pre-image domain, which is the line $\Re w=1/2$.
Therefore, Lemma~\ref{lem:boundary_lemma}, $\Re F(e^{it}) = 1/2$ a.e. implies that the measure $\mu$ is singular with respect to the Lebesgue measure. This shows (iii) $\implies$ (iv).
\end{proof}

\subsection{Application: $\lambda$-effectiveness of the Fourier exponential functions}

\begin{theorem}[$\lambda$-effectiveness of the Fourier exponential functions]
\label{thm:lambda_effectiveness_exponential_functions}
Let $\mu$ be a singular Borel probability measure on $[0,1)$. Then the system of Fourier exponential functions with nonnegative indices, \(\{e^{2\pi i n x}\}_{n\ge0}\)
\(
\)
is $\lambda$-effective in $L^2(\mu)$ for every $\lambda\in(0,2)$.
\end{theorem}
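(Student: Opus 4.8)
The plan is to reduce the statement entirely to equivalences already in hand, handling the two regimes $\lambda=1$ and $\lambda\neq1$ separately, since both the governing algebra and the classical literature distinguish them.

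For $\lambda\in(0,2)$ with $\lambda\neq1$, I would invoke Theorem~\ref{thm:Generalized Mycielski–Herr–Weber} directly. The hypothesis that $\mu$ is a singular Borel probability measure on $[0,1)$ is precisely statement (iv) of that theorem, and the established chain (i) $\iff$ (ii) $\iff$ (iii) $\iff$ (iv) then delivers statement (i), i.e.\ $\lambda$-effectiveness of $\{e^{2\pi i n x}\}_{n\ge0}$ in $L^2(\mu)$. No additional work is required in this regime: singularity is both necessary and sufficient, and the analytic content sits in the (iii) $\iff$ (iv) step rather than here.

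For the excluded value $\lambda=1$, I would appeal to the classical theory recalled in Section~7. Lemma~\ref{lem:singular_Borel_measure_exponential_function} guarantees that for singular $\mu$ the system $\{e_n\}_{n\ge0}$ is linearly dense in $L^2(\mu)$, while the computation preceding Theorem~\ref{thm:equivalence_stationary} shows that $\{e_n\}$ is a stationary sequence of unit vectors whose spectral measure is $\mu$ itself. Since $\mu$ is singular, Theorem~\ref{thm:equivalence_stationary} yields effectiveness, i.e.\ $1$-effectiveness. (The alternative admissible case in that theorem—normalized Lebesgue measure—is irrelevant under our hypothesis but harmlessly consistent with it.) Combining the two regimes covers all of $(0,2)$.

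I do not expect a genuine obstacle at this stage, because all the analytic difficulty has already been absorbed into the proof of Theorem~\ref{thm:Generalized Mycielski–Herr–Weber}, where the Hardy-space and inner-function machinery—the Herglotz representation of $H(z)=2F(z)-1$ together with the boundary identity $\Re F(e^{it})=\tfrac12$ a.e.\ from Lemma~\ref{lem:boundary_lemma}—does the real work. The only care needed is bookkeeping: verifying that singularity, linear density, stationarity, and unit norm all hold for $\{e^{2\pi i n x}\}_{n\ge0}$ under a singular $\mu$, so that the two cases patch together into a single uniform statement valid for every $\lambda\in(0,2)$.
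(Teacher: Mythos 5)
Your proposal is correct and takes essentially the same route as the paper's own proof: the paper likewise splits into $\lambda=1$, handled by Theorem~\ref{thm:equivalence_stationary} (Theorem~2 of Kwapie\'n--Mycielski), and $\lambda\neq 1$, handled by the implication (iv)~$\Rightarrow$~(i) of Theorem~\ref{thm:Generalized Mycielski–Herr–Weber}, after verifying exactly the same two hypotheses you list—stationarity of $\{e_n\}_{n\ge0}$ with spectral measure $\mu$, and linear density in $L^2(\mu)$ via Lemma~\ref{lem:singular_Borel_measure_exponential_function}.
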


\begin{proof}
The case when $\lambda = 1$ follows from Theorem 2  in \cite{kwapien2001kaczmarz}.
We apply the criterion provided by Theorem~\ref{thm:Generalized Mycielski–Herr–Weber} for $\lambda \neq 1$, and we verify that the measure $\mu$ is the spectral measure for the exponential system $\{e_n\}_{n\ge 0}$ as follows.

\emph{Stationarity:} As established in Section~7, the system $\{e_n\}_{n\ge 0}$ is stationary, and its spectral measure is $\mu$. \emph{Linear density:} Since $\mu$ is singular, Lemma~\ref{lem:singular_Borel_measure_exponential_function} guarantees that $\{e_n\}_{n\ge 0}$ is linearly dense in $L^2(\mu)$. Therefore, the hypotheses of Theorem~\ref{thm:Generalized Mycielski–Herr–Weber} are satisfied and because the spectral measure is singular (Condition~(iv)), it follows that the exponential system is $\lambda$-effective (Condition~(i)) for every $\lambda\in(0,2)$.
\end{proof}

\subsection{Difference between \texorpdfstring{$\lambda=1$}{lambda=1} case and the other cases}

\subsection*{Inner function for the \texorpdfstring{$\lambda=1$}{lambda=1} case}
Note that when $\lambda = 1$, $g(z) = \frac{A^{(\lambda)}(z)-\lambda}{\lambda} = \frac{\lambda(1-F(z))}{1-\lambda+\lambda F(z)}$ in the proof of Theorem~\ref{thm:Generalized Mycielski–Herr–Weber} (in the step showing the equivalence between $(iii)$ and $(iv)$) reduces to
$ \frac{1 - F(z)}{F(z)}$.

For simplicity, define
\[
\phi_1(z)\;=\;\frac{1 - F(z)}{F(z)}.
\]
We now show this function is inner by checking its properties inside the unit disk and on its boundary.

\paragraph{In the disk \(\lvert z\rvert<1\):}
Since $\Re F(z)>\tfrac12$ for any non-trivial probability measure $\mu$, we have:
\[
\lvert\phi_1(z)\rvert
=\frac{\lvert 1 - F(z)\rvert}{\lvert F(z)\rvert}
< 1,
\]
because the condition $\Re w > 1/2$ implies that the point $w$ is closer to 1 than to 0. Thus, $\phi_1$ is a bounded analytic self‐map of the unit disk $\mathbb{D}$.

\paragraph{On the circle \(z=e^{it}\):}
Using the established fact that $\Re F(e^{it})=\tfrac12$ for our singular measure $\mu$, and writing $F(e^{it})=\tfrac12 + i\,v(t)$, we get:
\[
\phi_1(e^{it})
=\frac{1 - \bigl(\tfrac12 + i\,v(t)\bigr)}{\tfrac12 + i\,v(t)}
=\frac{\tfrac12 - i\,v(t)}{\tfrac12 + i\,v(t)}.
\]
The modulus is therefore:
\[
\bigl\lvert\phi_1(e^{it})\bigr\rvert
=\frac{\sqrt{\tfrac14 + v(t)^2}}{\sqrt{\tfrac14 + v(t)^2}}
=1
\quad\text{for a.e. }t.
\]

Since $\phi_1(z)$ is analytic on $\mathbb{D}$, satisfies $|\phi_1(z)|<1$ for $|z|<1$, and its radial boundary values satisfy $|\phi_1(e^{it})|=1$ a.e., it follows by definition that $\phi_1(z)$ is an inner function.

\paragraph{Implications of \(\phi_1\) being Inner:}

Since
\[
\phi_1(z)=\frac{1 - F(z)}{F(z)},
\]
we have shown
\[
|\phi_1(z)|<1\;\;(z\in\mathbb{D}),
\qquad
|\phi_1(e^{it})|=1\;\text{a.e.},
\]
so $\phi_1$ is an inner function.

\begin{remark}[Inner function for the general $\lambda$ case]
For the general $\lambda$ case, however, simple scaling of $g(z)$ does not produce an inner function; if we set $\sqrt{\frac{2-\lambda}{\lambda^3}} g(z) = \sqrt{\frac{2-\lambda}{\lambda^3}} \tfrac{\lambda - \lambda F(z)}{1 - \lambda + \lambda F(z)}$ based on $(ii)$ of Theorem~\ref{thm:Generalized Mycielski–Herr–Weber} is not an inner function. But it turns out that an affine transformation of $g$ produces the inner function as it is implied in Steps $(iii)$ and $(iv)$ of the proof of Theorem~\ref{thm:Generalized Mycielski–Herr–Weber}. To clarify the affine transformation, we extract the part of the proof of the inner function in the next subsection. 
\end{remark}

\subsection{The inner function for \texorpdfstring{$\lambda$}{lambda}-effectiveness}

In the proof of Theorem~\ref{thm:Generalized Mycielski–Herr–Weber} concerning Kaczmarz $\lambda$- effectiveness, we have introduced the auxiliary function $g(z) = h(F(z))$, where $h(w) = \frac{\lambda(1-w)}{1-\lambda+\lambda w}$. We showed that $h$ maps the half-plane $\Re w > 1/2$ to the open disk $|w-c|<r$, where
\[ c = \frac{\lambda-1}{2-\lambda} \quad \text{and} \quad r = \frac{1}{2-\lambda}. \]
The following proposition formalizes the idea of using the normalized function $\phi(z) = \frac{g(z)-c}{r}$. 

\begin{proposition}
Let $F(z)$ be the Cauchy transform of a probability measure $\mu$ on $\T$, and let $\lambda \in (0,2)$. Define the function $\phi(z)$ by
\[ \phi(z) = \frac{g(z)-c}{r}, \quad \text{where} \quad g(z) = \frac{\lambda(1-F(z))}{1-\lambda+\lambda F(z)}. \]
This function has the explicit form
\[ \phi(z) = \frac{1 - \lambda F(z)}{1-\lambda+\lambda F(z)}. \]
The condition that $\mu$ is a singular measure (and thus that the Kaczmarz algorithm is effective) is equivalent to the condition that this $\phi(z)$ is an inner function.
\end{proposition}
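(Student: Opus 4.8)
The plan is to split the statement into two largely independent parts: the algebraic identity giving the closed form of $\phi$, and the analytic equivalence between singularity of $\mu$ and $\phi$ being inner. For the closed form I would compute $\phi = (g-c)/r$ directly, inserting $c = \frac{\lambda-1}{2-\lambda}$ and $r = \frac{1}{2-\lambda}$. Writing $g-c$ over the common denominator $1-\lambda+\lambda F(z)$, the numerator is $\lambda(1-F)(2-\lambda)-(\lambda-1)(1-\lambda+\lambda F)$; here the constant terms contribute $\lambda(2-\lambda)+(\lambda-1)^2 = 1$ and the $F$-terms contribute $-\lambda\bigl[(2-\lambda)+(\lambda-1)\bigr]F = -\lambda F$, so the numerator collapses to $1-\lambda F$. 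Multiplying by $1/r = 2-\lambda$ then yields $\phi(z) = \frac{1-\lambda F(z)}{1-\lambda+\lambda F(z)}$. This step is purely mechanical.

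Next I would record the structural facts valid for every probability measure, independently of singularity. Since $\Re F(z) > \tfrac12$ on $\D$, the denominator has real part $1-\lambda+\lambda\,\Re F > 1-\tfrac{\lambda}{2} > 0$ for $\lambda\in(0,2)$, so it never vanishes and $\phi$ is analytic on $\D$. Moreover, as shown in the proof of Theorem~\ref{thm:Generalized Mycielski–Herr–Weber}, the fractional-linear map $h(w)=\frac{\lambda(1-w)}{1-\lambda+\lambda w}$ sends the half-plane $\{\Re w>\tfrac12\}$ into the disk $\{|w-c|<r\}$; since $\phi=(h\circ F - c)/r$, this gives $|\phi(z)|<1$ on $\D$. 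Thus $\phi$ is always a bounded analytic self-map of $\D$, and the only remaining content of the word ``inner'' is the boundary condition $|\phi(e^{it})|=1$ for a.e.\ $t$.

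The equivalence then reduces to a boundary-value computation. As $\phi\in H^\infty(\D)$, its radial limits exist a.e.; since $h$ is a M\"obius transformation and $F$ has radial limits a.e.\ (Lemma~\ref{lem:boundary_lemma}), away from the measure-zero preimage of the pole of $h$ one has $\phi(e^{it})=(h(F(e^{it}))-c)/r$ a.e. Because a M\"obius map carries generalized circles bijectively to generalized circles and sends $\{\Re w=\tfrac12\}$ onto $\{|w-c|=r\}$, the pointwise equivalences $|\phi(e^{it})|=1 \iff |h(F(e^{it}))-c|=r \iff \Re F(e^{it})=\tfrac12$ hold a.e. Hence $|\phi(e^{it})|=1$ a.e.\ if and only if $\Re F(e^{it})=\tfrac12$ a.e., which by Lemma~\ref{lem:boundary_lemma} holds precisely when $\mu$ is singular. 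Combined with the previous paragraph, this shows $\phi$ is inner iff $\mu$ is singular.

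I expect the main obstacle to be the boundary-value correspondence: justifying that the radial limit of $\phi$ equals $h$ applied to the radial limit of $F$ a.e.\ (that is, that boundary values commute with $h$ off its pole), and that the generalized-circle correspondence $\{\Re w=\tfrac12\}\leftrightarrow\{|w-c|=r\}$ transfers cleanly to the almost-everywhere boundary statement. The algebraic identity and the interior bound $|\phi|<1$ are routine and already implicit in the proof of Theorem~\ref{thm:Generalized Mycielski–Herr–Weber}.
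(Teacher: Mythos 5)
Your proposal is correct and takes essentially the same route as the paper's own proof: the same algebraic collapse of $(g-c)/r$ to $\frac{1-\lambda F(z)}{1-\lambda+\lambda F(z)}$, the same interior bound $|\phi(z)|<1$ via the image disk $|w-c|<r$, and the same boundary step using that the M\"obius map $h$ carries the line $\{\Re w=\tfrac12\}$ onto the circle $\{|w-c|=r\}$, combined with Lemma~\ref{lem:boundary_lemma}. Your only departures are cosmetic---a direct lower bound $\Re\bigl(1-\lambda+\lambda F(z)\bigr)>1-\tfrac{\lambda}{2}>0$ in place of the paper's pole-location argument for analyticity, and a more explicit justification that radial limits commute with $h$ off its pole (a point the paper passes over silently, and which is indeed harmless since $\Re F(e^{it})\ge\tfrac12$ a.e.\ keeps the boundary values away from the pole).
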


\begin{proof}
The proof consists of two parts: first, we derive the explicit form of $\phi(z)$, and second, we prove that it is an inner function if and only if $\mu$ is singular.

\textbf{1. Derivation of the explicit form.}
We substitute the expressions for $g(z)$, $c$, and $r$ into the definition of $\phi(z)$.
\begin{align*}
\phi(z) &= \frac{1}{r} \left( g(z) - c \right) \\
&= (2-\lambda) \left( \frac{\lambda(1-F(z))}{1-\lambda+\lambda F(z)} - \frac{\lambda-1}{2-\lambda} \right) \\
&= \frac{(2-\lambda)\lambda(1-F(z))}{1-\lambda+\lambda F(z)} - (\lambda-1)\\
&= \frac{(2\lambda-\lambda^2)(1-F(z)) - (\lambda-1)(1-\lambda+\lambda F(z))}{1-\lambda+\lambda F(z)} \\
&=  \frac{1 - \lambda F(z)}{1-\lambda+\lambda F(z)}.
\end{align*}
This confirms the simple explicit form, also confirming that when $\lambda=1$, it reduces to the negative of the known inner function $1 - 1/F(z)$ in \cite{kwapien2001kaczmarz, kwapien2006erratum}.

\textbf{2. Proof of equivalence.}
We now show that $\phi(z)$ is an inner function if and only if $\mu$ is singular. An inner function must be analytic in $\D$, satisfy $|\phi(z)|\le1$ in $\D$, and have boundary values $|\phi(e^{it})|=1$ a.e. on $\T$.

\textit{(Analyticity):} $F(z)$ is analytic in $\D$. The denominator $1-\lambda+\lambda F(z)$ is never zero in $\D$ because $\Re F(z)>1/2$ and the pole of the transformation $w \mapsto 1/(1-\lambda+\lambda w)$ is at $w= (\lambda-1)/\lambda$, which has real part less than $1/2$. Thus, $\phi(z)$ is analytic.

\textit{(Boundedness):} As shown in the main theorem's proof, $g(z)$ maps the unit disk into the open disk $|w-c|<r$. This means $|g(z)-c|<r$ for all $z\in\D$. By definition of $\phi(z)$, this is equivalent to $|\phi(z)| = \frac{|g(z)-c|}{r} < 1$. So the boundedness condition is always satisfied for any probability measure $\mu$.

\textit{(Boundary values):} The crucial condition is $|\phi(e^{it})|=1$ a.e. This is equivalent to $|g(e^{it})-c|=r$ a.e., which means the boundary values of $g(z)$ must lie on the boundary of the disk $D_c$. As shown in the proof of the main theorem, this happens if and only if the boundary values of $F(z)$ lie on the line $\Re w = 1/2$.
And as established by the lemma we proved previously, the condition $\Re F(e^{it})=1/2$ a.e. holds if and only if the measure $\mu$ is purely singular.

Therefore, $\phi(z)$ satisfies all three conditions for being an inner function if and only if $\mu$ is a singular measure. This completes the proof.
\end{proof}

\section{Expansion based on the generalized Kaczmarz algorithm and \(\lambda\)-dependent general form of the normalized Cauchy transform}

\subsection{Series expansion based on the generalized Kaczmarz algorithm}
The following result provides a series expansion in terms of the Fourier exponential functions $\{e_n\}_{n\ge 0}$ in $L^2(\mu)$ based on the generalized Kaczmarz algorithm, which also ties to the first half of this paper about its regret bounds. To compare our result with the previous literature, the sequence $\{e_n\}$ starts from $n=0$, not $n=1$.

\begin{theorem}[Generalized Kaczmarz Fourier series representation with step size $\lambda$]
Let $\mu$ be a singular Borel probability measure on $[0,1)$ and let $e_n(x) := \{e^{2\pi i n x}\}_{n\ge 0}$ be the Fourier exponential functions in $L^2(\mu)$. Let $P_n = e_n e_n^*$ be the rank-one projection onto the span of $e_n$ for $n \ge 0$. Fix a relaxation parameter $\lambda\in (0,2)$. Define a sequence $\{h_n^{(\lambda)}\}_{n\ge 0}$ recursively by
\begin{equation} \label{eq:h_recursion_0based}
h_0^{(\lambda)} = \lambda\,e_0,\qquad
h_n^{(\lambda)} = \lambda\Bigl(e_n-\sum_{k=0}^{n-1}\overline{\langle e_n,e_k\rangle_{L^2(\mu)}}\,h_k^{(\lambda)}\Bigr),\quad n\ge 1.
\end{equation}
Let $\widetilde{Q}_n = \lambda P_n \widetilde{T}_{n-1}$ for $n \ge 0$. Then,
\begin{enumerate}

    \item Every $f\in L^2(\mu)$ has a convergent series expansion
    \[
    f = \sum_{n=0}^\infty \widetilde{Q}_n f = \sum_{n=0}^\infty e_n \langle h_n^{(\lambda)}, f \rangle_{L^2(\mu)}.
    \]
    
    \item A rescaled Parseval-type identity holds:
    \[
    \sum_{n=0}^\infty \bigl\| \widetilde{Q}_n f \bigr\|_{L^2(\mu)}^2 = \sum_{n=0}^\infty \bigl|\langle h_n^{(\lambda)}, f \rangle_{L^2(\mu)}\bigr|^2 = \frac{\lambda}{2-\lambda} \|f\|^2_{L^2(\mu)}.
    \]
\end{enumerate}
\end{theorem}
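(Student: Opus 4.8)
The plan is to derive this theorem by assembling the operator identities of Section~4 with the effectiveness criterion for exponential systems, so that both parts follow essentially by citation once two routine hypotheses are checked. The point is that all the analytic substance has already been absorbed into Theorem~\ref{thm:lambda_effectiveness_exponential_functions} (via the inner-function/Cauchy-transform computation) and into Lemma~\ref{lem:Generalized_Kaczmarz_Scaled_Parseval} together with its corollaries; what remains is to verify that these apply verbatim in the present $L^2(\mu)$ setting.

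First I would establish $\lambda$-effectiveness: since $\mu$ is a singular Borel probability measure, Theorem~\ref{thm:lambda_effectiveness_exponential_functions} gives that $\{e_n\}_{n\ge 0}$ is $\lambda$-effective for every $\lambda\in(0,2)$, i.e.\ $\widetilde{T}_n \xrightarrow{\mathrm{SOT}} 0$. This is exactly the standing hypothesis under which Corollary~\ref{cor:rank1_1based} and Corollary~\ref{cor:scalar_parseval_identity} hold. I would also record that the projections $P_n = e_n e_n^*$ are rank-one with unit range vectors, since $\norm{e_n}_{L^2(\mu)}^2 = \int_0^1 |e^{2\pi i n x}|^2\,d\mu = \mu([0,1)) = 1$; thus we are precisely in the rank-one regime, where the operators satisfy $\widetilde{Q}_n f = e_n\inner{h_n^{(\lambda)}, f}_{L^2(\mu)}$ with $h_n^{(\lambda)}$ obeying the recursion~\eqref{eq:h_recursion_0based}.

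For Part~(1), Corollary~\ref{cor:rank1_1based} then yields the strong convergence $f = \sum_n \widetilde{Q}_n f = \sum_n e_n\inner{h_n^{(\lambda)}, f}_{L^2(\mu)}$ for every $f\in L^2(\mu)$. For Part~(2), Corollary~\ref{cor:scalar_parseval_identity} gives $\sum_n \norm{\widetilde{Q}_n f}^2 = \tfrac{\lambda}{2-\lambda}\norm{f}^2$, and the identification with $\sum_n |\inner{h_n^{(\lambda)}, f}_{L^2(\mu)}|^2$ is immediate from the rank-one structure: because $\widetilde{Q}_n f = e_n\inner{h_n^{(\lambda)}, f}_{L^2(\mu)}$ and $\norm{e_n}_{L^2(\mu)} = 1$, each summand satisfies $\norm{\widetilde{Q}_n f}^2 = |\inner{h_n^{(\lambda)}, f}_{L^2(\mu)}|^2$.

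The one piece of bookkeeping to handle carefully is the index convention: Lemma~\ref{lem:Generalized_Kaczmarz_Scaled_Parseval} and Corollary~\ref{cor:rank1_1based} are stated with indices starting at $1$, whereas the exponential system is naturally indexed from $0$ (so that $\widetilde{T}_{-1}=I$ and $\widetilde{Q}_0 = \lambda P_0$, consistent with $h_0^{(\lambda)}=\lambda e_0$). Since none of those proofs uses the starting value of the index — the telescoping identity $I - \widetilde{T}_N = \sum_j \widetilde{Q}_j$ and the scaled Parseval relation are invariant under the shift $n\mapsto n+1$ — this relabeling is harmless. I therefore expect no genuine obstacle: the difficulty has been front-loaded into the singular-measure effectiveness theorem and the $\lambda$-dependent Parseval identity, leaving only the unit-norm computation and the index shift to verify before both parts follow by direct citation.
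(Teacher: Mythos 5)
Your proposal is correct and follows essentially the same route as the paper: the paper's proof likewise consists of citing Theorem~\ref{thm:lambda_effectiveness_exponential_functions} for $\lambda$-effectiveness and then applying Corollary~\ref{cor:rank1_1based} for Part~(1) and Corollary~\ref{cor:scalar_parseval_identity} for Part~(2). Your additional checks --- that $\|e_n\|_{L^2(\mu)}=1$ (so $\|\widetilde{Q}_n f\|^2=|\langle h_n^{(\lambda)},f\rangle_{L^2(\mu)}|^2$) and that the $0$-based reindexing is harmless --- are sound and, if anything, make the citation-based argument more careful than the paper's own.
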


\begin{proof}[Proof]
By Theorem~\ref{thm:lambda_effectiveness_exponential_functions}, we know that the Fourier exponential functions $e_n(x) := \{e^{2\pi i n x}\}_{n\ge 0}$ are $\lambda$-effective for any Borel singular measure.
Thus, Corollary~\ref{cor:rank1_1based} implies (i) and Corollary~\ref{cor:scalar_parseval_identity} implies (ii). 
\end{proof}

\subsection{Implications for Fourier coefficients and series}

Recall that in the $\lambda=1$ case (as in \cite{herr2017fourier}), the Fourier series could be expressed using coefficients defined via the combinatorial coefficients $\alpha_n^{(1)}$. We extend this to the relaxed setting.

In the relaxed setting with $\lambda\in (0,2)$, we have the sequence $\{h_n^{(\lambda)}\}_{n\ge 0}$ related to the original basis $\{e_n\}_{n\ge 0}$ via the coefficients $\alpha_n^{(\lambda)}$ (defined combinatorially for $n \ge 0$) by Lemma \ref{lem:h_alpha_relation_0based}:
\[
h_n^{(\lambda)} = \sum_{j=0}^{n}\overline{\alpha_{n-j}^{(\lambda)}}\,e_j, \quad \text{for } n \ge 0.
\]
The generalized Kaczmarz framework provides the series representation for $f \in L^2(\mu)$ as
\[
f = \sum_{n=0}^\infty e_n \langle h_n^{(\lambda)}, f \rangle_{L^2(\mu)}.
\]
Let $d_n^{(\lambda)} = \langle h_n^{(\lambda)}, f \rangle_{L^2(\mu)}$ be the coefficient multiplying $e_n$ for $n \ge 0$. Using Lemma~\ref{lem:h_alpha_relation_0based}, we can express $d_n^{(\lambda)}$ in terms of the standard Fourier coefficients $\widehat{f}(j) = \langle e_j, f\rangle_{L^2(\mu)}$ for $j \ge 0$ as
\begin{align*}
d_n^{(\lambda)} &= \langle h_n^{(\lambda)}, f \rangle_{L^2(\mu)} \quad (\text{for } n \ge 0) \\
&= \left\langle \sum_{j=0}^{n}\overline{\alpha_{n-j}^{(\lambda)}}\,e_j , f \right\rangle_{L^2(\mu)} \\
&= \sum_{j=0}^{n} \overline{\overline{\alpha_{n-j}^{(\lambda)}}} \langle e_j, f \rangle_{L^2(\mu)} \\
&= \sum_{j=0}^{n} \alpha_{n-j}^{(\lambda)} \, \widehat{f}(j).
\end{align*}
Note that since our inner product $\inner{\cdot, \cdot}$ is linear in the second argument and conjugate linear in the first argument, $ \langle e_j, f \rangle_{L^2(\mu)} = \widehat{f}(j)$. 
Thus, the Fourier series expansion can be written explicitly in terms of the standard coefficients $\widehat{f}(j)$ and the combinatorial coefficients $\alpha_k^{(\lambda)}$ as:
\[
f(x) = \sum_{n=0}^\infty d_n^{(\lambda)} e_n(x) = \sum_{n=0}^\infty \left( \sum_{j=0}^{n} \alpha_{n-j}^{(\lambda)} \, \widehat{f}(j) \right) e_n(x),
\]
where $e_n(x) = e^{2\pi i n x}$. This shows how the generalized Kaczmarz procedure implicitly re-weights and combines the standard Fourier coefficients (conjugated) to form the coefficients for the reconstruction using the original basis $\{e_n\}_{n\ge 0}$.

\subsection{\texorpdfstring{$\lambda$}{lambda}-Dependent general form of the normalized Cauchy transform}

Let \(N(z) = \sum_{n=0}^\infty \widehat{f}(n) z^n\) and \(F(z) = \sum_{n=0}^\infty \widehat{\mu}(n) z^n\) (where $\widehat{f}(n) = \langle e_n,f \rangle$, $e_n(x)=e^{2\pi i n x}$). Recall that \(A^{(\lambda)}(z) = \sum_{n=0}^\infty \alpha_n^{(\lambda)} z^n = \frac{\lambda}{1 - \lambda + \lambda F(z)}\) as defined in Section~\ref{sec:The Generating Function}. 

The standard normalized Cauchy transform $V_\mu f(z) = N(z)/F(z)$, where $N(z) = \sum_{n=0}^\infty \widehat{f}(n) z^n$ and $F(z) = \sum_{n=0}^\infty \widehat{\mu}(n) z^n$, acts as a generating function for the regular Kaczmarz (\(\lambda=1\)) coefficients: $V_\mu f(z) = \sum_{n=0}^\infty \langle h_n^{(1)},f \rangle z^n$. This relies on $A^{(1)}(z) = \sum_{n=0}^\infty \alpha_n^{(1)} z^n = 1/F(z)$ and the relations $h_n^{(1)} = \sum_{j=0}^n \overline{\alpha_{n-j}^{(1)}} e_j$ and $K_n^{(1)} = [z^n]V_\mu f = \sum_{j=0}^n \alpha_{n-j}^{(1)} \widehat{f}(j) = \langle h_n^{(1)}, f \rangle$.

For the generalized algorithm with $\lambda \in (0, 2)$, we use the sequence $\{h_n^{(\lambda)}\}_{n\ge 0}$ defined by  \eqref{eq:h_recursion_0based}. We also have the combinatorial coefficients $\{\alpha_n^{(\lambda)}\}_{n\ge 0}$ defined without conjugates. We established the relationship $h_n^{(\lambda)} = \sum_{j=0}^{n}\overline{\alpha_{n-j}^{(\lambda)}}\,e_j$ for $n \ge 0$ (Lemma \ref{lem:h_alpha_relation_0based}). Since $A^{(\lambda)}(z) \neq 1/F(z)$ for $\lambda \neq 1$, the analytic representation connected to the Kaczmarz coefficients $\{h_n^{(\lambda)}\}_{n\ge 0}$ will differ from the standard normalized Cauchy transform $V_\mu f(z)$.

Let us define a transform using $A^{(\lambda)}(z)$ and relate its coefficients to the Kaczmarz coefficients $\langle h_n^{(\lambda)}, f \rangle$.

For \(\lambda \in (0, 2)\), \(f \in L^2(\mu)\), and \(z \in \mathbb{D}\), let \(N_f(z)\) denote the unnormalized Cauchy transform of a function \(f(x)\) with respect to \(\mu\):
\[
  N_f(z) := \int_0^1 \frac{f(x)}{1 - z\,e^{-2\pi i x}}\,d\mu(x).
\]
The \emph{normalized \(\lambda\)-transform} is defined as:
\[
  V_\mu^{(\lambda)}f(z) := N_f(z) \, A^{(\lambda)}(z) = \frac{\lambda\, N_f(z)}{1-\lambda+\lambda\,F(z)}.
\]
Or equivalently, we have 
\[
  V_\mu^{(\lambda)}f(z)
  = \frac{\lambda\,\displaystyle\int_0^1\frac{f(x)}{1 - z\,e^{-2\pi i x}}\,d\mu(x)}
         {\,1-\lambda+\lambda\displaystyle\int_0^1\frac{1}{1 - z\,e^{-2\pi i x}}\,d\mu(x)\,}.
\]
Note that when $\lambda = 1$ this expression reduces to the normalized  Cauchy transform \cite{jorgensen2020kaczmarz, herr2017fourier, cima2006cauchy}.

\begin{theorem}[Relation between \(\lambda\)-transform and Kaczmarz coefficients]
\label{thm:lambda_transform_kacz_coeffs}
Let \(\lambda \in (0, 2)\). Let $\{h_n^{(\lambda)}\}_{n\ge 0}$ be the sequence from the generalized Kaczmarz algorithm defined by the 0-based recursion \eqref{eq:h_recursion_0based}. The normalized \(\lambda\)-transform \(V_\mu^{(\lambda)} f(z)\) defined above admits the power series expansion
\[
V_\mu^{(\lambda)} f(z) = \sum_{n=0}^\infty K_n^{(\lambda)} z^n, \quad \text{for } z \in \mathbb{D},
\]
where the coefficients are given by 
\[
K_n^{(\lambda)} = \sum_{j=0}^{n}\alpha_{n-j}^{(\lambda)}\,\widehat{f}(j).
\]
These coefficients \(K_n^{(\lambda)}\) are related to the generalized Kaczmarz Fourier coefficients $d_n^{(\lambda)} = \langle h_n^{(\lambda)}, f \rangle$, $n \ge 0$ by
\[
d_n^{(\lambda)} = \sum_{j=0}^{n} \alpha_{n-j}^{(\lambda)} \, \widehat{f}(j).
\]
Therefore, $K_n^{(\lambda)}$ and $d_n^{(\lambda)}$ are the same. 

\end{theorem}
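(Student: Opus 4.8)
The plan is to read off the Taylor coefficients of $V_\mu^{(\lambda)}f(z)=N_f(z)\,A^{(\lambda)}(z)$ as a Cauchy product of two power series, and then match them against the expression for $d_n^{(\lambda)}$ already computed in the preceding subsection. The argument splits cleanly into expanding $N_f$, multiplying by $A^{(\lambda)}$, and identifying the result with $\langle h_n^{(\lambda)},f\rangle$.

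First I would expand $N_f(z)$. Writing the kernel as a geometric series, $\frac{1}{1-z\,e^{-2\pi i x}}=\sum_{n=0}^\infty z^n e^{-2\pi i n x}$ for $|z|<1$, and interchanging sum and integral (justified by the uniform bound $|z^n e^{-2\pi i n x}|\le |z|^n$ together with $\mu$ being a probability measure, via dominated convergence), I obtain
\[
N_f(z)=\sum_{n=0}^\infty\Bigl(\int_0^1 e^{-2\pi i n x}f(x)\,d\mu(x)\Bigr)z^n=\sum_{n=0}^\infty \widehat{f}(n)\,z^n,
\]
since $\widehat{f}(n)=\langle e_n,f\rangle_{L^2(\mu)}$. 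The coefficients are bounded, $|\widehat{f}(n)|\le\|f\|_{L^2(\mu)}$, so this series converges and is analytic on $\D$.

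Next I would form the product with $A^{(\lambda)}(z)=\sum_{m=0}^\infty\alpha_m^{(\lambda)} z^m$, which is analytic on $\D$ because $\Re F>1/2$ keeps the denominator $1-\lambda+\lambda F(z)$ bounded away from zero (as established in Section~7). Both factors then have absolutely convergent power series on $\D$, so the Cauchy product is legitimate and yields
\[
V_\mu^{(\lambda)}f(z)=\sum_{n=0}^\infty\Bigl(\sum_{j=0}^{n}\alpha_{n-j}^{(\lambda)}\,\widehat{f}(j)\Bigr)z^n=\sum_{n=0}^\infty K_n^{(\lambda)}\,z^n,
\]
which is the first claimed identity with $K_n^{(\lambda)}=\sum_{j=0}^{n}\alpha_{n-j}^{(\lambda)}\widehat{f}(j)$.

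Finally, to identify $K_n^{(\lambda)}$ with $d_n^{(\lambda)}=\langle h_n^{(\lambda)},f\rangle$, I would invoke Lemma~\ref{lem:h_alpha_relation_0based}, which gives $h_n^{(\lambda)}=\sum_{j=0}^{n}\overline{\alpha_{n-j}^{(\lambda)}}\,e_j$. Substituting this representation and using conjugate-linearity of $\langle\cdot,\cdot\rangle$ in its first slot (so the two conjugations on $\alpha$ cancel), together with $\langle e_j,f\rangle_{L^2(\mu)}=\widehat{f}(j)$, produces exactly $d_n^{(\lambda)}=\sum_{j=0}^{n}\alpha_{n-j}^{(\lambda)}\widehat{f}(j)=K_n^{(\lambda)}$; this is precisely the computation already recorded in the subsection on Fourier coefficients, so the two coefficient families agree term by term. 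There is no serious obstacle here: the only points requiring care are the interchange of summation and integration in the expansion of $N_f$ and the absolute convergence underpinning the Cauchy product, both of which follow immediately from $|z|<1$ and $\mu$ being a probability measure.
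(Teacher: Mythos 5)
Your proposal is correct and follows essentially the same route as the paper's proof: expand $V_\mu^{(\lambda)}f = N_f\,A^{(\lambda)}$ as a Cauchy product to read off $K_n^{(\lambda)}$, then identify $d_n^{(\lambda)}=\langle h_n^{(\lambda)},f\rangle$ via Lemma~\ref{lem:h_alpha_relation_0based} and conjugate-linearity, exactly as in the paper's preceding subsection. Your only additions are welcome bits of rigor the paper leaves implicit, namely the dominated-convergence justification that $N_f(z)=\sum_{n\ge0}\widehat{f}(n)z^n$ and the observation that $\Re F>1/2$ keeps the denominator $1-\lambda+\lambda F(z)$ away from zero so that $A^{(\lambda)}$ is analytic on $\D$.
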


\begin{proof}
Since \(V_\mu^{(\lambda)} f(z) = N(z) A^{(\lambda)}(z)\),
\[
V_\mu^{(\lambda)} f(z) = \left( \sum_{j=0}^\infty \widehat{f}(j) z^j \right) \left( \sum_{k=0}^\infty \alpha_k^{(\lambda)} z^k \right).
\]
The coefficient of \(z^n\) in the product is precisely
\[
K_n^{(\lambda)} = \sum_{j=0}^{n} \alpha_{n-j}^{(\lambda)} \widehat{f}(j).
\]
Therefore, \(V_\mu^{(\lambda)} f(z) = \sum_{n=0}^\infty K_n^{(\lambda)} z^n\).

From Theorem~\ref{thm:lambda_transform_kacz_coeffs}, we now that the actual Kaczmarz coefficients $d_n^{(\lambda)} = \langle h_n^{(\lambda)}, f \rangle$ for $n \ge 0$ is given as
\[
d_n^{(\lambda)} = \sum_{j=0}^{n} \alpha_{n-j}^{(\lambda)} \, \widehat{f}(j).
\]
\end{proof}

\section{Conclusion and Future Directions}

In this work, we have studied a comprehensive theoretical framework for analyzing the regret of block Kaczmarz algorithms in  infinite-dimensional Hilbert spaces. We derived sharp, dimension-free $O(1/k)$ average regret bounds for the generalized Kaczmarz algorithm with any relaxation parameter $\lambda \in (0,2)$. These bounds were shown to be robust, gracefully accommodating stochastic noise by separating the decaying learning term from an irreducible noise floor. The interpretation of the total cumulative regret as a necessary "cost of learning," which is fully expended by effective systems, provides a new perspective on the performance of these online algorithms.

Another main contribution of our work is the proof that the canonical system of Fourier exponential functions, $\{e^{2\pi i n x}\}_{n\ge0}$, is $\lambda$-effective for all $\lambda \in (0,2)$ if and only if the Borel probability measure is singular unless $\lambda = 1$. This result is crucial, as it validates our regret analysis for a broad and important class of non-orthogonal systems. By leveraging Hardy space theory and identifying a novel $\lambda$-dependent inner function, we generalized the existing effectiveness results for $\lambda=1$, confirming that the excellent convergence properties of Fourier-type Kaczmarz updates are preserved under both under- and over-relaxation.

\paragraph{Discussion and future work.}
Our analysis focused on bounds for the expected regret in the noisy setting. An important next step would be to derive high-probability or "tail" bounds, potentially using concentration inequalities or arguments from martingale theory, which would provide stronger guarantees on the algorithm's performance on any given run. Finally, extending this powerful combination of regret analysis and operator theory to other iterative methods and non-linear models remains a challenging and exciting direction for future investigation.

\begingroup
\fontsize{9}{11}\selectfont 
\bibliographystyle{unsrt} 
\nocite{*}
\bibliography{reference_regret_mod}
\endgroup

\appendix

\section{The Proof of Lemma~\ref{lem:equivalence_alpha}}
We define two sequences $\alpha_n^{(\lambda)}$ and $ \beta_n^{(\lambda)}$ as follows:

\begin{lemma}
\begin{itemize}
    \item \textbf{Combinatorial:}
    
    Let $P_n$ be the set of compositions of $n$, i.e., $P_n = \{(n_1, n_2, \dots, n_k) \;|\; k, n_i \in \mathbb{N}^+,  n_1 + n_2 + \dots + n_k = n\}$ and $l(p)$ be the length of the tuple $p \in P_n$. Let
    
    \begin{align*} \alpha_0^{(\lambda)} &= \lambda, \\ \alpha_n^{(\lambda)} &= \sum_{p\in P_n}(-1)^{\ell(p)}\,\lambda^{\ell(p)+1}\prod_{j=1}^{\ell(p)}\widehat{\mu}(p_j), \quad (n \ge 1). \end{align*}
    \item \textbf{Recursive:} Let
    \begin{align*} \beta_0^{(\lambda)} &= \lambda, \\ \beta_n^{(\lambda)} &= -\lambda \sum_{k=0}^{n-1} \widehat{\mu}(n-k) \beta_k^{(\lambda)}, \quad (n \ge 1). \end{align*}
\end{itemize}
\label{lem:equivalence_alpha_appendix}
Then, we have \(\alpha_n^{(\lambda)} = \beta_n^{(\lambda)}\) for all \(n \ge 0\).
\end{lemma}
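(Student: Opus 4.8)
The plan is to prove, by induction on $n$, that the combinatorial sequence $\alpha_n^{(\lambda)}$ obeys the very recursion that defines $\beta_n^{(\lambda)}$; since the two sequences share the initial value $\lambda$, equality for all $n$ then follows at once. The base case $n=0$ is immediate: both equal $\lambda$, and indeed the combinatorial formula evaluated on the single empty composition of $0$ (with $\ell(p)=0$ and empty product equal to $1$) also returns $(-1)^0\lambda^{1}=\lambda$, so the convention is consistent.

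For the inductive step I would establish, directly from the combinatorial definition, the identity
\[
\alpha_n^{(\lambda)} = -\lambda \sum_{k=0}^{n-1} \widehat{\mu}(n-k)\,\alpha_k^{(\lambda)},\qquad n\ge 1,
\]
which is exactly the recursion for $\beta_n^{(\lambda)}$. The engine is a decomposition of the set $P_n$ of compositions of $n$ according to the last part. Given a composition $p=(p_1,\dots,p_\ell)$ of $n$, peel off its last entry $m:=p_\ell\in\{1,\dots,n\}$; what remains is a composition $p'=(p_1,\dots,p_{\ell-1})$ of $n-m$ with $\ell(p')=\ell(p)-1$, and conversely every composition of $n$ arises uniquely in this way (with $p'$ the empty composition of $0$ when $m=n$). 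Under this bijection the summand factors cleanly as
\[
(-1)^{\ell(p)}\lambda^{\ell(p)+1}\prod_{j=1}^{\ell(p)}\widehat{\mu}(p_j)
= -\lambda\,\widehat{\mu}(m)\cdot(-1)^{\ell(p')}\lambda^{\ell(p')+1}\prod_{j=1}^{\ell(p')}\widehat{\mu}(p'_j).
\]
Summing over all compositions $p'$ of $n-m$ collapses the bracketed factor to $\alpha_{n-m}^{(\lambda)}$; grouping the compositions of $n$ by the value of $m$ and substituting $k=n-m$ then yields the displayed recursion, closing the induction.

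The one delicate point, and the step I expect to require the most care, is the boundary term $m=n$, where the "remaining" composition $p'$ is empty. One must confirm that the $\ell(p')=0$ and empty-product conventions make the factored summand equal $-\lambda\,\widehat{\mu}(n)\,\alpha_0^{(\lambda)}$, so that this term correctly supplies the $k=0$ contribution $-\lambda\,\widehat{\mu}(n)\,\alpha_0^{(\lambda)}$ of the recursion. Once this bookkeeping is pinned down, all other terms match term-by-term with no further effort.

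As a cross-check (and an alternative route, though I would keep the composition argument as the primary, self-contained proof), one may sum the combinatorial definition as a geometric series: with $G(z):=\sum_{m\ge 1}\widehat{\mu}(m)z^m=F(z)-1$ since $\widehat{\mu}(0)=1$, one gets $\sum_{n\ge 0}\alpha_n^{(\lambda)}z^n=\lambda\sum_{\ell\ge 0}(-\lambda)^\ell G(z)^\ell=\lambda/(1+\lambda G(z))=\lambda/(1-\lambda+\lambda F(z))$, which is precisely the generating function that the recursion for $\beta_n^{(\lambda)}$ produces; equating power-series coefficients recovers $\alpha_n^{(\lambda)}=\beta_n^{(\lambda)}$. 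I would note, however, that the combinatorial induction is preferable here because it is entirely formal and sidesteps any discussion of convergence of these series.
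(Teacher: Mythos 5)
Your proof is correct and follows essentially the same route as the paper's: both establish that the combinatorial sequence satisfies the defining recursion of $\beta_n^{(\lambda)}$ by decomposing each composition of $n$ through a distinguished part (you peel off the last entry, the paper the first — a trivial mirror of the same bijection), with the boundary term $m=n$ handled exactly as in the paper's $k=0$ term $-\lambda^{2}\widehat{\mu}(n+1)$. Your generating-function cross-check $\sum_{n\ge0}\alpha_n^{(\lambda)}z^n=\lambda/(1-\lambda+\lambda F(z))$ matches the paper's Lemma~\ref{lem:generating_function_A}, and you rightly keep the formal combinatorial argument primary to avoid convergence issues.
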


\begin{proof} [Proof of lemma]

We show that the combinatorial definition satisfies the recursion.
The base case \(\alpha_0^{(\lambda)} = \lambda\) matches \(\beta_0^{(\lambda)}\).
Assume the combinatorial formula holds for \(\alpha_k^{(\lambda)}\) for \(0 \le k \le n\). We want to show that \(-\lambda \sum_{k=0}^{n} \widehat{\mu}(n+1-k) \alpha_k^{(\lambda)}\) equals the combinatorial formula for \(\alpha_{n+1}^{(\lambda)}\).
\begin{align*} -\lambda \sum_{k=0}^{n} \widehat{\mu}(n+1-k) \alpha_k^{(\lambda)} &= -\lambda \widehat{\mu}(n+1) \alpha_0^{(\lambda)} - \lambda \sum_{k=1}^{n} \widehat{\mu}(n+1-k) \alpha_k^{(\lambda)} \\ &= -\lambda^2 \widehat{\mu}(n+1) - \lambda \sum_{k=1}^{n} \widehat{\mu}(n+1-k) \left( \sum_{p\in P_k}(-1)^{\ell(p)}\,\lambda^{\ell(p)+1}\prod_{w=1}^{\ell(p)}\widehat{\mu}(p_w) \right) \\ &= -\lambda^2 \widehat{\mu}(n+1) + \sum_{k=1}^{n} \widehat{\mu}(n+1-k) \left( \sum_{p\in P_k}(-1)^{\ell(p)+1}\,\lambda^{\ell(p)+2}\prod_{w=1}^{\ell(p)}\widehat{\mu}(p_w) \right). \end{align*}
Now, consider the combinatorial formula for \(\alpha_{n+1}^{(\lambda)}\). Split the sum based on the first part \( m = n+1-k\), where \(p'=(m, p)\) and \(p \in P_k\):
\begin{align*} \alpha_{n+1}^{(\lambda)} &= \sum_{p'\in P_{n+1}}(-1)^{\ell(p')}\,\lambda^{\ell(p')+1}\prod_{j=1}^{\ell(p')}\widehat{\mu}(p'_j) \\ &= \sum_{m=1}^{n+1} \sum_{p\in P_{n+1-m}} (-1)^{\ell(p)+1}\,\lambda^{(\ell(p)+1)+1} \widehat{\mu}(m) \prod_{w=1}^{\ell(p)}\widehat{\mu}(p_w) \\ &= \sum_{k=0}^{n} \sum_{p\in P_{k}} (-1)^{\ell(p)+1}\,\lambda^{\ell(p)+2} \widehat{\mu}(n+1-k) \prod_{w=1}^{\ell(p)}\widehat{\mu}(p_w) \quad (\text{let } k=n+1-m) \\ &= \underbrace{(-1)^{0+1}\,\lambda^{0+2} \widehat{\mu}(n+1) \prod \emptyset}_{k=0 \text{ term}} + \sum_{k=1}^{n} \widehat{\mu}(n+1-k) \left( \sum_{p\in P_{k}} (-1)^{\ell(p)+1}\,\lambda^{\ell(p)+2} \prod_{w=1}^{\ell(p)}\widehat{\mu}(p_w) \right) \\ &= -\lambda^2 \widehat{\mu}(n+1) + \sum_{k=1}^{n} \widehat{\mu}(n+1-k) \left( \sum_{p\in P_{k}} (-1)^{\ell(p)+1}\,\lambda^{\ell(p)+2} \prod_{w=1}^{\ell(p)}\widehat{\mu}(p_w) \right). \end{align*}
Comparing the expression derived from the recursion and the one derived from the combinatorial sum, we see they are identical. Thus, the combinatorial definition implies the recursion. Since they share the same base case \(\alpha_0^{(\lambda)} = \beta_0^{(\lambda)} = \lambda\), we conclude \(\alpha_n^{(\lambda)} = \beta_n^{(\lambda)}\) for all \(n \ge 0\).
This justifies using the recursive property within the inductive proof of the lemma relating \(h_n^{(\lambda)}\) and \(\alpha_n^{(\lambda)}\).
\end{proof}

\section{Derivation of \eqref{eq:shift-reduction-fixed} and \eqref{eq:i0-identity-fixed}.}

We start with summarizing the notations and some known facts below. 
The inner product is conjugate–linear in the first argument and linear in the second, and the system is stationary, so that 
\[
\langle e_s,e_i\rangle=\widehat\mu(s-i),\qquad
\widehat\mu(-t)=\overline{\widehat\mu(t)},\qquad \widehat\mu(0)=1.
\]
Fix \(\lambda\in(0,2)\) and define the coefficients $\alpha^{(\lambda)}_n$ by
\begin{align}
\label{R1}
\alpha^{(\lambda)}_0=\lambda,\qquad
\alpha^{(\lambda)}_n=-\lambda\sum_{t=1}^n\widehat\mu(t)\,\alpha^{(\lambda)}_{n-t}\quad(n\ge1).
\end{align}

Since \(\lambda\in\mathbb R\), taking conjugates gives
\begin{align}
\label{R2}
\overline{\alpha^{(\lambda)}_n}
=-\lambda\sum_{t=1}^n\widehat\mu(-t)\,\overline{\alpha^{(\lambda)}_{n-t}}
\quad(n\ge1).
\end{align}

Let
\[
h_k^{(\lambda)}:=\sum_{r=0}^k \overline{\alpha^{(\lambda)}_{k-r}}\,e_r,
\qquad
u_n:=\sum_{l=0}^n \alpha^{(\lambda)}_l\,e_l .
\]

By the conjugate linearity, for all \(k,i\ge0\),
\[
\langle h_k^{(\lambda)},e_i\rangle
=\sum_{s=0}^k \alpha^{(\lambda)}_{k-s}\,\widehat\mu(s-i),
\]
and hence, 
\begin{align}
\label{ME} 
K_n(e_i)=\sum_{k=0}^{n} e_k \sum_{s=0}^{k} \alpha^{(\lambda)}_{k-s}\,\widehat\mu(s-i).
\end{align}

\subsection*{1.\;The case \(i=0\)}
If we set \(i=0\) in \eqref{ME}, then 
\[
K_n(e_0)=\sum_{k=0}^{n} e_k\Bigl(\alpha^{(\lambda)}_{k}\widehat\mu(0)
+\sum_{s=1}^{k}\alpha^{(\lambda)}_{k-s}\widehat\mu(s)\Bigr)
=u_n+\sum_{k=1}^{n}e_k\sum_{s=1}^{k}\alpha^{(\lambda)}_{k-s}\widehat\mu(s).
\]
By \eqref{R1}, for \(k\ge1\),
\(\sum_{s=1}^{k}\widehat\mu(s)\,\alpha^{(\lambda)}_{k-s}
=-\lambda^{-1}\alpha^{(\lambda)}_{k}\).
Therefore,
\[
K_n(e_0)=u_n-\frac1\lambda\sum_{k=1}^{n}\alpha^{(\lambda)}_{k}\,e_k
=u_n-\frac1\lambda u_n+e_0,
\]
and hence,
\[
\;
K_n(e_0)-e_0=\frac{\lambda-1}{\lambda}\,u_n,\qquad n\ge0.
\;
\tag{I0}
\]

\subsection*{2.\;The case for \(i\ge1\)}
Assume \(n\ge i\) and split \(K_n(e_i)=a_n+b_n\) as
\[
a_n:=\sum_{k=0}^{n} e_k \sum_{s=0}^{\min\{k,i\}} \alpha^{(\lambda)}_{k-s}\,\widehat\mu(s-i),
\qquad
b_n:=\sum_{k=0}^{n} e_k \sum_{s=i+1}^{k} \alpha^{(\lambda)}_{k-s}\,\widehat\mu(s-i).
\]

\begin{itemize}
    \item Block \(a_n\) (\(s\le i\)):
Let \(j=i-s\in\{0,\dots,i\}\) and \(l=k-s\). Then,
\[
a_n=\sum_{j=0}^{i}\widehat\mu(-j)\sum_{l=0}^{\,n+j-i}\alpha^{(\lambda)}_{l}\,e_{\,l+i-j}.
\]
We define $v_{n,i}$ and rewrite $a_n$ as follows:
\[
v_{n,i}:=\sum_{l=0}^{\,n-i}\alpha^{(\lambda)}_{l}\,e_{\,l+i},
\qquad
a_n=v_{n,i}+\sum_{j=1}^{i}\widehat\mu(-j)\sum_{l=0}^{\,n+j-i}\alpha^{(\lambda)}_{l}\,e_{\,l+i-j}.
\]

    \item Block \(b_n\) (\(s>i\)):
Let \(t:=s-i\ge1\) and \(m:=k-s\ge0\). Then,
\[
b_n=\sum_{t=1}^{\infty}\widehat\mu(t)\sum_{m=0}^{\,n-i-t}\alpha^{(\lambda)}_{m}\,e_{\,m+i+t},
\]
with the inner sum \(=0\) if \(n-i-t<0\).
\end{itemize}

Now, set \(M:=n-i\). We claim

\begin{align}
\label{Tail}
\; e_i \;=\; b_n \;+\; \frac{1}{\lambda}\,v_{n,i}. \;
\end{align}

Indeed, for any \(r\in\mathbb Z\),
\begin{align*}
\langle b_n,e_{i+r}\rangle
&=\sum_{t=1}^{\infty}\overline{\widehat\mu(t)}\sum_{m=0}^{M-t}
\overline{\alpha^{(\lambda)}_{m}}\;\langle e_{m+i+t},e_{i+r}\rangle \\
&=\sum_{t=1}^{\infty}\widehat\mu(-t)\sum_{m=0}^{M-t}
\overline{\alpha^{(\lambda)}_{m}}\,\widehat\mu(m+t-r) \\
&=\sum_{l=1}^{M}\Bigl(\sum_{t=1}^{l}\widehat\mu(-t)\,\overline{\alpha^{(\lambda)}_{l-t}}\Bigr)\widehat\mu(l-r)
=-\frac{1}{\lambda}\sum_{l=1}^{M}\overline{\alpha^{(\lambda)}_{l}}\,\widehat\mu(l-r),
\end{align*}
where the last step uses \eqref{R2}. Also,
\[
\frac{1}{\lambda}\,\langle v_{n,i},e_{i+r}\rangle
=\frac{1}{\lambda}\sum_{l=0}^{M}\overline{\alpha^{(\lambda)}_{l}}\,\widehat\mu(l-r).
\]
Then,
\(\langle b_n+\tfrac{1}{\lambda}v_{n,i},e_{i+r}\rangle
=\tfrac{1}{\lambda}\overline{\alpha^{(\lambda)}_{0}}\widehat\mu(-r)
=\widehat\mu(-r)=\langle e_i,e_{i+r}\rangle\),
which proves \eqref{Tail}.

Finally, combining the results we have so far, we have
\[
K_n(e_i)-e_i=a_n+b_n-e_i
=\Bigl(v_{n,i}+\sum_{j=1}^{i}\widehat\mu(-j)\sum_{l=0}^{\,n+j-i}\alpha^{(\lambda)}_{l}\,e_{\,l+i-j}\Bigr)
-\frac{1}{\lambda}v_{n,i},
\]
hence
\begin{align}
\label{SR lambda}
\;
K_n(e_i)-e_i
=\frac{\lambda-1}{\lambda}\,v_{n,i}
+\sum_{j=1}^{i}\widehat\mu(-j)\sum_{l=0}^{\,n+j-i}\alpha^{(\lambda)}_{l}\,e_{\,l+i-j},
\qquad i\ge1,\; n\ge i.
\;
\end{align}

\end{document}